\newtheorem{theorem}{Theorem}
\newtheorem{remark}{Remark}
\begin{document}

\title{A Graph Transformer-Driven Approach for Network Robustness Learning}

\author{Yu Zhang, Jia Li, Jie Ding,~\IEEEmembership{Member,~IEEE,} Xiang Li,~\IEEEmembership{Senior Member,~IEEE}
\thanks{This work was supported by National Natural Science Foundation of China under Grants 62103107, National Key Research and Development
Program of China under Grant 2021YFE019330, and Shanghai Sailing Program under Grant 21YF1403000.}
\thanks{Yu Zhang, Jia Li, and Jie Ding are with the Adaptive Networks and Control Lab, Department of Electronic Engineering, School of Information Science and Engineering, Fudan University, Shanghai 200433, China (e-mail: dingjie@fudan.edu.cn).}
\thanks{X. Li is with the Institute of Complex Networks and Intelligent Systems, Shanghai Research
Institute for Intelligent Autonomous Systems, Tongji University, Shanghai 201210, China. (e-mail: lix2021@tongji.edu.cn.)}
}

\markboth{}
{Shell \MakeLowercase{\textit{et al.}}: A Sample Article Using IEEEtran.cls for IEEE Journals}

\maketitle

\begin{abstract}
Learning and analysis of network robustness, including controllability robustness and connectivity robustness, is critical for various networked systems against attacks. Traditionally, network robustness is determined by attack simulations, which is very time-consuming and even incapable for large-scale networks. Network Robustness Learning, which is dedicated to learning network robustness with high precision and high speed, provides a powerful tool to analyze network robustness by replacing simulations. In this paper, a novel versatile and unified robustness learning approach based on the customized graph transformer (NRL-GT) is proposed,  which consists of a particularly designed backbone and three branches to accomplish the tasks of robustness curve learning, overall robustness learning and synthetic network classification simultaneously. Numerous experiments show that: 
1) NRL-GT is a unified learning framework for controllability robustness and connectivity robustness, demonstrating a strong generalization ability to ensure high precision when training and test sets are distributed differently;
2) It is theoretically and experimentally demonstrated the proposed graph transformer layer is capable of outperforming classical graph neural networks. Compared to the cutting-edge methods in network robustness learning, NRL-GT can simultaneously perform network robustness learning from multiple aspects and obtains superior results in less time on both synthetic and real-world networks, especially circuit networks and power networks; 
3) It is worth mentioning that with the transferability of the proposed backbone, NRL-GT is able to deal with complex networks of different sizes for different tasks efficiently and effectively.
\end{abstract}

\begin{IEEEkeywords}
Complex networks,  controllability, connectivity, robustness, graph transformer, robustness learning.
\end{IEEEkeywords}

\section{Introduction}
\IEEEPARstart{R}EASEARCH related to controllability and connectivity as well as the robustness of complex networks has been widely applied in natural and engineering systems\cite{1,2}. Specifically, network controllability\cite{8,9,11,12,59,95}, which refers to the ability of a network to be driven from any initial state to any target state with permissible control inputs in a finite amount of time\cite{67}, is receiving great attention over the past decade\cite{89,90,91,92}. Connectivity, as one of the most important metrics to measure network performance, has a fundamental impact on network controllability\cite{67}. Recent studies also show that degree distribution has a significant impact on both network controllability and network connectivity. 
In \cite{8}, the driver nodes are found not to prefer high-degree nodes for either synthetic networks or real systems. In \cite{22}, it is observed that if the minimum in-degree and out-degree of a random network are both greater than two, then it can be controlled by an extremely small number of driving nodes. In \cite{23}, it shows that degree correlation instead of clustering and modularity has a significant effect on the controllability of a network. For complex networks with a power-law degree distribution, hubs play a critical role in maintaining the connectivity of networks. Removing several hubs can break a network into small disconnected parts\cite{69}.

The robustness of complex networks refers to the ability of complex networks to maintain controllability and connectivity in response to attacks, including network controllability robustness\cite{29,24,70} and connectivity robustness\cite{71,72}. With the growing ubiquity of sophisticated attacks on circuit networks\cite{99}, power networks\cite{100}, internet\cite{3}, and so on, robustness analysis is becoming increasingly important in real applications. For a specific example, it helps to improve the power grid robustness against cascading failures\cite{97,98}. The main strategies to attack a network include malicious attacks and random attacks\cite{25,26}. Whether malicious or random attacks, they occur in the form of node- or edge-removals, causing significant damage to the networks\cite{105}. Under continuous node- or edge-removals, controllability robustness, and connectivity robustness can be measured by the density variation of driver nodes\cite{24}, and the variation of the proportion of nodes in the largest connected component (LCC)\cite{73}, respectively. Typical malicious attacks are targeted, namely purposefully attacking nodes or edges with high degrees or high betweenness\cite{105}. In \cite{30}, it is revealed that a network shows the best robustness to random attacks when its in-degree and out-degree tend to be homogeneous.

Attack simulation is a general method for determining network robustness. However, it has the disadvantage of being severely time-consuming. Therefore, the research of \emph{Network Robustness Learning} including \emph{Network Controllability Robustness Learning} and \emph{Network Connectivity Robustness Learning} receives increasing attention, aiming to design a general learning framework for realizing high-speed and high-precision learning of network controllability robustness and connectivity robustness. Our 
task has three particular challenges compared to traditional prediction tasks 
using deep learning methods, such as time series 
forecasting. The first challenge is how to generate 
robustness-related dense feature matrices from the provided topology of complex 
networks. Secondly, due to the wide range of topological variations in complex 
networks, our prediction requires high precision and high generalization  
simultaneously. Thirdly, we need to design a unified framework to accomplish 
network robustness learning from multiple perspectives. In \cite{31,32}, a single CNN-based learning model (PCR) and a composite model integrating multiple CNN modules (iPCR) are proposed, which achieve high-precision controllability robustness learning when the training and test sets maintain the same distribution. In iPCR, prior knowledge of the topology type of the network is used to improve the learning precision, while this improvement relies on a complex integrated model, which is more difficult to train and more time-consuming compared to PCR. There are two analogous approaches in the field of connectivity robustness learning: CNN-RP~\cite{68} and mCNN-RP~\cite{77}, which demonstrate the effectiveness of CNN-based models for learning connectivity robustness. In \cite{33}, LFR-CNN is proposed for general robustness learning, which incorporates CNNs and PATCHY-SAN~\cite{50}. Due to the embedding mechanism of PATCHY-SAN, LFR-CNN can handle cases when the network size slightly varies. Although LFR-CNN has superior performance in networks with random size and random average degree, it costs 20 times more time than PCR for the same input. Moreover, the generalization ability is not reflected in LFR-CNN when the training and test sets are inconsistent, which is a common shortcoming of CNN-based models~\cite{33}. Therefore, a unified framework for high-speed and high-generalization robustness learning requires further investigation.

Complex networks are graph-structured data, which cannot be directly 
manipulated by traditional deep neural models. For previous CNN-based models, the adjacency matrix of a network is treated as an image, which restricts the 
ability of obtaining the neighborhood information for each node during the
convolution operation. In recent years, there has been a great interest in graph neural networks (GNNs) which are shown to be efficient and effective on graph datasets and are widely applied in various fields~\cite{107,108,109,110}, such as natural language processing (NLP)~\cite{78}, recommendation systems~\cite{79}, computer vision~\cite{43}, and so on. Different from CNN-based methods, GNNs accurately obtain the neighborhood information of each node in the process of computing. The birth of the Transformer~\cite{80} has made a great contribution to the development of NLP, which is currently still cutting-edge architecture for handling long-term sequential data. Since the embedding generation mechanism of the transformer is considered equivalent to that of GNN applied to fully connected graphs in terms of message passing mechanism~\cite{81}, more and more research tends to combine the transformer with GNN to build a more powerful embedding generation algorithm, which is referred to as Graph Transformer~\cite{101,102,103}.

In this work, we propose an improved version of Graph Transformer by incorporating the properties of network robustness and introduce an efficient and effective scheme for \emph{Network Robustness Learning} via the proposed Graph Transformer (NRL-GT).  As a member of GNNs, the proposed Graph Transformer of NRL-GT is capable of 
accurately capturing the information about neighboring nodes compared to 
CNN-based models, thus better mining the topology information of complex 
networks. More critically, given the  powerful inductive learning 
capability~\cite{45} on complex networks of GNNs that CNNs do not have, NRL-GT can 
generate high-quality 
complex network embeddings even for inconsistent distributions between the 
training and test sets, resulting in strong generalization performance. Compared to the classic transformer~\cite{80} and Graph Transformer~\cite{101,102,103}, our customized Graph Transformer fuses two types of multi-head attention mechanisms: inner head and outer head, implying its stronger feature mining capabilities. Meanwhile, it encodes the edge features and the dependencies between nodes and connecting edges in the attention mechanism, which motivates the design of inner head to outperform the learning ability of classical GNNs~\cite{45,46,47}. Besides, compared to classical Transformer-based models, our proposed graph transformer restricts nodes to attend only to their local neighbors in the graph, which adds a good exploitation of the sparse 
message passing process to help inductive learning on graphs~\cite{103}. 
Moreover, NRL-GT is the first framework that accomplishes the task of robustness learning in three aspects, which can simultaneously conduct robustness curve learning, overall robustness learning, and synthetic network classification. Specifically, there is a backbone and three downstream modules in NRL-GT. The backbone is responsible for extracting features related to the robustness of complex networks of any size, while the three downstream modules are respectively responsible for the aforementioned tasks. The main contributions of this work are summarized as follows.
\vspace{-0.4cm}
\begin{itemize}
\item
We propose an improved version of Graph Transformer by introducing a multi-head attention mechanism with a combination of inner and outer heads. It has been experimentally demonstrated to obtain higher-quality graph representations than CNNs, classical GNNs, and classical transformer-base methods. Based on this, a novel unified learning framework for network controllability robustness and connectivity robustness is proposed named NRL-GT. With a carefully designed backbone and three branches, NRL-GT is capable of accomplishing the tasks of robustness curve learning, overall robustness learning, and synthetic network classification simultaneously. 

\item  

Having been experimentally verified, NRL-GT overcomes the issue that the training and test sets are with different distributions, such as unweighted networks for training and weighted networks for testing as well as directed networks for training and undirected networks for testing, demonstrating the strong generalization ability of NRL-GT.
\item
It is worth mentioning that NRL-GT has significant advantages in precision and speed compared with leading-edge methods in \cite{31,32,68,77}, both in synthetic and real-world networks, especially circuit networks and power networks, and provides a better indicator for connectivity robustness compared to spectral measures.

\item
The backbone of NRL-GT has strong transferability. To be specific, the backbone trained on the task of robustness curve learning can be directly transferred to the task of overall robustness learning and synthetic network classification without retraining and the backbone pre-trained on networks with nodes $N = 1000$
can directly extract the features of complex networks of other sizes without retraining.
\end{itemize}


The rest of this paper is as follows. Section II introduces the preliminary knowledge related to network robustness and network robustness learning. The specific structure of NRL-GT is described in Section III. In Section IV, various experiments have been conducted to evaluate the different functions of NRL-GT. Section V summarizes this work.
\section{Robustness analysis of complex networks}
Basic concepts and measures of controllability robustness and connectivity robustness of complex networks are first introduced in this section. Then, the overall network robustness is presented as an extension of the robustness measure. Finally,  error measures and evaluation metrics for network robustness learning are demonstrated in the last part.
\vspace{-0.4cm}
\subsection{Controllability Robustness}
In terms of a linear time-invariant networked system  with $N$ nodes ${\dot x_t} = A{x_t} + B{u_t}$ , where $A$ and $B$ are the transposed adjacency matrix and input matrix, and ${{x_t}}$ and ${{u_t}}$ are the state vector and input vector. The full rank of the controllability matrix $[B, AB,{A^2}B,...,{A^{N - 1}}B]$ is a necessary and sufficient condition for state controllability, where $N$ represents the dimension of $A$.  Based on the framework of exact controllabilty~\cite{9}, the minimum number of driver nodes $N_D$ can be determined by ${N_D} = \max \{ N - rank(A),1\} $. For  structural controllability where  $A$ and $B$ are treated as free parameters,   ${N_D} = \max \{ N - \left| {{M^*}} \right|,1\} $, where $\left| {{M^*}} \right|$  is the number of elements in the  maximum matching ${{M^*}}$ according to minimum input theorem~\cite{8}.   

The controllability robustness can be measured by the controllability curve. \emph{Network Controllability Robustness Learning} is to predict such a curve through the designed model. For node attacks considered in this work, a sequence of values on the curve is defined as follows:\[\tag{1}{n_D}(i) = \frac{{{N_D}(i)}}{{N - i}}, i = 1,2,..., N - 1,\]
where ${{N_D}(i)}$ is the number of driver nodes needed to retain the network controllability after removing $i$ nodes, and $N-i$ is the number of remaining nodes after the $i$-$th$ attack.
\vspace{-0.4cm}
\subsection{Connectivity Robustness}
For an undirected network, network connectivity is measured by connectedness. If an undirected network is connected, there is a path between any two nodes. While for a directed network, network connectivity is measured by weak connectedness. A directed network is weakly connected if it remains to be connected by ignoring the edge directions.

The connectivity robustness can be measured by the connectivity curve. This curve records the proportion of nodes in the LCC under continuous attack, which is mathematically given by:
\[\tag{2}s(i) = \frac{{{N_{LCC}}(i)}}{{N - i}},i = 1,2,...,N - 1,\]
where $N_{LCC}$ is the number of nodes in LCC after removing $i$ nodes. \emph{Network Connectivity Robustness Learning} is to predict such a curve through the designed model.
\vspace{-0.4cm}
\subsection{Overall Robustness}
The overall measure of the network controllability robustness and connectivity robustness ${R_c}$ can be calculated by:

\[\tag{3}{R_c} = \left\{ \begin{array}{l}
{\rm{   }}\frac{1}{{N - 1}}\sum\limits_{i = 1}^{N - 1} {{n_D}(i)} {\rm{     }}\quad Controllability\quad Robustness\\
{\rm{   }}\frac{1}{{N - 1}}\sum\limits_{i = 1}^{N - 1} {s(i)} {\rm{        }} \quad Connectivity\quad Robustness
\end{array} \right.\]

Given two networks under the same sequential attack, for connectivity robustness, the one with the higher value of $R_c$ has better connectivity robustness, while for controllability robustness, the one with the smaller value of $R_c$ performs better.

\begin{remark}
Note that the robustness curve (including controllability curve and connectivity curve)   and overall robustness $R_c$ defined in (3) indicate the robustness of a network from different perspectives.  Specifically, robustness curve reflects the variation of network robustness during the attack process while $R_c$ measures the robustness of a network as a whole. 
\end{remark}
\vspace{-0.4cm}
\subsection{Error Measures}
The error in the network robustness learning process can be visualized by $Er$ curve. This curve represents the deviation of the true and predicted values of the controllability curve or connectivity curve. Each value on this curve is calculated by:
\[\tag{4}Er(i) = abs(pv(i) - tv(i)),i = 1,2,...,N - 1\]
where $pv(i)$ and $tv(i)$ represents the predicted value and true value of $n_D(i)$ or $s(i)$. The average value of the error curve $\overline {Er}$ is calculated by\[\tag{5}\overline {Er} = \frac{1}{{N - 1}}\sum\limits_{i = 1}^{N - 1} {Er(i)} \]
which implies the precision of robustness learning. When comparing the performance of different algorithms in \emph{Network Robustness Learning}, the highest precision corresponds to the lowest $\overline {Er}$.

\section{The framework of NRL-GT}
In this section, we will introduce detailed information about NRL-GT. As shown in Fig. 1, in NRL-GT, the degree distribution is encoded by a degree centrality encoder into initial node features. A backbone consisting of multiple graph transformer layers generates more informative node representations with initial node features and network structure information and sends them to the three downstream modules to accomplish different tasks. First, we describe the degree centrality encoder of NRL-GT. We then describe the graph transformer layer (GT layer), which is the core of the backbone in NRL-GT. Next, we provide different downstream modules of NRL-GT. Finally, we present the training strategies and loss functions of this work.
\begin{figure*}
\centering
\includegraphics[width=6in]{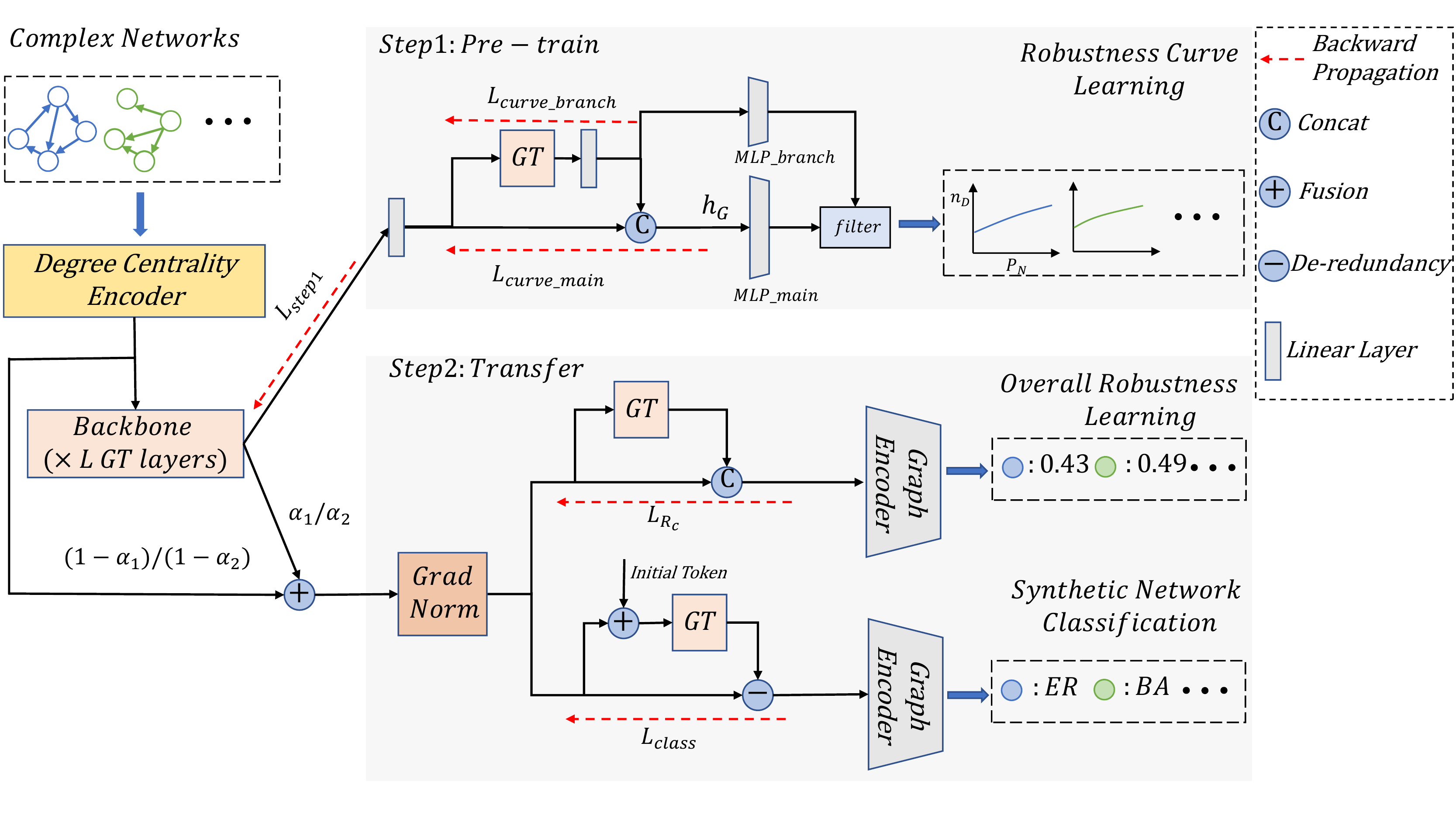}
\begin{center}
\caption{The overall architecture of NRL-GT. There is a backbone and three downstream modules in NRL-GT including robustness curve learning, overall robustness learning, and synthetic network classification.}
\label{Fig. 1}
\end{center}
\vspace{-0.6cm}
\end{figure*}

\vspace{-0.3cm}
\subsection{Degree Centrality Encoding}
As discussed in the Introduction, degree-related node attributes play an important role in measuring network robustness. Based on this, in NRL-GT, we specially design a degree centrality encoder to encode the degree distribution as the initial input of GT layers, as illustrated in Fig. 1. Specifically, for each node $i$, the degree centrality ($D{C_i}$) of the GT layer is defined as
\[\tag{6}D{C_i} = x_{{{\deg }^ - }({v_i})}^ - ||x_{{{\deg }^ + }({v_i})}^ + \]
where ${x^ - }$ and ${x^ + }$ are learnable embedding vectors assigned by in-degree ${\deg ^ - }({v_i})$ and out-degree ${\deg ^ + }({v_i})$, respectively. $||$ means
concatenation operation. For directed networks, two different centrality encoders are employed for in- and out-degree. And for undirected networks, only one common centrality encoder is shared.
\vspace{-0.5cm}
\subsection{Graph Transformer Layer}\label{Graph Transformer Layer}
Fig. 2 shows the overall architecture of the proposed Graph Transformer Layer, which is built on the framework of the classical transformer. The goal of the GT layer is to capture topological structure information of complex networks and generate robustness-related node representations. For node $i$, assuming that the representation of layer $l$ is $h_i^l \in {R^d}$ and the representation of corresponding neighbor nodes $\{ j|j \in N(i)\} $ is $\{ h_j^l\}$, the updating process to obtain the representation of layer $l+1$ ($h_i^{l + 1}$) can be described as:
\[\tag{7}h_i^{l + 1} = \sum\limits_{m = 1}^M {(head_{outer}^{m,l}(h_i^l,\{ h_j^l\} ))} \]
where
\[\tag{8}\begin{array}{l}
head_{outer}^{m,l}(h_i^l,\{ h_j^l\} ) = h_i^l + {W^{m,l}}(\sigma (\mathop {||}\limits_{s} (head_{inner}^{^{s,m,l}}(h_i^l,\{ h_j^l\} )))\\
head_{inner}^{^{s,m,l}}(h_i^l,\{ h_j^l\} )  = \!\! \!\!\sum\limits_{j \in {N(i)}} \!\! \!\! {f(\!\frac{{{Q_i^{s,m,l}} \cdot W_e^{s,m,l}{K_j^{s,m,l}}}}{{\sqrt {\left\lfloor {d/S} \right\rfloor} }})} (W_v^{s,m,l}{V_j^{s,m,l}})\
\end{array}\]
$\left\lfloor \right\rfloor $ denotes the floor function. ${{Q_i^{s,m,l}}}$, ${{K_j^{s,m,l}}}$, ${V_j^{s,m,l}} \in {R^{{\left\lfloor {d/S} \right\rfloor}}}$ corresponds to query, key, value in general transformer, which is calculated by:
\[\tag{9}\left\{ \begin{array}{l}
Q_i^{s,m,l} = linea{r_Q^{s,m,l}}(h_i^l)\\
K_j^{s,m,l} = linea{r_K^{s,m,l}}(h_j^l)\\
V_j^{s,m,l} = linea{r_V^{s,m,l}}(h_j^l)
\end{array} \right.\]
Message passing between central and neighboring nodes for both the classical Transformer~\cite{80} and classical Graph Transformer~\cite{101,102,103} can be defined as 
\[\tag{10}h_i^{l + 1} = h_i^l + {W^{l}}\mathop {||}\limits_s (\sum\limits_{j \in N(i)}  f(\frac{{Q_i^{s,l} \cdot K_j^{s,l}}}{{\sqrt {\left\lfloor {d/S} \right\rfloor } }})(V_j^{s,l}))\]
The comparison reveals that, unlike the classical Transformer-based methods that contain only one type of multi-heads, we design a multi-head attention mechanism that combines inner and outer heads to mine richer graph representations, and the final output is a summation of the information from $M$ outer heads. We apply a linear projection to update the concatenation of the output of $S$ inner heads, followed by residual connection to characterize an outer head operation. Both $M$ and $S$ are adjustable hyperparameters. Instead of directly multiplying the query vector and key vector to measure the similarity, we add a trainable weight $W_e^{s,m,l}$ that characterizes the edge information. Another trainable weight $W_v^{s,m,l}$ is used to incorporate the dependency of nodes and connected edges in the message passing process.

\begin{theorem}
The proposed graph transformer layer is able to outperform the classical GNN models: GCN~\cite{46}, GAT~\cite{47}, and GraphSAGE~\cite{45} in network robustness learning.
\end{theorem}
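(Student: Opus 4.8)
The plan is to read the theorem as a statement about representational capacity and to prove it by a containment-of-hypothesis-classes argument: I would show that the family of node-update maps realizable by a single GT layer (Eqs.~(7)--(9)) contains, as special cases, the families realizable by a GCN, a GAT, and a GraphSAGE layer. Once such a containment is established, the best achievable robustness-learning error $\overline{Er}$ (Eq.~(5)) of the GT layer over its parameters is no larger than that of any of the three baselines, which is precisely the sense in which the GT layer ``is able to outperform'' them.

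First I would write down the canonical single-layer updates of the three baselines: the GCN update $h_i^{l+1}=\sigma(\sum_{j\in N(i)\cup\{i\}} (d_i d_j)^{-1/2} W h_j^l)$, the GAT update $h_i^{l+1}=\sigma(\sum_{j\in N(i)}\alpha_{ij} W h_j^l)$ with softmax attention coefficients $\alpha_{ij}$, and the GraphSAGE mean-aggregator update $h_i^{l+1}=\sigma(W\cdot\mathrm{CONCAT}(h_i^l,\ \mathrm{mean}_{j\in N(i)} h_j^l))$. The task then reduces to three reductions, each obtained by fixing the hyperparameters $M,S$ and specializing the learnable objects $W^{m,l}$, $W_e^{s,m,l}$, $W_v^{s,m,l}$, the query/key/value projections of Eq.~(9), and the normalizing map $f$ in Eq.~(8).

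Then I would carry out the three reductions. For GAT, take a single inner head ($S=1$), set $W_e^{s,m,l}=I$, and let $f$ be the softmax so that the bilinear score $Q_i^{s,m,l}\cdot K_j^{s,m,l}$ plays the role of GAT's attention logits; choosing $W_v$ and $W^{m,l}$ appropriately reproduces $\sigma(\sum_j\alpha_{ij}Wh_j^l)$, with the residual term $h_i^l$ absorbed into the standard GAT self-loop and the sum over the $M$ outer heads in Eq.~(7) realizing GAT's multi-head combination. For GraphSAGE, I would instead make $f$ a constant map so that the inner head in Eq.~(8) averages the transformed neighbor values $W_v V_j$, while the residual $h_i^l$ supplies the self-representation and $W^{m,l}$ plays the role of GraphSAGE's projection of the concatenation of self and neighbor aggregate. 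For GCN, the only genuinely delicate point is reproducing the fixed symmetric normalization $(d_i d_j)^{-1/2}$, which couples the degrees of \emph{both} endpoints; here I would exploit the degree-centrality encoding of Eq.~(6), which already injects $\deg^-$ and $\deg^+$ into the node features, so that the query and key projections can expose these degrees to $f$ and let $f$ emit the correct degree-normalized coefficients.

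The main obstacle is twofold. The combinatorial obstacle lies in the GCN reduction: the GT coefficient on edge $(i,j)$ is a per-edge score passed through $f$, whereas GCN's coefficient is a fixed function of both endpoint degrees, so the argument must show that $f$ composed with the degree-aware projections reproduces this joint normalization exactly (or to within arbitrary accuracy). The conceptual obstacle is bridging ``contains as a special case'' with the word ``outperform'': a containment only yields an inequality between the \emph{optimal} attainable errors and says nothing about optimization dynamics or generalization. I would therefore state the guarantee as a best-case approximation inequality, $\min_\theta \overline{Er}_{\mathrm{GT}}\le \min_\theta \overline{Er}_{\mathrm{baseline}}$, and defer the empirical separation --- that the extra inner/outer-head capacity and edge encoding $W_e^{s,m,l}$ are actually exploited in practice --- to the experiments of Section~IV.
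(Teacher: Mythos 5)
Your proposal is correct and follows essentially the same route as the paper: the authors also argue by containment, treating GAT as a special case of the (inner-head) self-attention mechanism and showing that the degree-dependent aggregation weights $1/\sqrt{d_i d_j}$ of GCN and $1/d_i$ of GraphSAGE can be approximated by the attention score $f(Q_i\cdot W_e K_j/\sqrt{\lfloor d/S\rfloor})$ because the degree-centrality input exposes $d_i,d_j$ to the query/key projections and the universal approximation theorem supplies the rest. Your explicit acknowledgment that capacity containment only yields a best-case error inequality, with the empirical separation deferred to experiments, mirrors exactly how the paper backs the theorem with the ablation study in Section IV-G.
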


\begin{proof} We divide the proof into two parts.
	
 i) For GAT, since graph attention can be considered as a special case of self-attention~\cite{85}, the single multi-inner-head mechanism is comparable to the operations of the general graph attention network (GAT). Considering that the proposed GT incorporates both the multi-inner-head and multi-outer-head mechanisms, GT can capture richer features compared with GAT.
 
ii) Note that the aggregation process of both GCN and GraphSAGE can be treated as summing neighboring nodes with certain weights. For GCN and GraphSAGE, the information passed from source node $j$ to target node $i$ during the message propagation can be expressed as $\frac{1}{{\sqrt {{d_i}{d_j}} }}{h_j}$ and $\frac{1}{{{d_i}}}{h_j}$, respectively, where ${{d_i}}$ and ${{d_j}}$ represent the in-degree of the central and neighbor nodes. For GT layer, it can be denoted as $f(\frac{{{Q_i} \cdot W_e{K_j}}}{{\sqrt {\left\lfloor {d/S} \right\rfloor} }})(W_v{V_j})$ if only one attention inner head is applied. Actually, ${{Q_i}}$, ${{K_j}}$ contain the linear transformations of ${{d_i}}$ and ${{d_j}}$ since the input is characterized by degree centrality and $W_v{V_j}$ is $W_v(linea{r_V}({h_j}))$. Relying on the degree characteristics and the universal approximation theorem of the MLP~\cite{93}, The aggregation weights of both GCN and GraphSAGE can be approximated by GT, which shows the aggregation of GCN and GraphSAGE are special forms of the outer head of GT layer. From this perspective, the structural feature extraction ability of GT layer is also superior to that of GCN and GraphSAGE.	
\end{proof}
\vspace{-0.2cm}
Apart from above theoretical analysis, we also provide specialized ablation experiments in Section IV-G to demonstrate the significant advantages of the proposed GT layer over GCN, GAT, and GraphSAGE in network robustness learning.

\begin{figure}[h]
\centering
\includegraphics[width=2.2in]{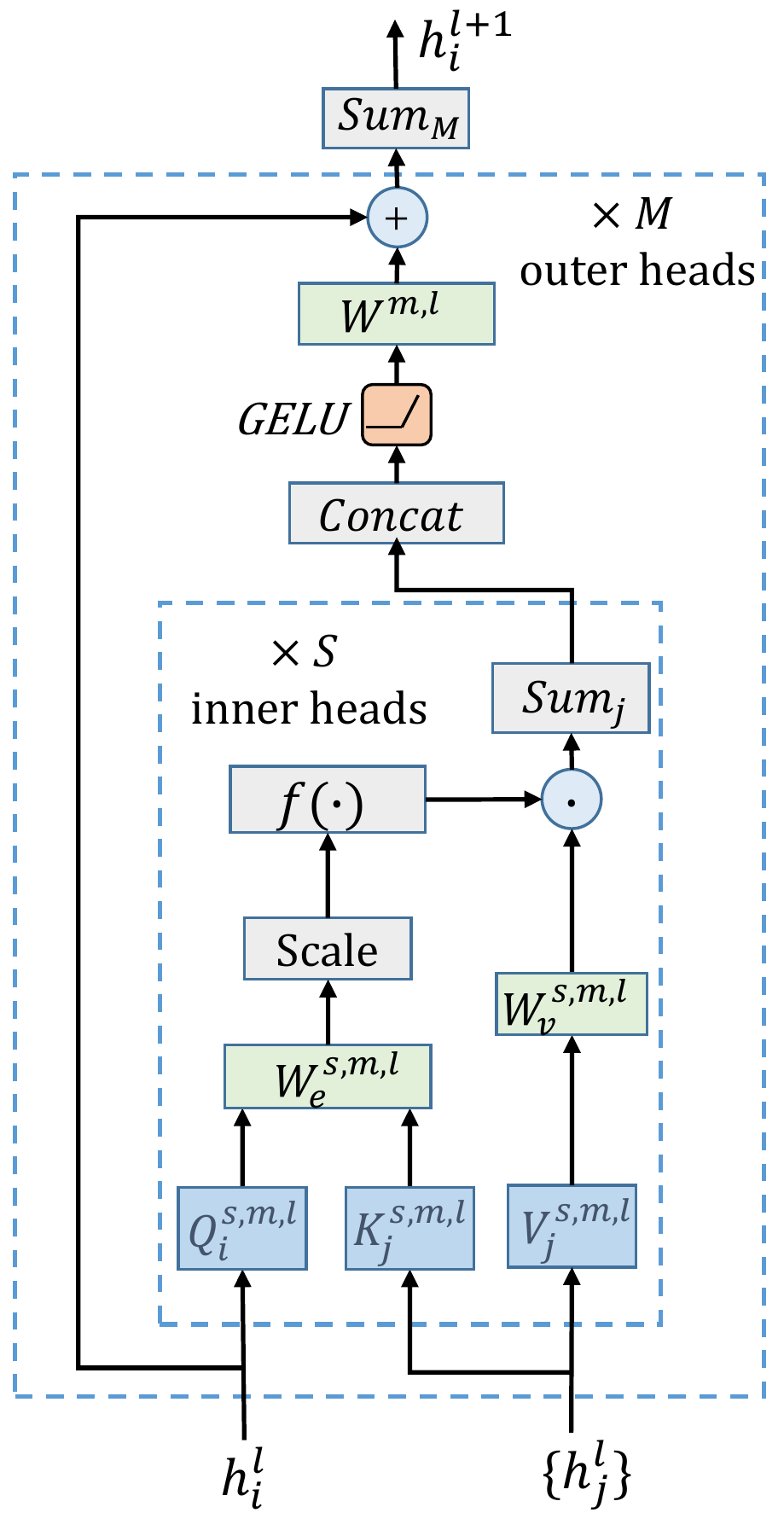}%
\begin{center}
\caption{The architecture of the proposed Graph Transformer Layer, which generates robustness-related node representations.}
\label{Fig. 2}
\end{center}
\vspace{-0.8cm}
\end{figure}

\vspace{-0.4cm}
\subsection{Downstream Modules of NRL-GT}
NRL-GT is a general framework for network robustness learning that not only enables the learning of network robustness curves and overall measures of robustness but also the classification of synthetic networks. In the backbone, the initial degree distribution is encoded as degree centrality and then subjected to $L$ layers of GT operations to generate robustness-related node representations for the following downstream modules: robustness curve learning module, overall robustness learning module, and synthetic network classification module.

{\bf{Module \#1: Robustness Curve Learning}}

The module for robustness curve learning accomplishes the prediction of the network controllability curve and connectivity curve which contains two parts: the main stem and the branch, providing different depths of structural information. The whole network features ${h_G}$ are concatenated by the features of each node in complex network $G$ before $MLP_{main}$ operation in the network. The MLP and filter are employed to transform ${h_G}$ into the final learning results, denoted as:
\[\tag{11}Robustness\_curve = filter({MLP_{main}}({h_G}))\]
For controllability robustness learning, the essence of this filter is a trainable physical bias, which facilitates easy model training and avoids meaningless zero values in the learning results. While for the connectivity robustness, we replace the physical bias with a filter presented in \cite{68} to better fit the properties of the connectivity curve.

{\bf{Module \#2: Overall Robustness Learning}}

As introduced in Section II, the overall robustness, which is another indicator for network robustness, can be measured by a single value ${R_c}$. Generally, the predicted value of overall robustness $R_c$ is usually obtained by averaging the learning results of the robustness curve. However, for networks with different sizes, due to the inconsistent size of the robustness curve, it is necessary to regenerate a large amount of data to train models to obtain the overall network robustness based on the learned robustness curve, which causes a high training cost. Motivated by this, we design an overall robustness learning module in NRL-GT dedicated to predicting the overall robustness of complex networks of different sizes without redundant training. The input node feature $\hat h^0$ to the ${R_c}$ learning module is fused from two components:
\[\tag{12}\hat h^0 = {\alpha _1} \cdot h^L + (1 - {\alpha _1}) \cdot h^0\]
where $h^0$ and $h^L$ represent the characteristics before and after the backbone operation, respectively, and ${\alpha _1}$ is a trainable weighting coefficient. 

There is also an additional GT layer in this module which makes $\hat h^0$ become $\hat h^1$. The computation of ${R_c}$ can be generalized as:
\[\tag{13}{R_c} = Graph\_Encoder([{\hat h^0}||{\hat h^1}])\]
Graph Encoder is an encoder consisting of the graph average pooling layer and the linear layer, which transforms the received node features into graph embedding to complete the prediction of $R_c$.

{\bf{Module \#3: Synthetic Network Classification}}

A unique advantage of NRL-GT is the existence of a network classification module, making it applicable to network classification of any size. The input components of the classification module is similar to that of ${R_c}$ learning module, with the summed weights being ${\alpha _2}$. For classification module, the input $\tilde h^0$ is calculated by:
\[\tag{14}\tilde h^0 = {\alpha _2} \cdot h^L + (1 - {\alpha _2}) \cdot h^0\]
Inspired by~\cite{94}, we introduce a special initial token ${x_{It}}$ and an additional GT to deeply capture graph topology features as well as perform simple feature de-duplication operations on top of that. Through this path, $\tilde h^0$ becomes $\tilde h^1$. The classification can be denoted as:
\[\tag{15}{P_G} = Graph\_Encoder(\tilde h^1)\]
We also apply a Graph Encoder for category prediction and ${P_G}$ represents the probability of each category.
\vspace{-0.4cm}
\subsection{Training Strategy and Loss Function}\label{Training Strategy and Loss Function}
As illustrated in Fig. 1, the trainable modules in NRL-GT consist of a backbone network and three downstream branches, with corresponding loss function available for each branch. And the complete training process for NRL-GT involves two steps: pre-training and transfer learning, as presented below.

{\bf{Step 1}} Firstly, the centrality encoder, backbone, and robustness curve learning module are trained as a whole to generate a pre-trained backbone. Meanwhile, the task of learning robustness curve is solved in this step. We design reweighted mean-squared error loss (RMSE) to ensure a low relative error even when the prediction value is small, which is given by:
\[\tag{16}RMSE(pv,tv) = \frac{1}{{N - 1}}\sum\limits_{i = 1}^{N - 1} {{W_{RMSE}}(i){{(pv(i) - tv(i))}^2}} \]
where $W_{RMSE}$ is a hyperparameter. The final loss of this step is given by:
\[\tag{17}L_{step1} = {L_{curve\_main}} + \rho  \times {L_{curve\_branch}}\]
where ${L_{curve\_main}}$ and ${L_{curve\_branch}}$ indicate the RMSE loss of the main stem and branch shown in Fig. 1 and $\rho$ is an adjustable hyperparameter. Branch loss promotes intermediate features more applicable to our task.

{\bf{Step 2}} After updating the parameters in Step 1, we obtain a pre-trained backbone and the centrality encoder that can be transferred directly to other tasks. The network classification module and the $R_c$ learning module are trained simultaneously in Step 2. To balance the gradient norms in multi-task learning on graphs, we introduce Grad-Norm~\cite{86} to tune the weights of network classification loss and $R_c$ learning loss. Let $L_{class}$ denote the network classification loss, which is determined by:
\[\tag{18}L_{class} = crossentropy({P_G},{Y_G}) =  - \sum\limits_{i = 1}^C {Y_G^i\log (P_G^i)} \]
crossentropy loss is employed for classification, where $C$ represents the total number of categories, ${P_G}$ represents the predicted probability, and ${Y_G}$ represents the true label. The loss of the $R_c$ learning is specified as:
\[\tag{19}L_{R_c} = {({{\hat R}_c}(G) - {R_c}(G))^2}\]
where ${{\hat R}_c}(G)$ indicates the predicted value while ${R_c}(G)$ denotes the true value of overall robustness for network G.

As illustrated in Fig. 1, the total loss in this step is the weighted combination of $L_{class}$ and $L_{R_c}$ where the weights $w_c$ and $w_R$ are tuned to deal with the imbalanced gradient norms based on Grad-Norm, that is,
\[\tag{20}L_{step2} = {w_c} \cdot L_{class} + {w_R} \cdot L_{R_c}\]

\section{Experiments}
In this section, we experimentally evaluate the performance of NRL-GT. We first present the details of the datasets and default model parameter settings in this work. Then a series of experiments are conducted to evaluate the ability of the NRL-GT in terms of network robustness performance learning on various types of networks. After that, we carry out transfer learning experiments to verify the transferability of the backbone in NRL-GT and to evaluate the capabilities of NRL-GT for overall network robustness learning and synthetic network classification. And the inference time of different methods is compared. Finally, the performance comparison with the classical GNN and transformer-based models is carried out in ablation studies to further validate the effectiveness of the proposed GT.
\vspace{-0.4cm}
\subsection{Preparation}
{{\bf Datasets}} We evaluate the performance of NRL-GT on five synthetic network models, including the  Erdös-Rényi (ER) \cite{54}, generic scale-free (SF)~\cite{55}, q-snapback(QSN)~\cite{29}, Newman–Watts small-world (NW)~\cite{56}, Barab\'asi–Albert (BA) scale-free networks~\cite{87}. The difference between BA and SF in this work is that for BA, $\gamma = 3 $ with regard to the power law distribution $k^{-\gamma}$, while for SF, $\gamma = 2.001$.

Various types of node-removal attacks are considered in the experiment, including random attacks,  targeted degree-based attacks (TDA), and targeted betweenness-based attacks (TBA). 

All experiments are carried out on a computer with a 64-bit operating system, installed with Intel i7-11700K (3.6GHz) and NVIDIA Geforce RTX 3060.

{{\bf Default parameter settings}} The results reported in this paper are based on the following parameter settings of NRL-GT: \emph{dimension of degree centrality}: $10$; \emph{GT layer number in backbone}: $L=2$; \emph{inner head number in GT layer}: $S=2$; \emph{outer head number in GT layer}: $M=3$; $\rho$ of $loss_{step1}$: 0.5; \emph{Optimizer}: Adam; \emph{learning rate}: 1e-4; \emph{weight decay}: 5e-5; \emph{the weight vector ${{W_{RMSE}}}$ of RMSE} is initialized as:\[\tag{21}{W_{RMSE}}(i) = \left\{ \begin{array}{l}
{\rm{2}}\quad i \le \left\lfloor {\frac{N}{2}} \right\rfloor\\
{\rm{1}}\quad \left\lfloor {\frac{N}{2}} \right\rfloor < i \le N - 1
\end{array} \right.\]
 For experiments on networks with 1000 nodes, the detailed configuration of NRL-GT is shown in Table I.
\begin{table}[!h]
\centering
\caption{Detailed network architecture\label{tab:table1}}
\begin{tabular}{|c|c|c|c|}
\hline
Groups                                   & Layers              & \begin{tabular}[c]{@{}c@{}}Layer \\ size\end{tabular} & \begin{tabular}[c]{@{}c@{}}Output \\ Feature size\end{tabular} \\ \hline
\multirow{2}{*}{Backbone}                & GT layer            & 10x10      & (1000,10)           \\ \cline{2-4} 
                                         & GT layer            & 10x10      & (1000,10)           \\ \hline
\multirow{5}{*}{\begin{tabular}[c]{@{}c@{}}Robustness Curve \\Learning Module\end{tabular}} & Linear & 10x2       & (1000,2)            \\ \cline{2-4} 
                                         & GT layer            & 2x2        & (1000,2)            \\ \cline{2-4} 
                                         & Linear  & 2x1        & (1000,1)            \\ \cline{2-4} 
                                         & MLP\_branch         & 1000x999   & (999,)              \\ \cline{2-4} 
                                         & MLP\_main           & 3000x999   & (999,)              \\ \hline
\multirow{2}{*}{\begin{tabular}[c]{@{}c@{}}Overall Robustness \\Learning  Module\end{tabular}}    & GT layer            & 10x10      & (1000,10)           \\ \cline{2-4} 
                                         & Graph Encoder              & 20x1       & (1,)                \\ \hline
\multirow{2}{*}{\begin{tabular}[c]{@{}c@{}}Synthetic Network \\Classification Module\end{tabular}}  & GT layer            & 10x10      & (1000,10)           \\ \cline{2-4} 
                                         & Graph Encoder              & 10x5       & (5,)            \\ \hline
\end{tabular}
\vspace{-0.4cm}
\end{table}
\vspace{-0.6cm}
\subsection{Network Controllability Robustness Learning}
The precision and generalization of different models in network controllability robustness learning under RA and TDA will be demonstrated in this subsection. Two state-of-the-art models in the field of network controllability robustness learning named PCR~\cite{31} and iPCR~\cite{32}, are involved in the comparison with NRL-GT. All the models are trained on directed unweighted networks with size $N=1000$. For each synthetic topology, 2000 instances are randomly generated for training. Thus, there are 2000 × 5 = 10000 training samples in total. The range of average degree $\left\langle k \right\rangle $ of each synthetic topology is set as [1,10] (overall average degree is 5.5). The test set covers all types of synthetic complex networks, including directed and undirected, unweighted and weighted networks, with 200 samples of each type with an average degree belonging to [1,10] (overall average degree is 5.5).

Fig. 3 shows the test results of RA. The first row shows the test precision of different models when the training and test sets are consistent. As can be seen from Figs. 3(a), 3(b), and 3(e), every method maintains comparable results on BA, ER, and SF when the training and test sets are kept consistent. In NW and QSN, the error curve of NRL-GT is significantly lower than that of PCR and iPCR, demonstrating higher precision in these two topologies (see Figs. 3(c) and 3(d)). The second to fourth rows show the generalization ability of different methods when the distributions of training and test sets are inconsistent. For NRL-GT, the model trained on unweighted networks can achieve high precision learning directly on weighted networks, which is not reflected in PCR and iPCR (see Figs. 3(f)-3(j) and Figs. 3(p)-3(t)). Moreover, NRL-GT trained on directed networks can be directly applied to undirected networks (see Figs. 3(k)- 3(t)). However, this ability is limited for PCR and iPCR. The generalization ability of NRL-GT is attributed to the powerful inductive learning ability of the proposed Graph Transformer and the fact that it can capture the topological information that determines the controllability robustness of the complex network with no edge weight influence. As shown in  Figs. 3(k), 3(l), and 3(o), PCR and iPCR can get comparable generalization performance with NRL-GT on undirected unweighted BA, ER, and SF networks,  but on undirected NW and QSN networks, PCR and iPCR almost completely lose their generalization ability (see Figs. 3(m) and 3(n)). The average learning errors of different methods for all types of networks are shown in Table II. In most cases, NRL-GT obtains the smallest mean error, demonstrating the higher precision and superior generalization ability of NRL-GT compared to the other two cutting-edge methods in network controllability robustness learning. Fig. S1 and Table S1 describe the learning results of controllability robustness of NRL-GT and PCR under TDA. It can be found that under TDA, NRL-GT trained by unweighted networks can still be generalized to weighted networks for high-precision learning while PCR lacks this capability.
\begin{figure*}[h]
\centering
\subfloat[]{\includegraphics[width=0.2\textwidth]{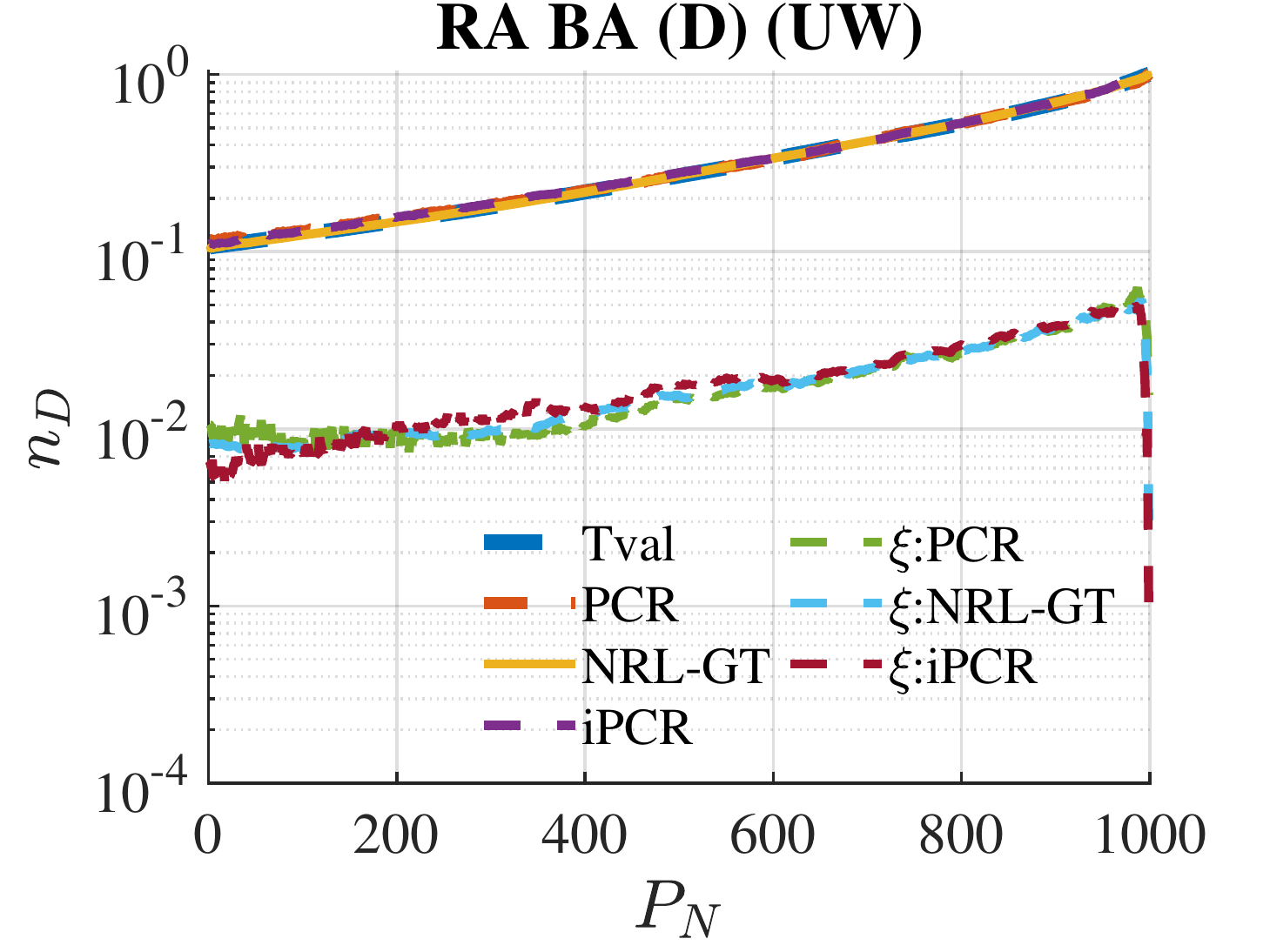}%
\label{a}}
\subfloat[]{\includegraphics[width=0.2\textwidth]{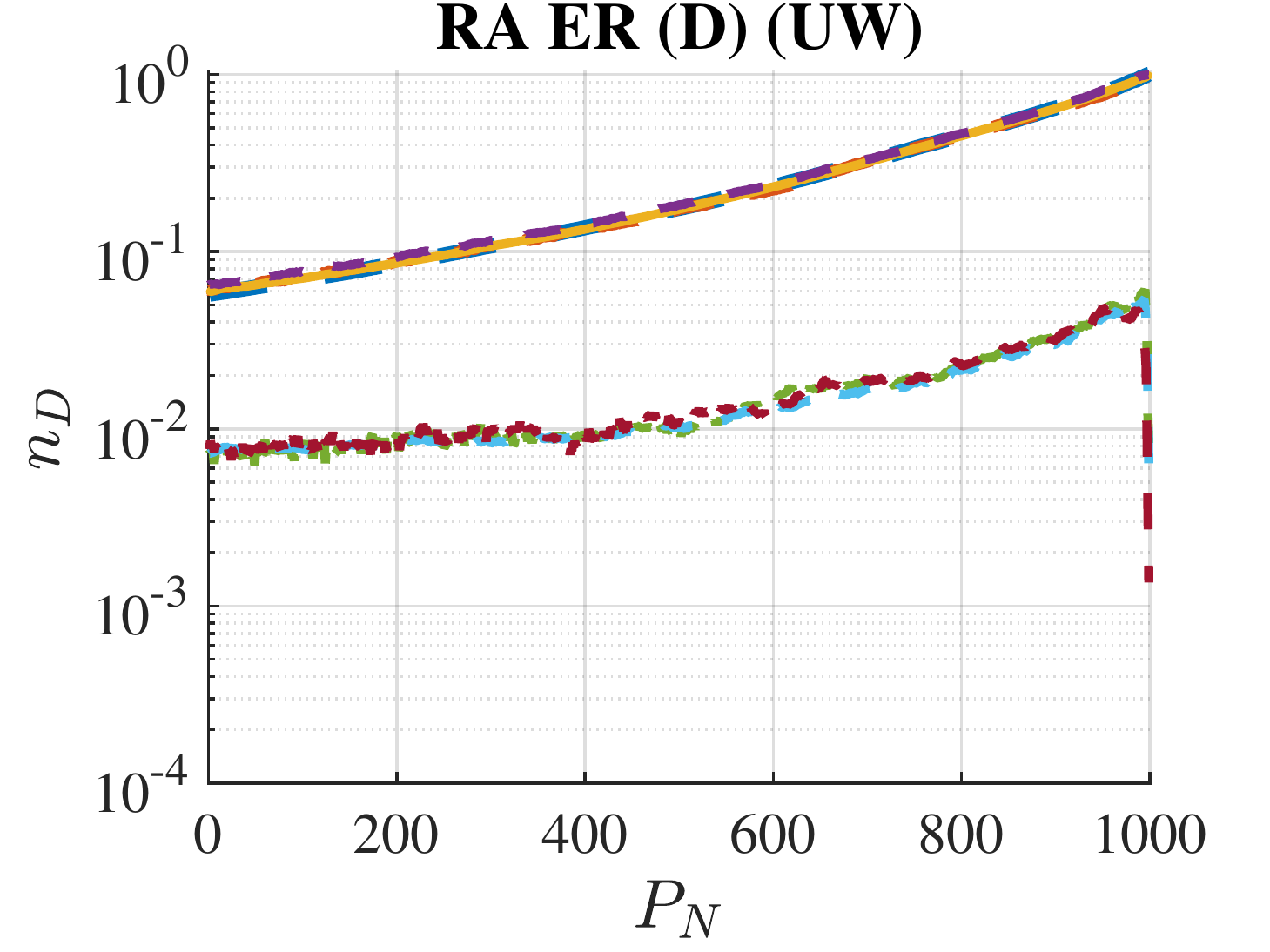}%
\label{b}}
\subfloat[]{\includegraphics[width=0.2\textwidth]{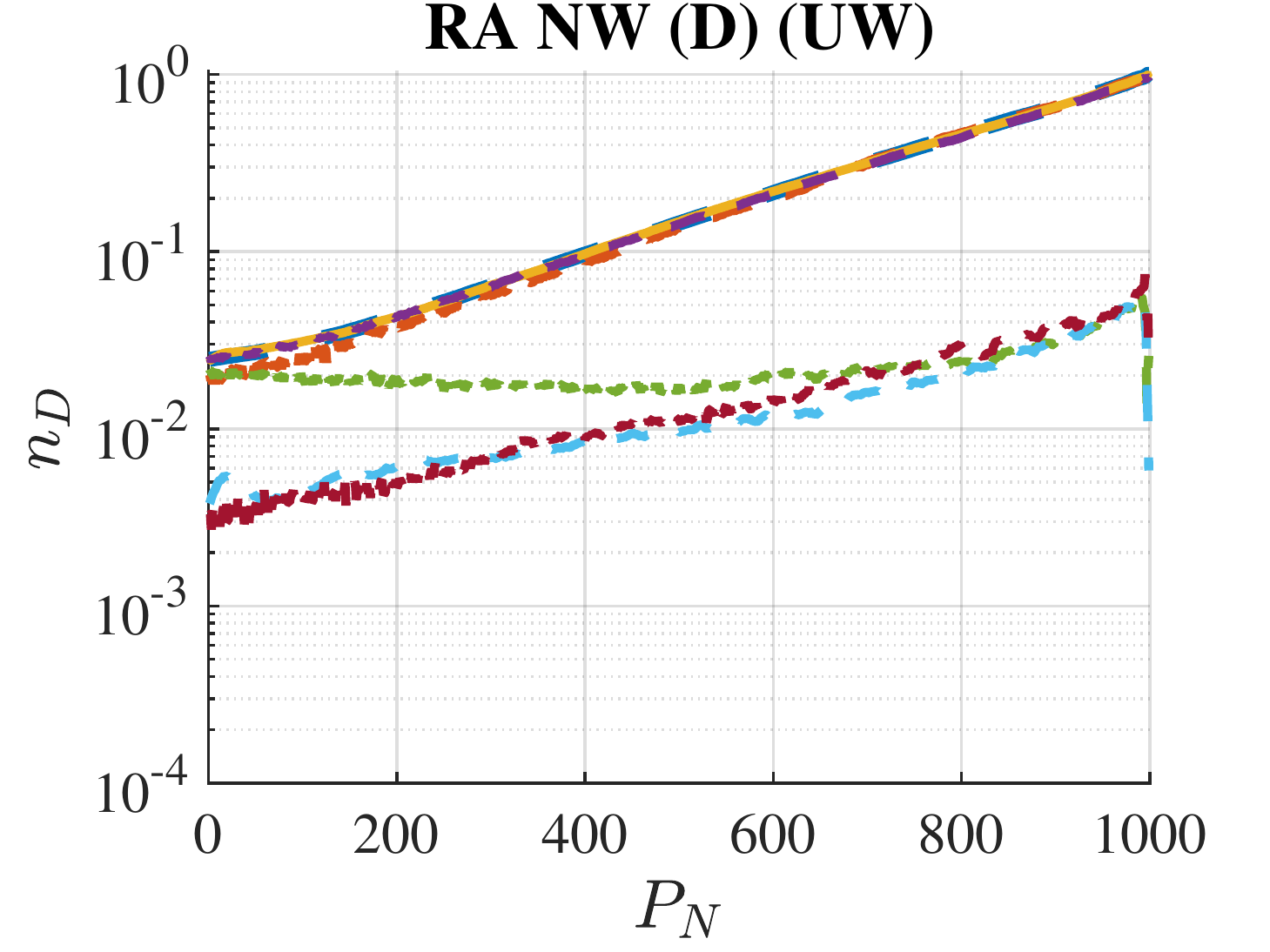}%
\label{c}}
\subfloat[]{\includegraphics[width=0.2\textwidth]{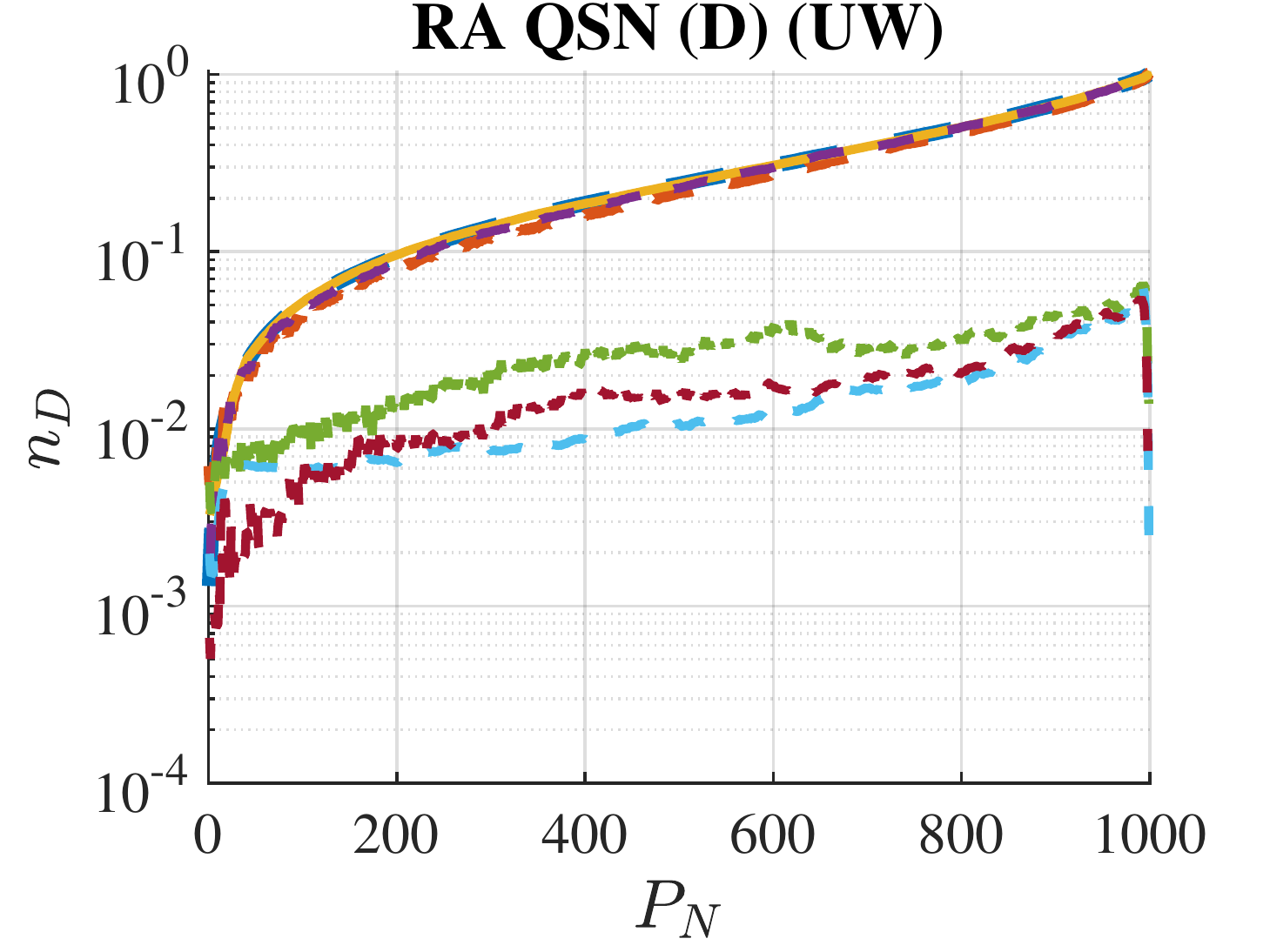}%
\label{d}}
\subfloat[]{\includegraphics[width=0.2\textwidth]{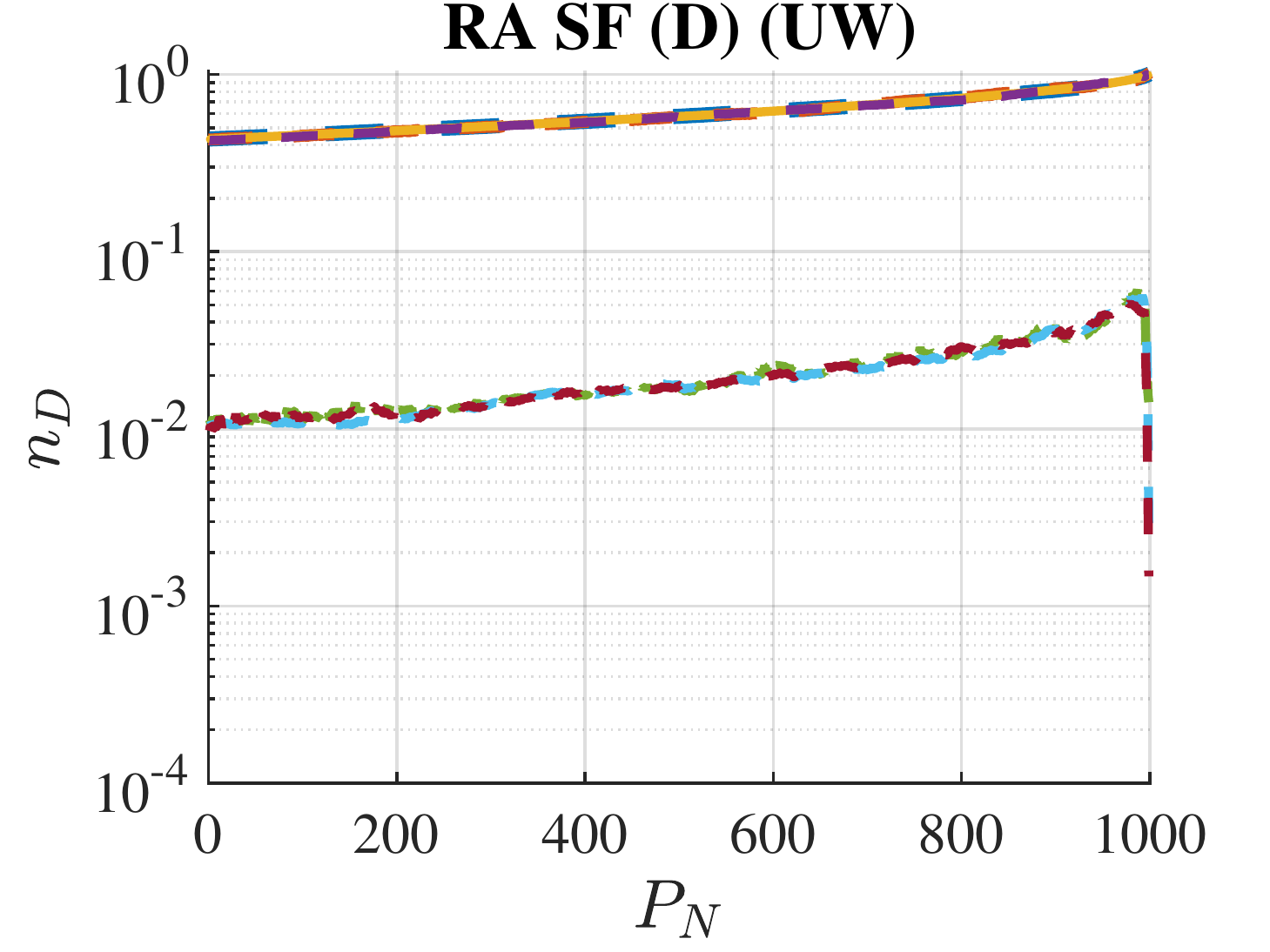}%
\label{d}}

\subfloat[]{\includegraphics[width=0.2\textwidth]{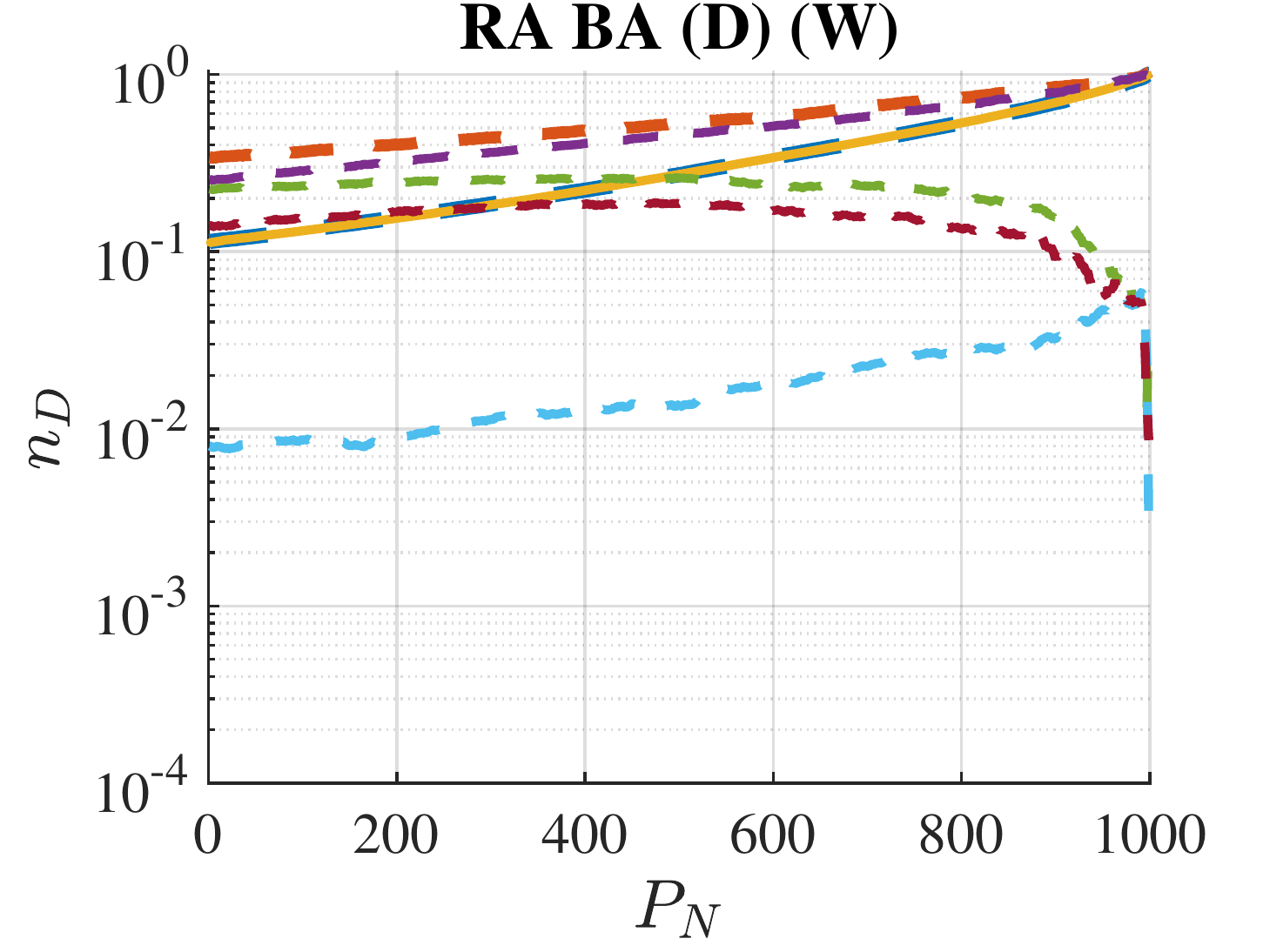}%
\label{a}}
\subfloat[]{\includegraphics[width=0.2\textwidth]{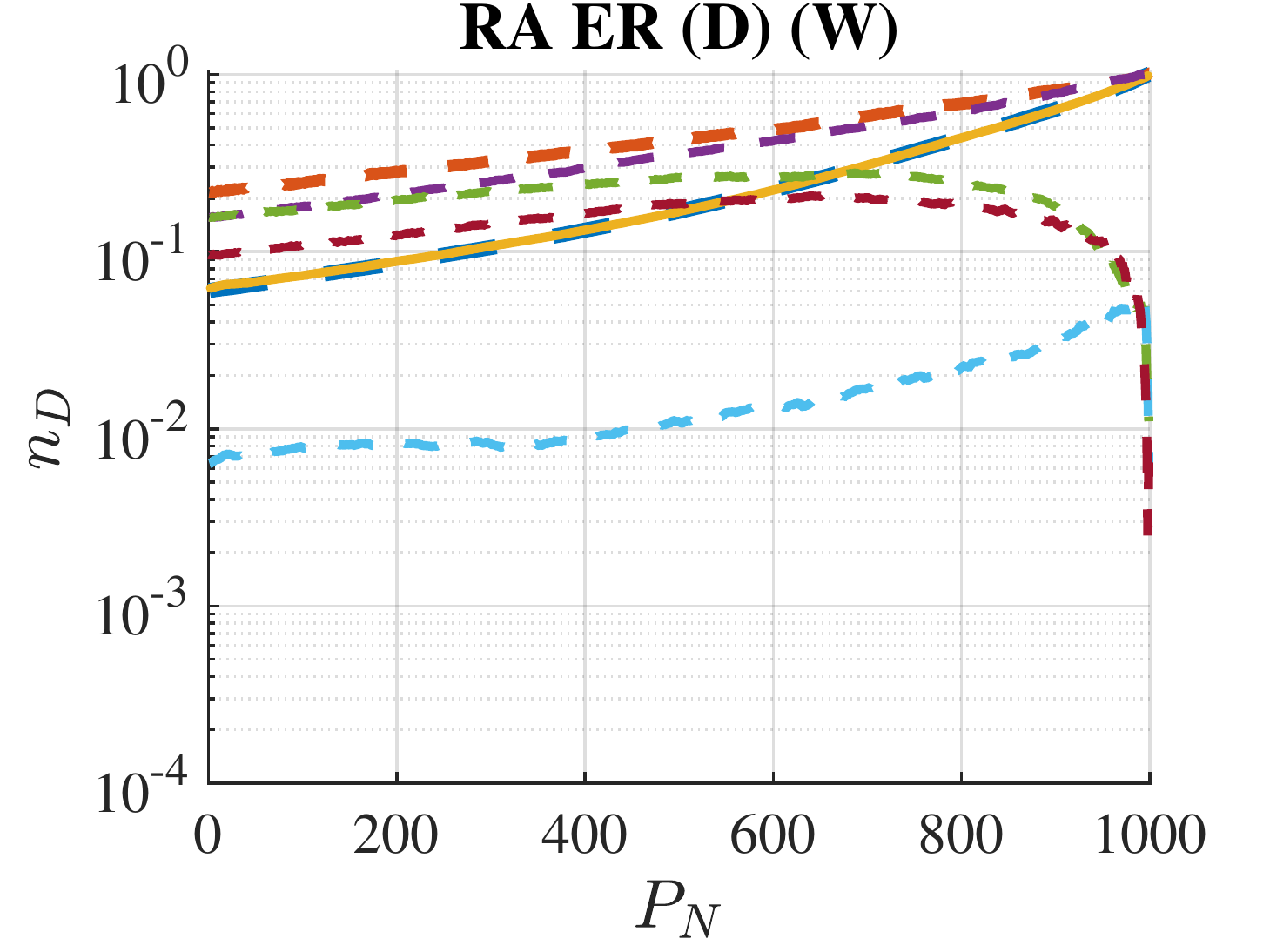}%
\label{b}}
\subfloat[]{\includegraphics[width=0.2\textwidth]{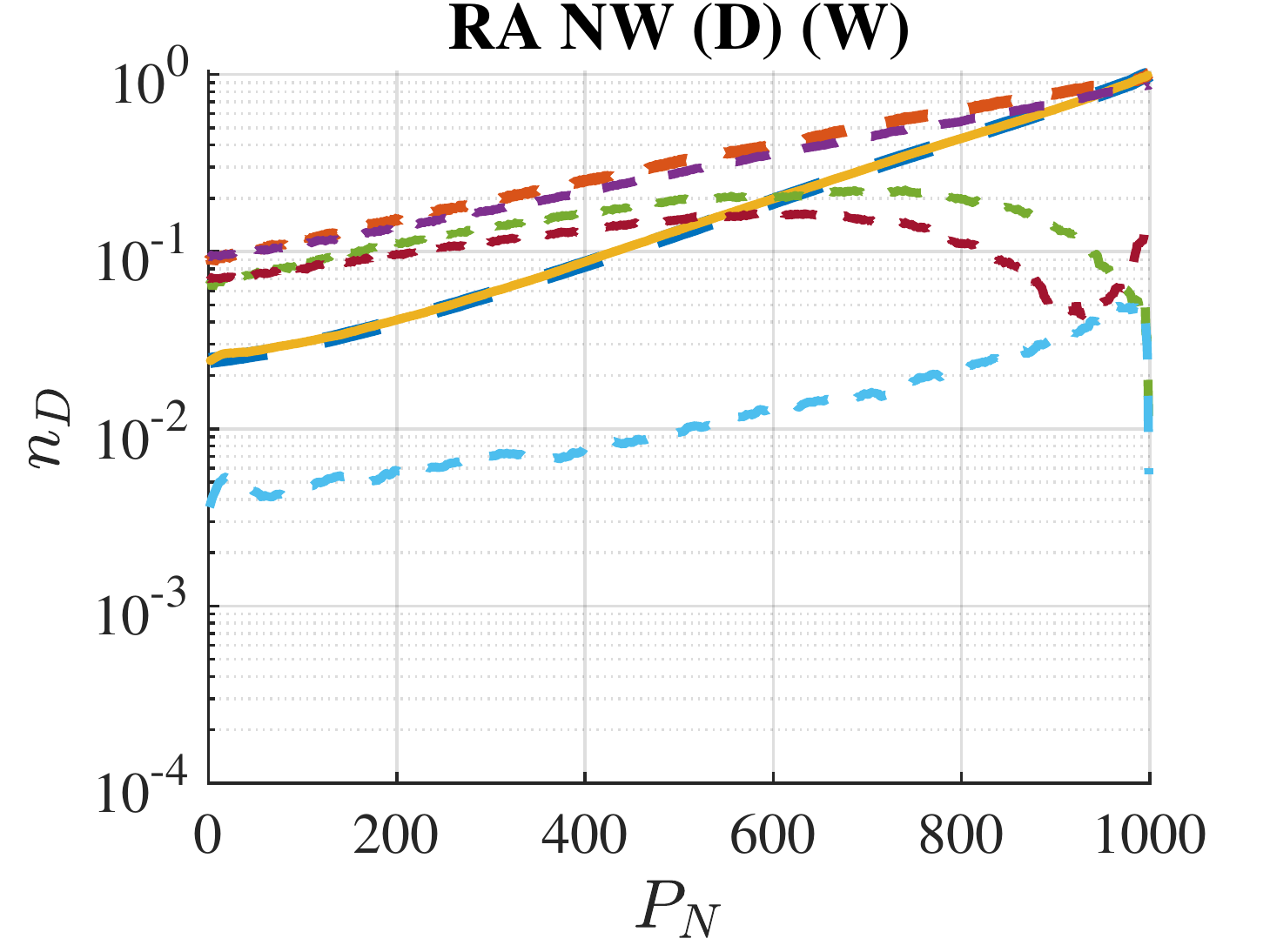}%
\label{c}}
\subfloat[]{\includegraphics[width=0.2\textwidth]{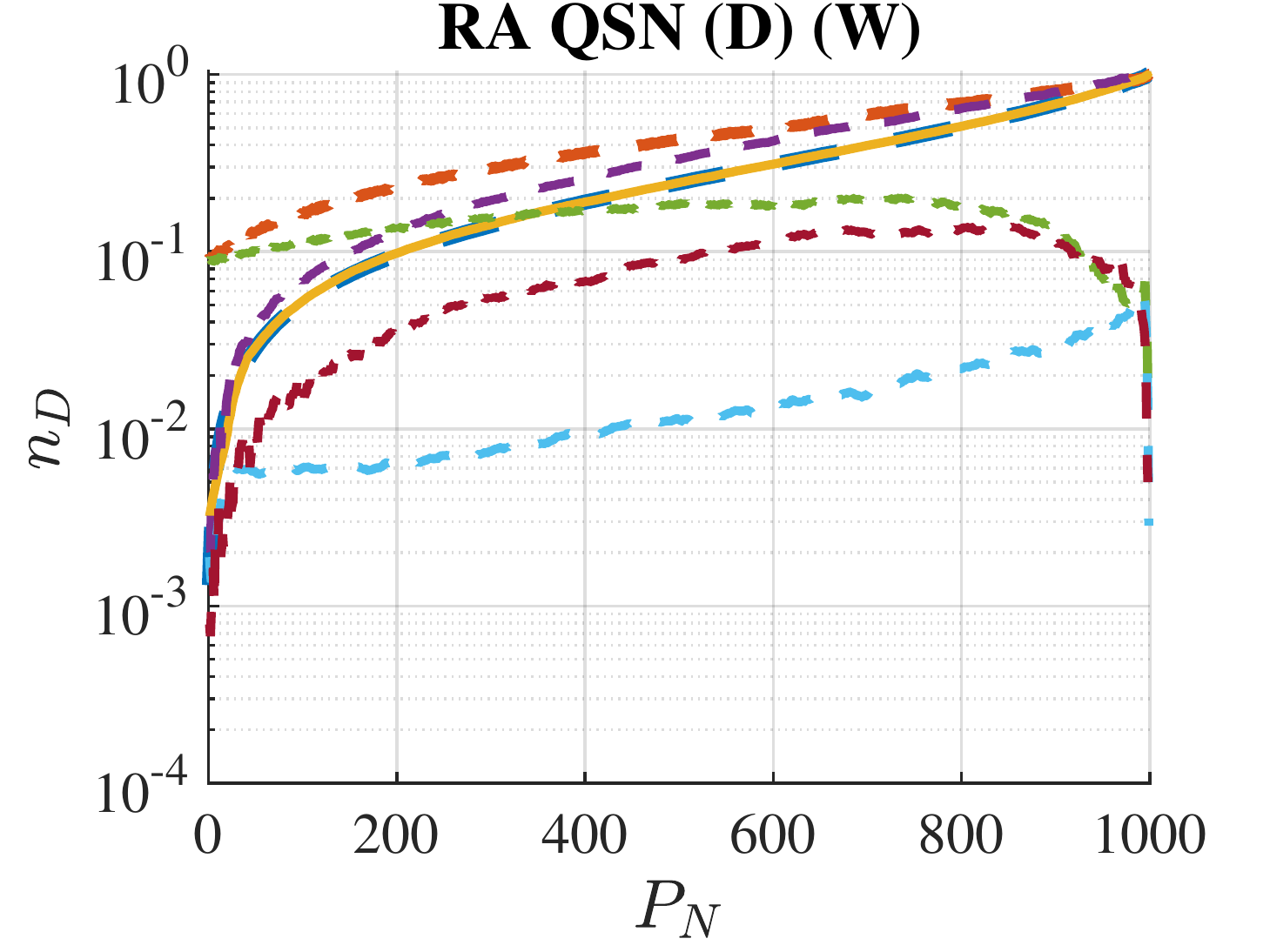}%
\label{d}}
\subfloat[]{\includegraphics[width=0.2\textwidth]{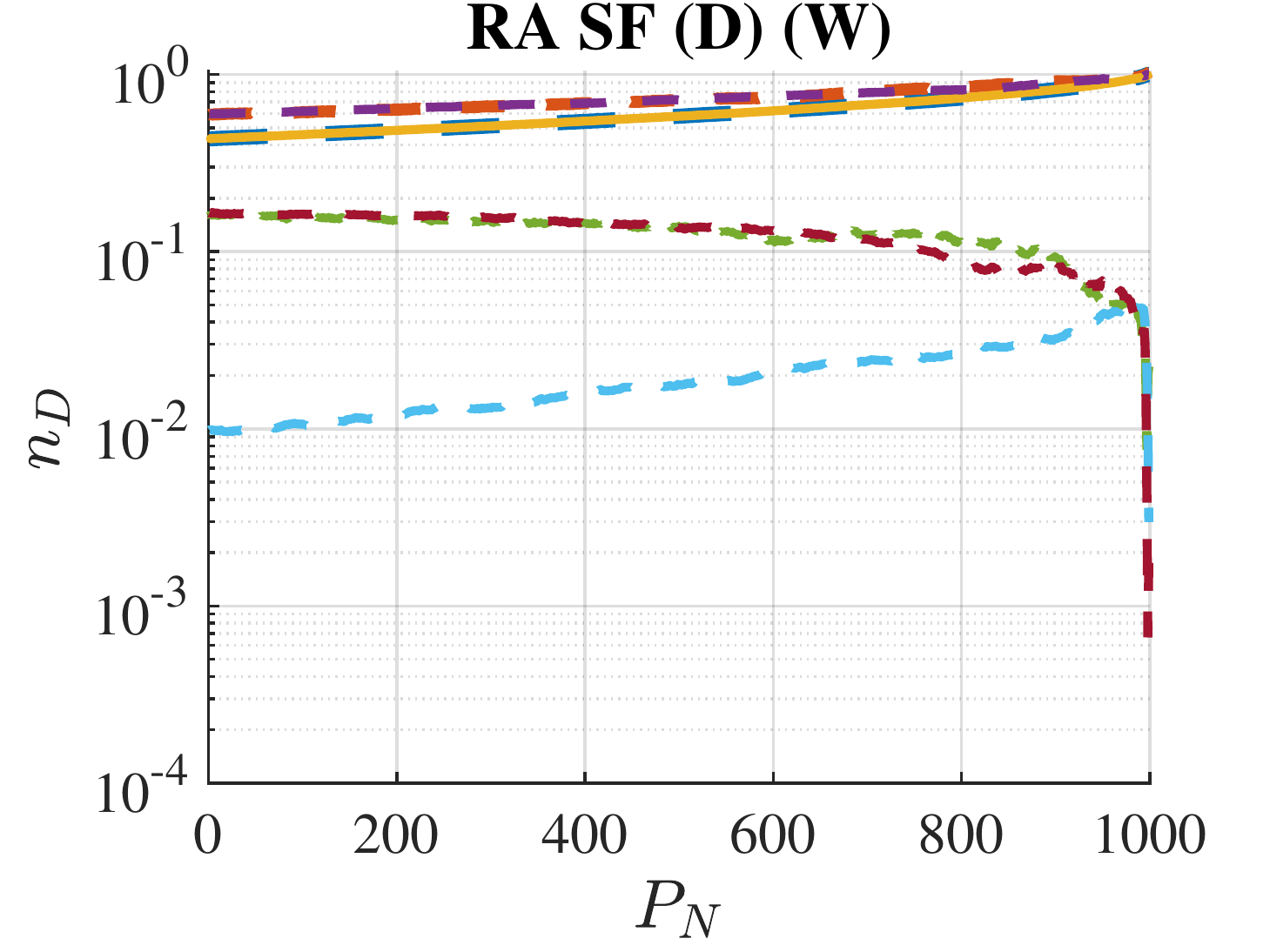}%
\label{d}}

\subfloat[]{\includegraphics[width=0.2\textwidth]{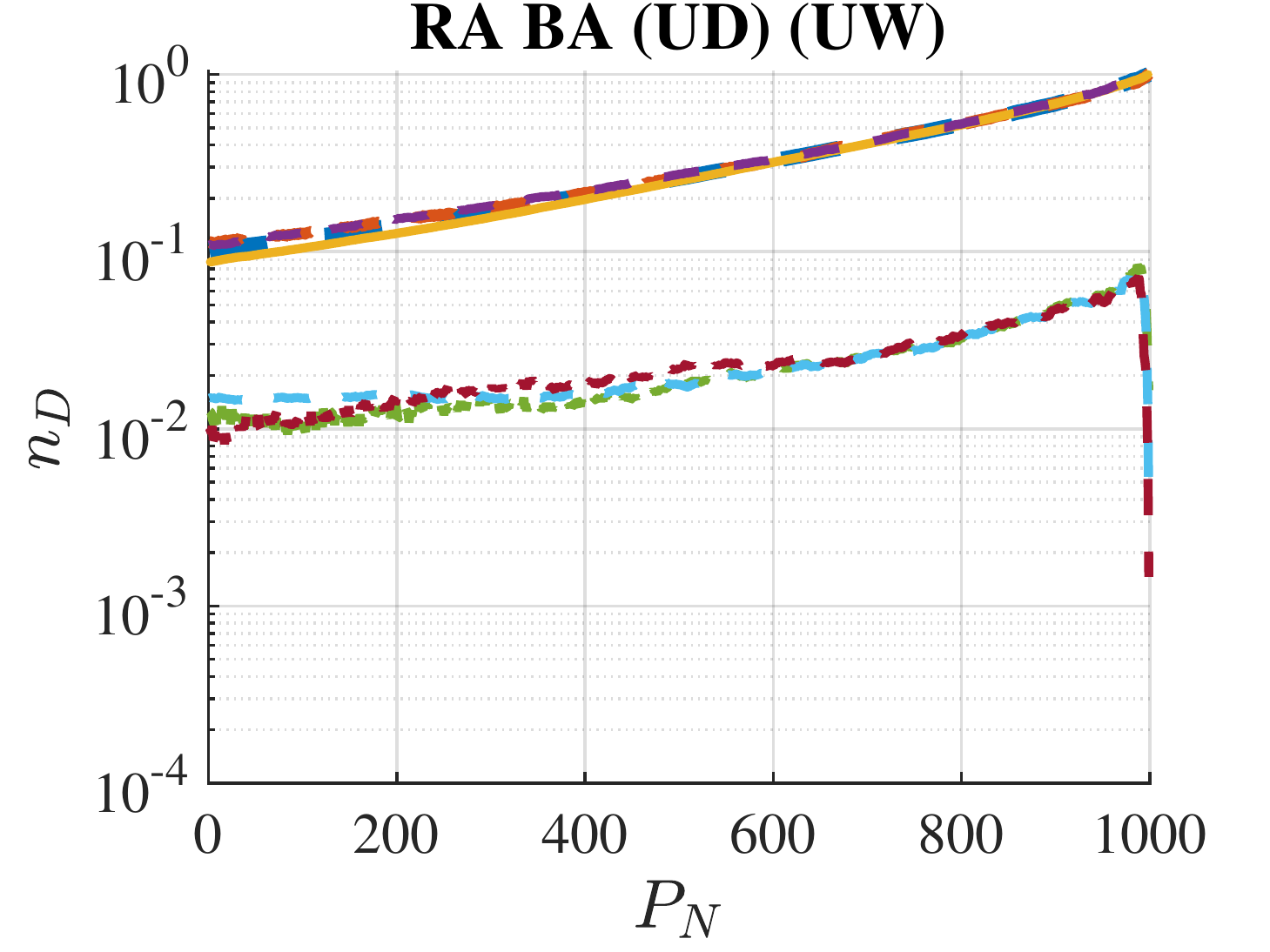}%
\label{a}}
\subfloat[]{\includegraphics[width=0.2\textwidth]{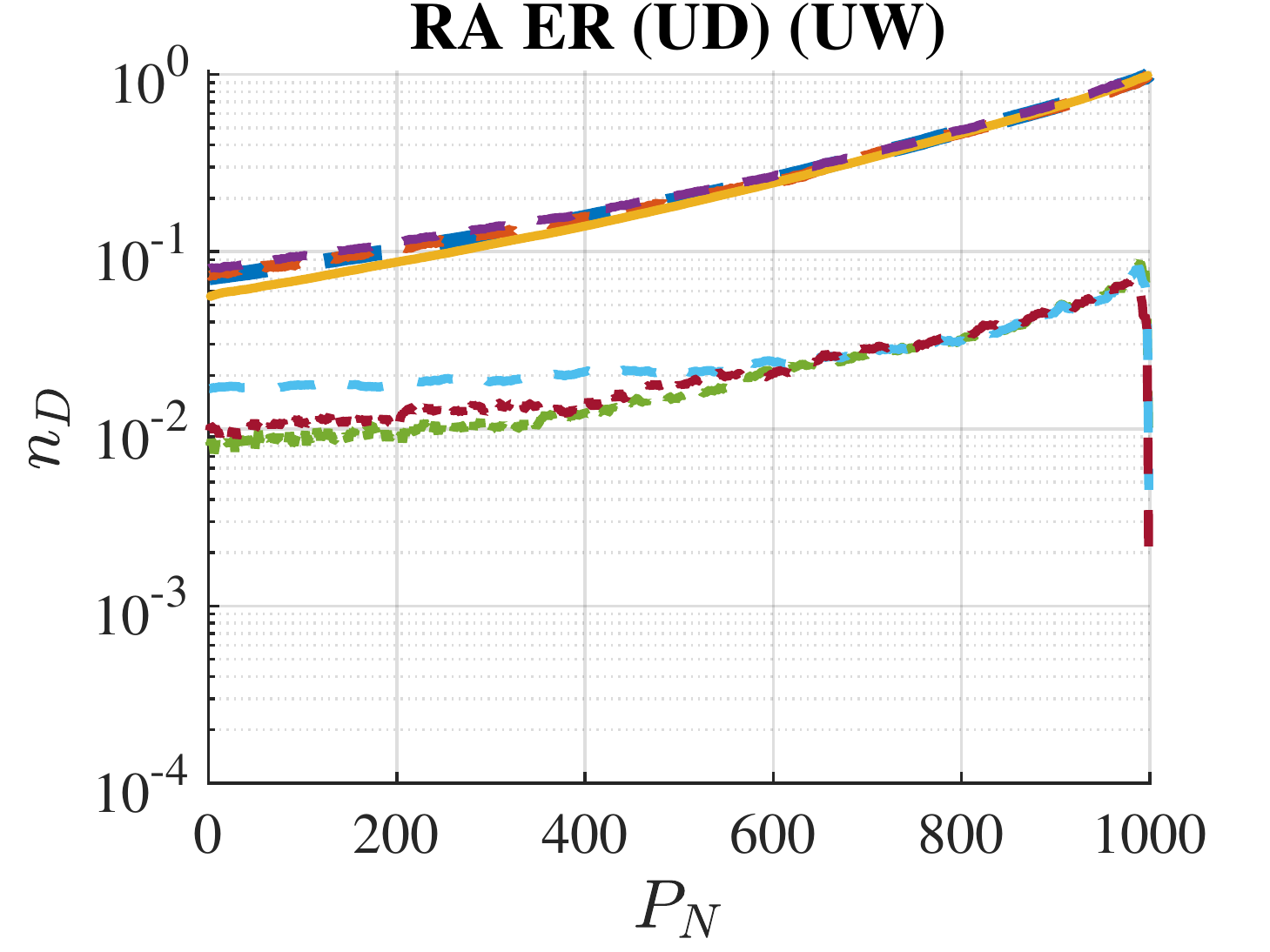}%
\label{b}}
\subfloat[]{\includegraphics[width=0.2\textwidth]{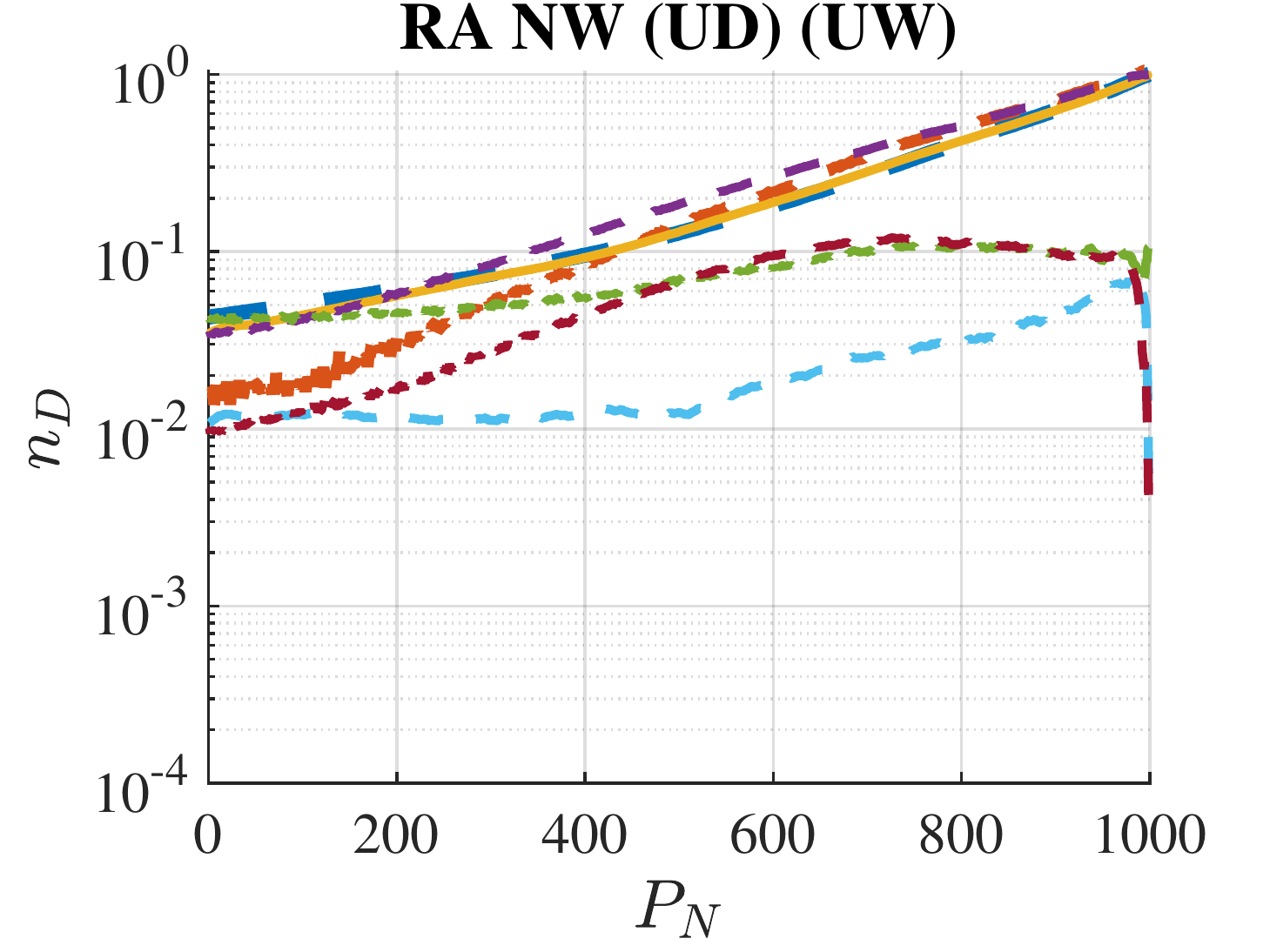}%
\label{c}}
\subfloat[]{\includegraphics[width=0.2\textwidth]{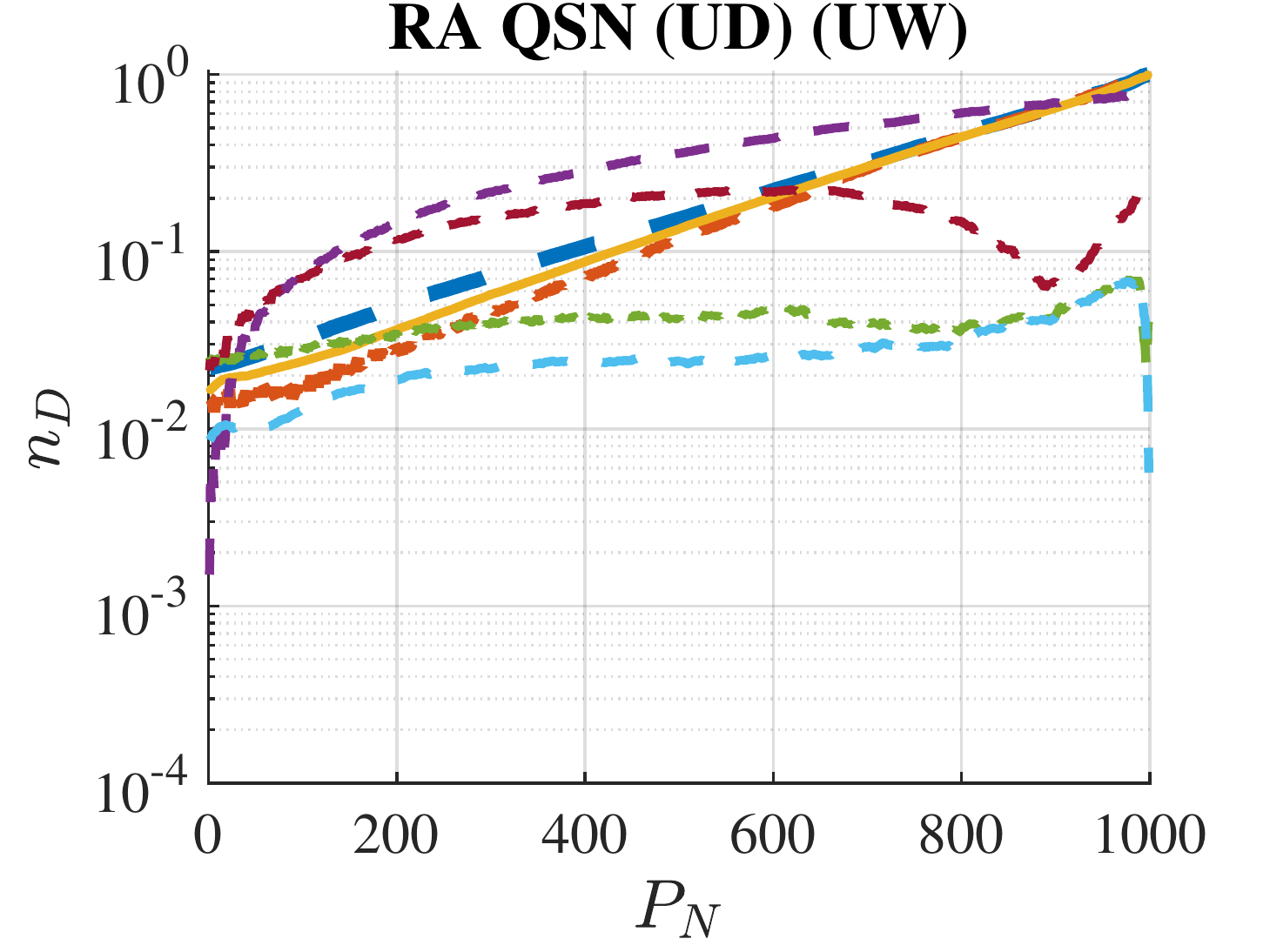}%
\label{d}}
\subfloat[]{\includegraphics[width=0.2\textwidth]{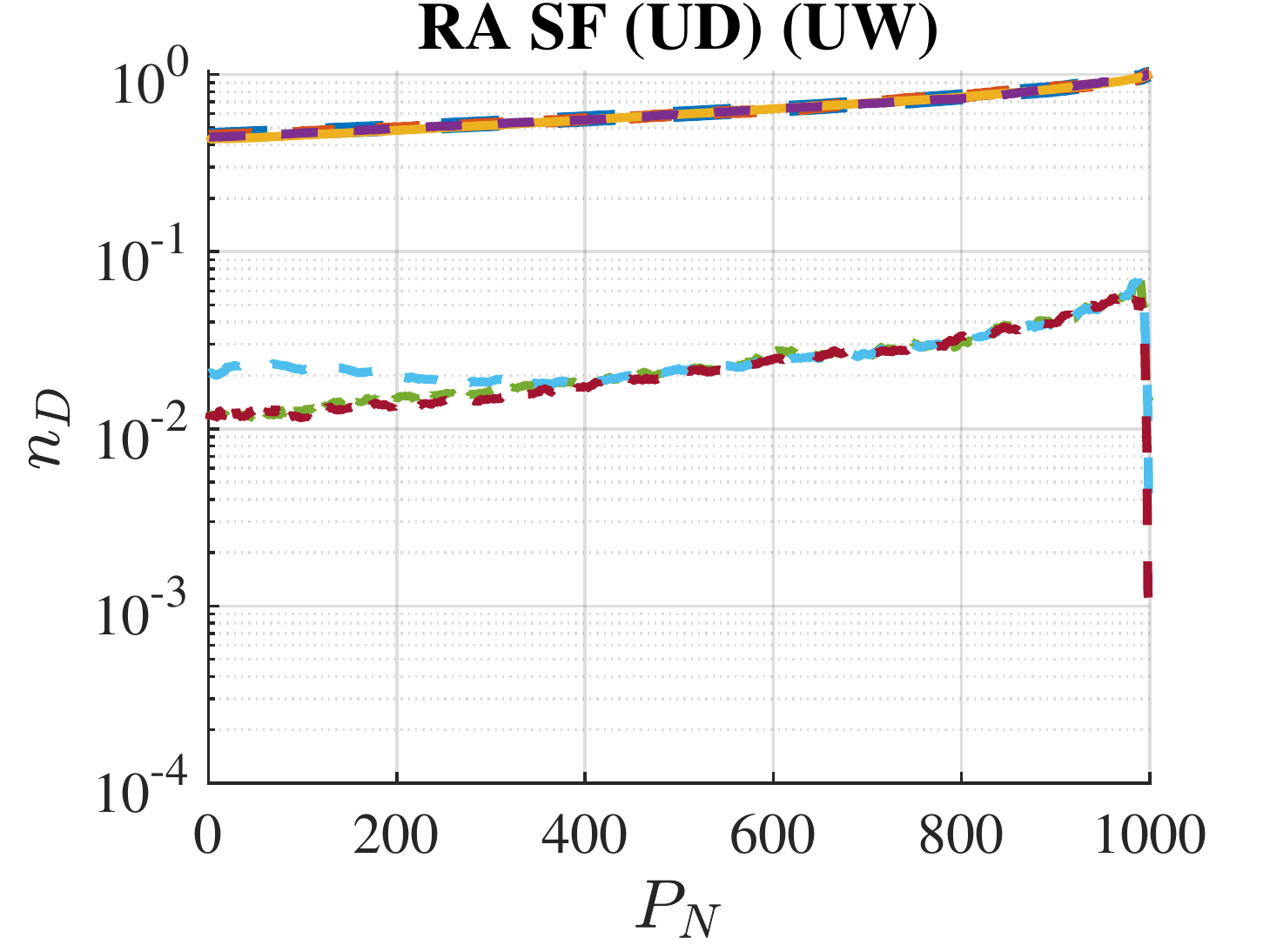}%
\label{d}}

\subfloat[]{\includegraphics[width=0.2\textwidth]{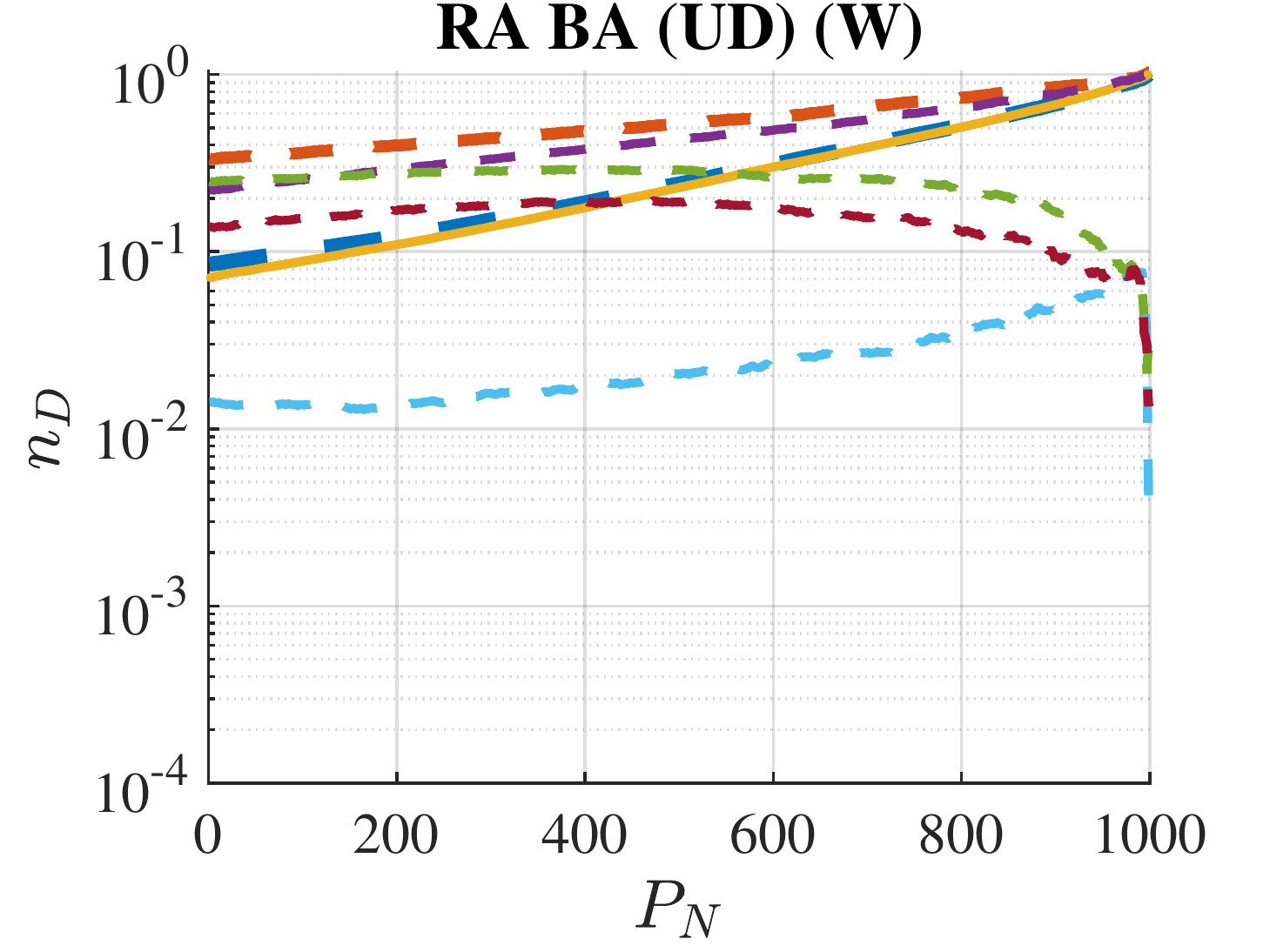}%
\label{a}}
\subfloat[]{\includegraphics[width=0.2\textwidth]{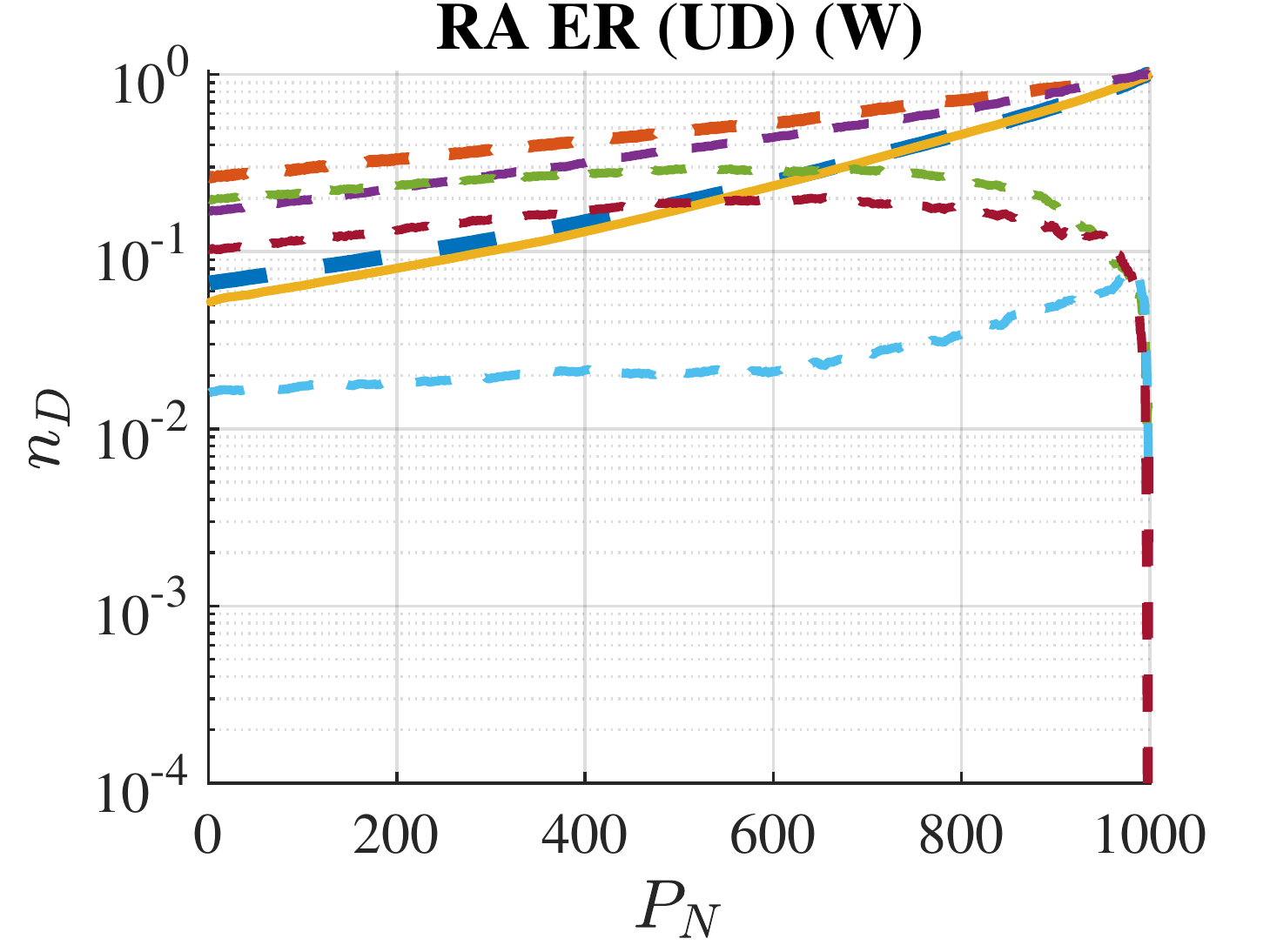}%
\label{b}}
\subfloat[]{\includegraphics[width=0.2\textwidth]{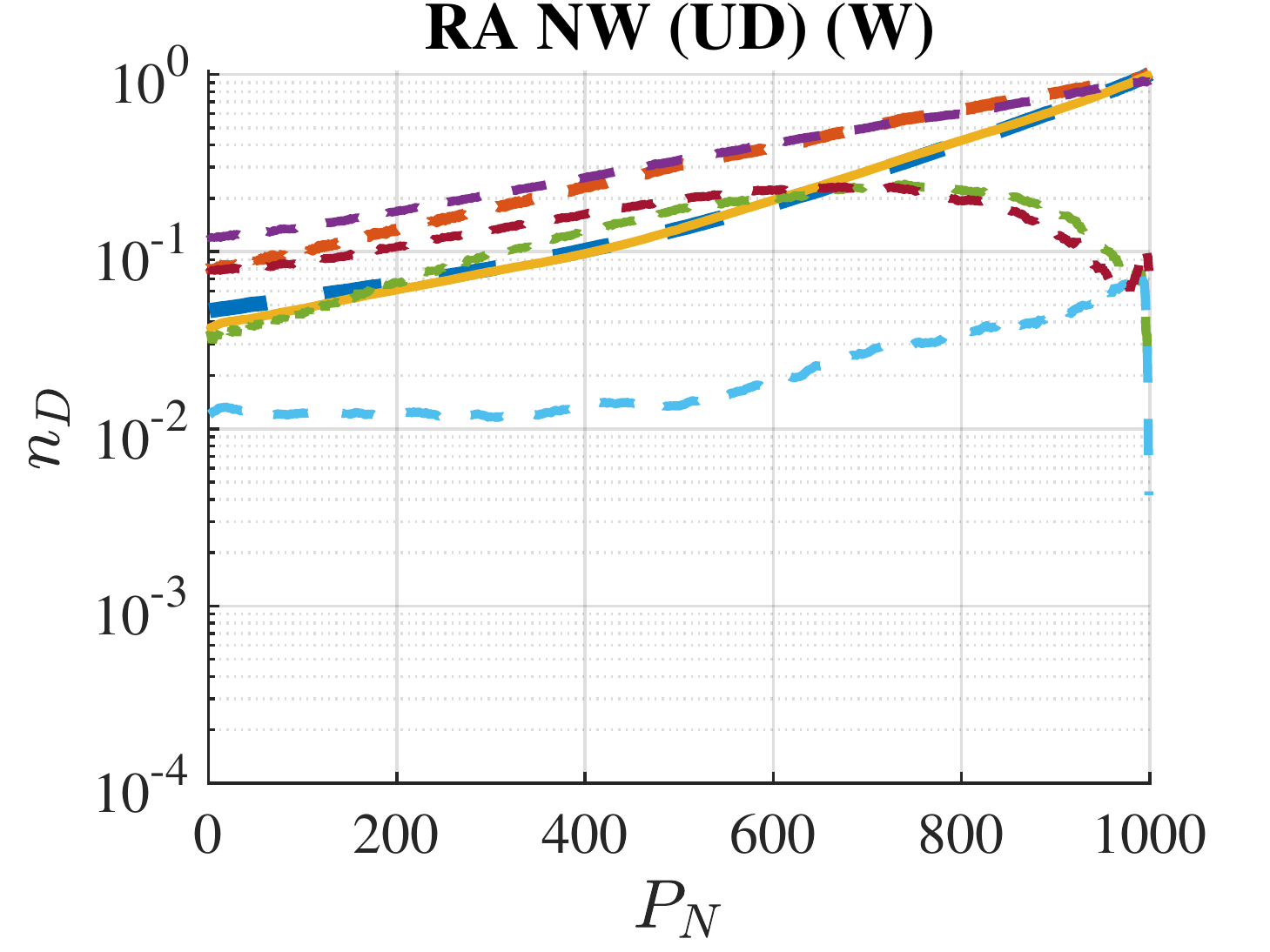}%
\label{c}}
\subfloat[]{\includegraphics[width=0.2\textwidth]{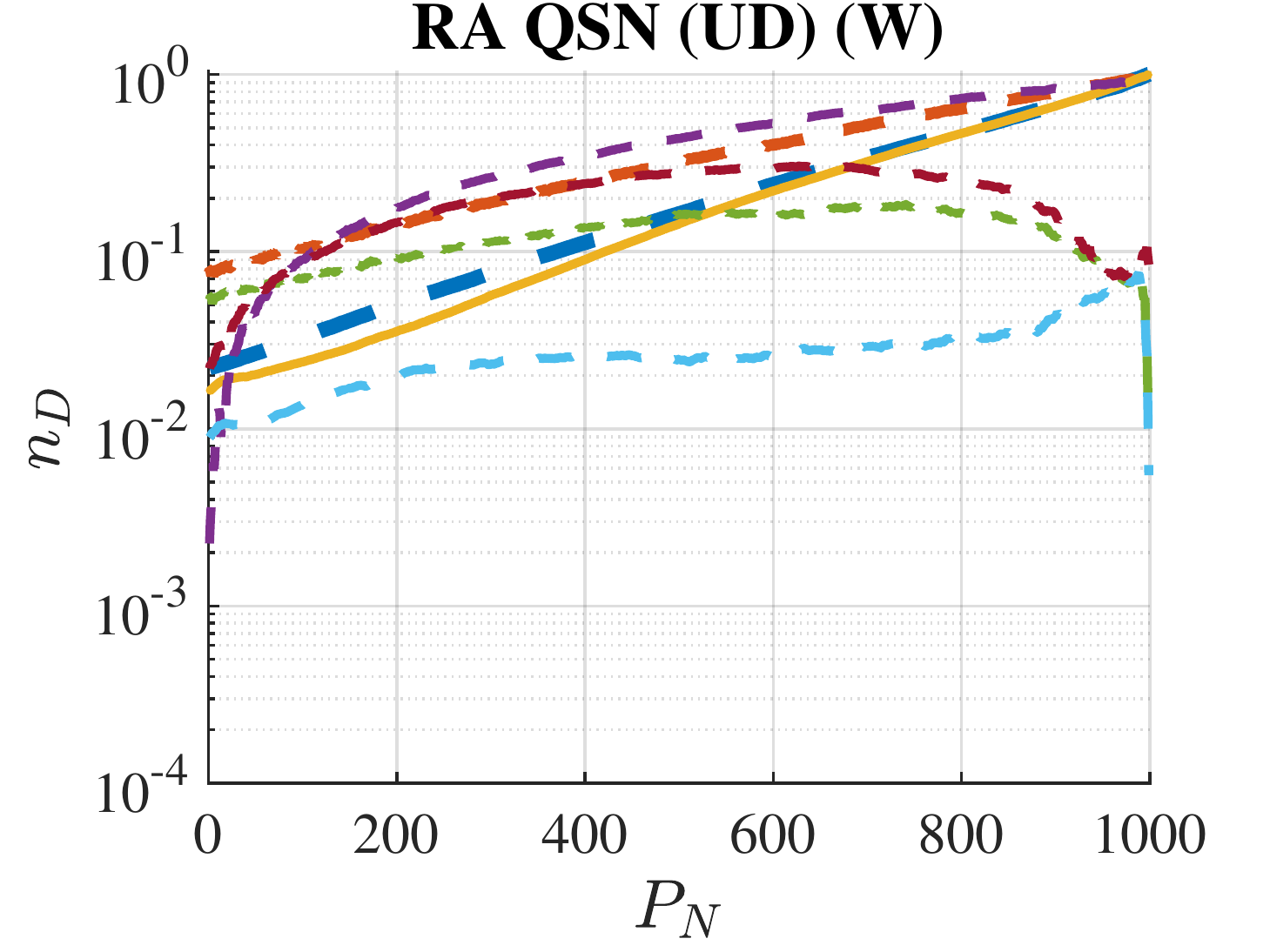}%
\label{d}}
\subfloat[]{\includegraphics[width=0.2\textwidth]{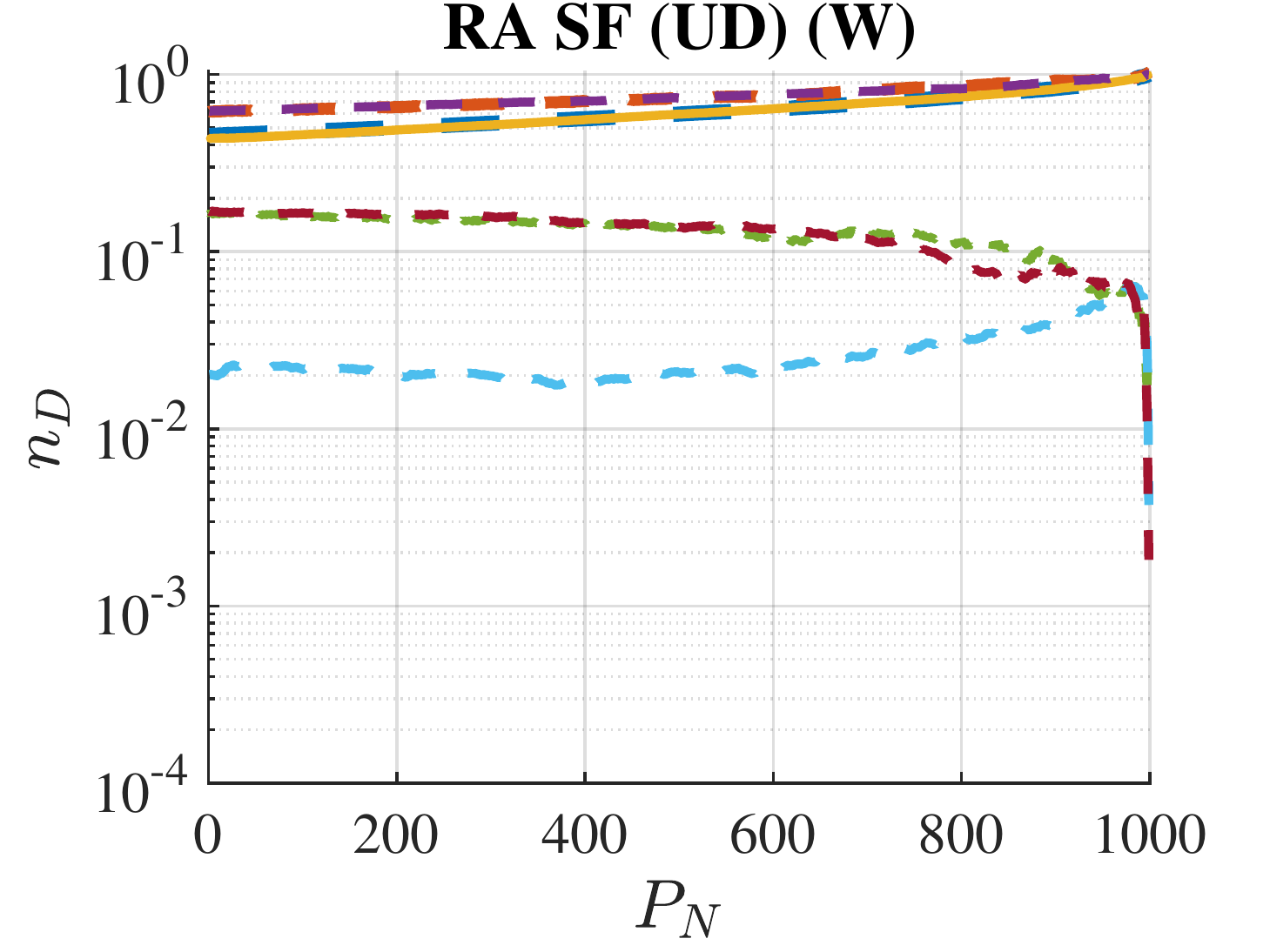}%
\label{d}}
\caption{Precision comparison and generalization ability evaluation of NRL-GT, PCR~\cite{31}, iPCR~\cite{32} for controllability robustness learning under RA. The experimental datasets include BA, ER, NW, QSN, and SF networks. In the title of each figure, D, UD, W, and UW indicate that the network is directed, undirected, weighted, and unweighted respectively. ${P_N}$ represents the number of nodes in the network that have been removed.
\label{fig_sim}}
\end{figure*}
\begin{table*}[!h]
\centering
\caption{Comparison of average errors of NRL-GT, PCR~\cite{31}, and iPCR~\cite{32} for controllability robustness learning under RA. The experimental datasets include directed and undirected, weighted and unweighted BA, ER, NW, QSN, and SF networks. \label{tab:table2}}
\begin{tabular}{|ccc|c|c|c|c|c|}
\hline
\multicolumn{3}{|c|}{Average Learning Error $\overline{\xi} $}                                                                                                                                                                                                                                                & BA             & ER             & NW             & QSN            & SF             \\ \hline
\multicolumn{1}{|c|}{\multirow{12}{*}{\begin{tabular}[c]{@{}c@{}}Controllability Robustness \\ of complex networks \\ under RA\end{tabular}}} & \multicolumn{1}{c|}{\multirow{3}{*}{\begin{tabular}[c]{@{}c@{}}Testing on directed\\ unweighted networks\end{tabular}}}                 & PCR    & \textbf{0.019} & 0.016          & 0.022          & 0.026          & 0.021          \\ \cline{3-8} 
\multicolumn{1}{|c|}{}                                                                                                                     & \multicolumn{1}{c|}{}                                                                                                                   & iPCR   & \textbf{0.019} & 0.016          & 0.016          & 0.017          & \textbf{0.020} \\ \cline{3-8} 
\multicolumn{1}{|c|}{}                                                                                                                     & \multicolumn{1}{c|}{}                                                                                                                   & NRL-GT & \textbf{0.019} & \textbf{0.015} & \textbf{0.014} & \textbf{0.014} & \textbf{0.020} \\ \cline{2-8} 
\multicolumn{1}{|c|}{}                                                                                                                     & \multicolumn{1}{c|}{\multirow{3}{*}{\begin{tabular}[c]{@{}c@{}}Generalization tests on \\ directed weighted networks\end{tabular}}}     & PCR    & 0.222          & 0.215          & 0.152          & 0.152          & 0.128          \\ \cline{3-8} 
\multicolumn{1}{|c|}{}                                                                                                                     & \multicolumn{1}{c|}{}                                                                                                                   & iPCR   & 0.153          & 0.153          & 0.114          & 0.079          & 0.128          \\ \cline{3-8} 
\multicolumn{1}{|c|}{}                                                                                                                     & \multicolumn{1}{c|}{}                                                                                                                   & NRL-GT & \textbf{0.019} & \textbf{0.015} & \textbf{0.014} & \textbf{0.015} & \textbf{0.020} \\ \cline{2-8} 
\multicolumn{1}{|c|}{}                                                                                                                     & \multicolumn{1}{c|}{\multirow{3}{*}{\begin{tabular}[c]{@{}c@{}}Generalization tests on \\ undirected unweighted networks\end{tabular}}} & PCR    & \textbf{0.023} & \textbf{0.022} & 0.072          & 0.040          & 0.024          \\ \cline{3-8} 
\multicolumn{1}{|c|}{}                                                                                                                     & \multicolumn{1}{c|}{}                                                                                                                   & iPCR   & 0.025          & 0.023          & 0.063          & 0.149          & \textbf{0.023} \\ \cline{3-8} 
\multicolumn{1}{|c|}{}                                                                                                                     & \multicolumn{1}{c|}{}                                                                                                                   & NRL-GT & 0.024          & 0.026          & \textbf{0.022} & \textbf{0.027} & 0.026          \\ \cline{2-8} 
\multicolumn{1}{|c|}{}                                                                                                                     & \multicolumn{1}{c|}{\multirow{3}{*}{\begin{tabular}[c]{@{}c@{}}Generalization tests on \\ undirected weighted networks\end{tabular}}}   & PCR    & 0.247          & 0.241          & 0.139          & 0.126          & 0.129          \\ \cline{3-8} 
\multicolumn{1}{|c|}{}                                                                                                                     & \multicolumn{1}{c|}{}                                                                                                                   & iPCR   & 0.156          & 0.154          & 0.154          & 0.201          & 0.129          \\ \cline{3-8} 
\multicolumn{1}{|c|}{}                                                                                                                     & \multicolumn{1}{c|}{}                                                                                                                   & NRL-GT & \textbf{0.025} & \textbf{0.027} & \textbf{0.023} & \textbf{0.027} & \textbf{0.026} \\ \hline
\end{tabular}
\vspace{-0.2cm}
\end{table*}

\vspace{-0.4cm}
\subsection{Network Connectivity Robustness Learning}
The robustness of connectivity for undirected networks under RA and the robustness of weak connectivity for directed networks under TBA is learned based on NRL-GT in this subsection. For different properties of connectivity robustness, the generalization ability is also tested from unweighted networks to weighted networks. We compare with CNN-RP~\cite{68} and mCNN-RP~\cite{77}, which are landmark models of network connectivity robustness learning. For each topology under RA or TBA, 2000 unweighted samples are generated for training, and 200 weighted and unweighted samples are generated for testing. The range of average degree $\left\langle k \right\rangle $ of each synthetic topology is set as [2,20] for undirected networks (with an overall average degree of 11)  and [1,10] for directed networks (with an overall average degree of 5.5).

The test results are presented in Fig. 4 and Table III. It can be seen that when the distributions of training and test sets are consistent, the error curves of NRL-GT are lower than those of CNN-RP and mCNN-RP for most topologies, both under RA and TBA (see Figs. 4(b)-4(d), Figs. 4(k)-4(m) and Fig. 4(o)). The corresponding positions in Table III show that the mean errors of NRL-GT are lower than CNN-RP and mCNN-RP, demonstrating the higher precision of NRL-GT in network connectivity robustness learning. Since the edge weights also have no effect on network connectivity robustness and NRL-GT is also independent of edge weights in capturing topological information that determines the robustness of network connectivity, NRL-GT trained on unweighted networks is still able to achieve high-precision performance on weighted networks, shown in Figs. 4(f)-4(j) and Figs. 4(p)-4(t). However, CNN-RP and mCNN-RP trained on unweighted networks are completely distorted on weighted networks, illustrating poor generalization of CNN-RP and mCNN-RP.
\begin{figure*}[h]
\centering
\subfloat[]{\includegraphics[width=0.2\textwidth]{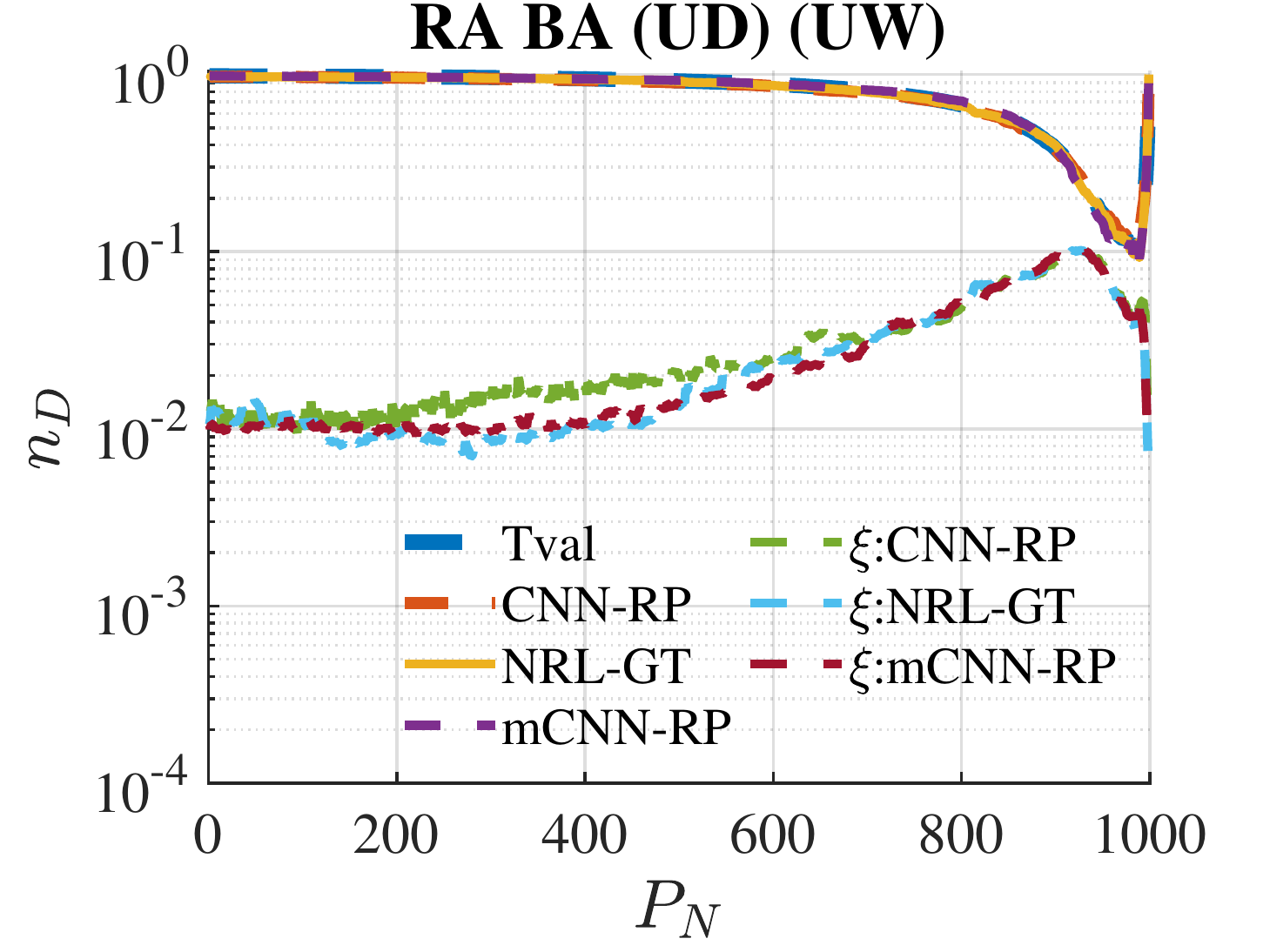}%
\label{a}}
\subfloat[]{\includegraphics[width=0.2\textwidth]{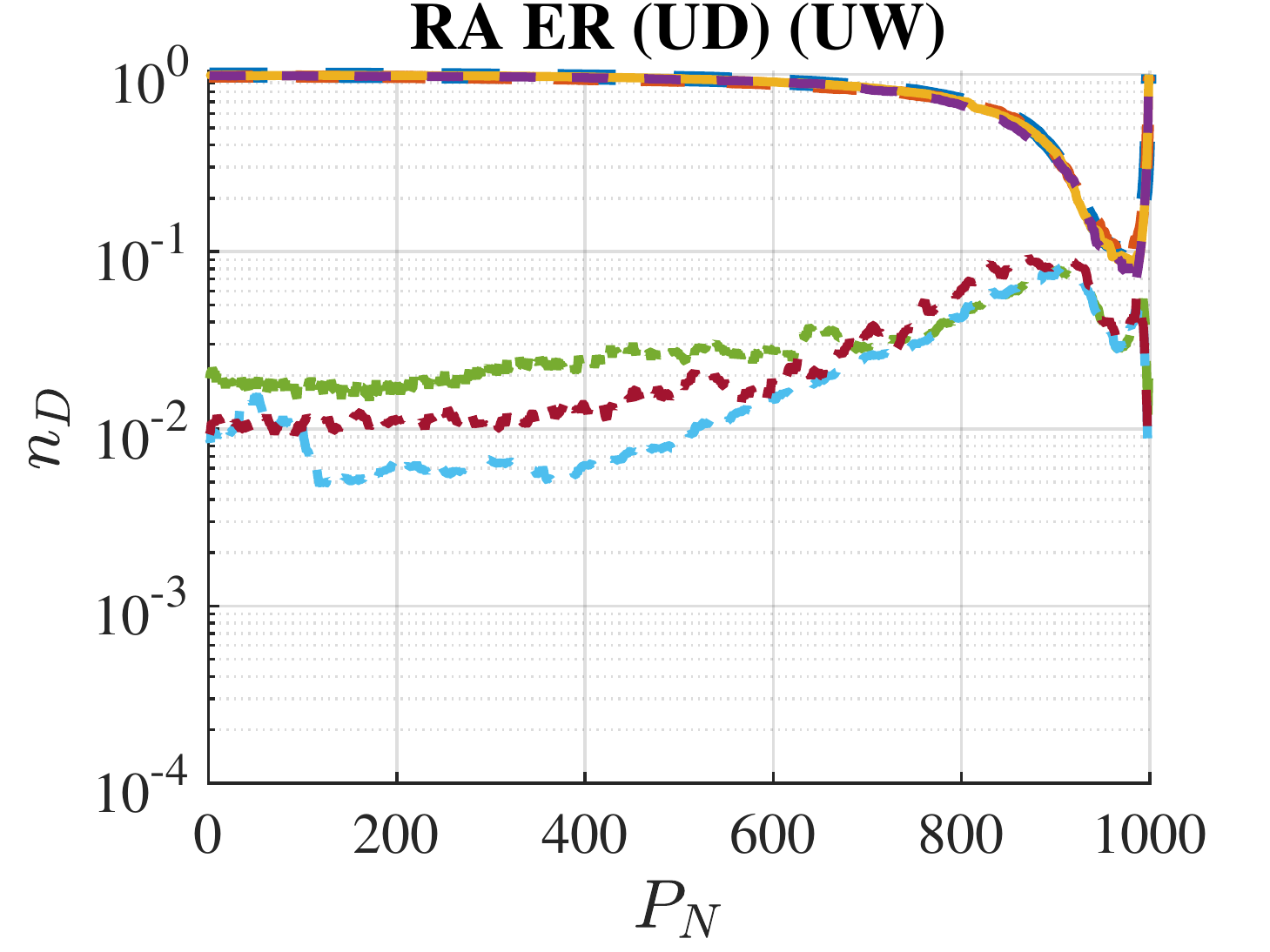}%
\label{b}}
\subfloat[]{\includegraphics[width=0.2\textwidth]{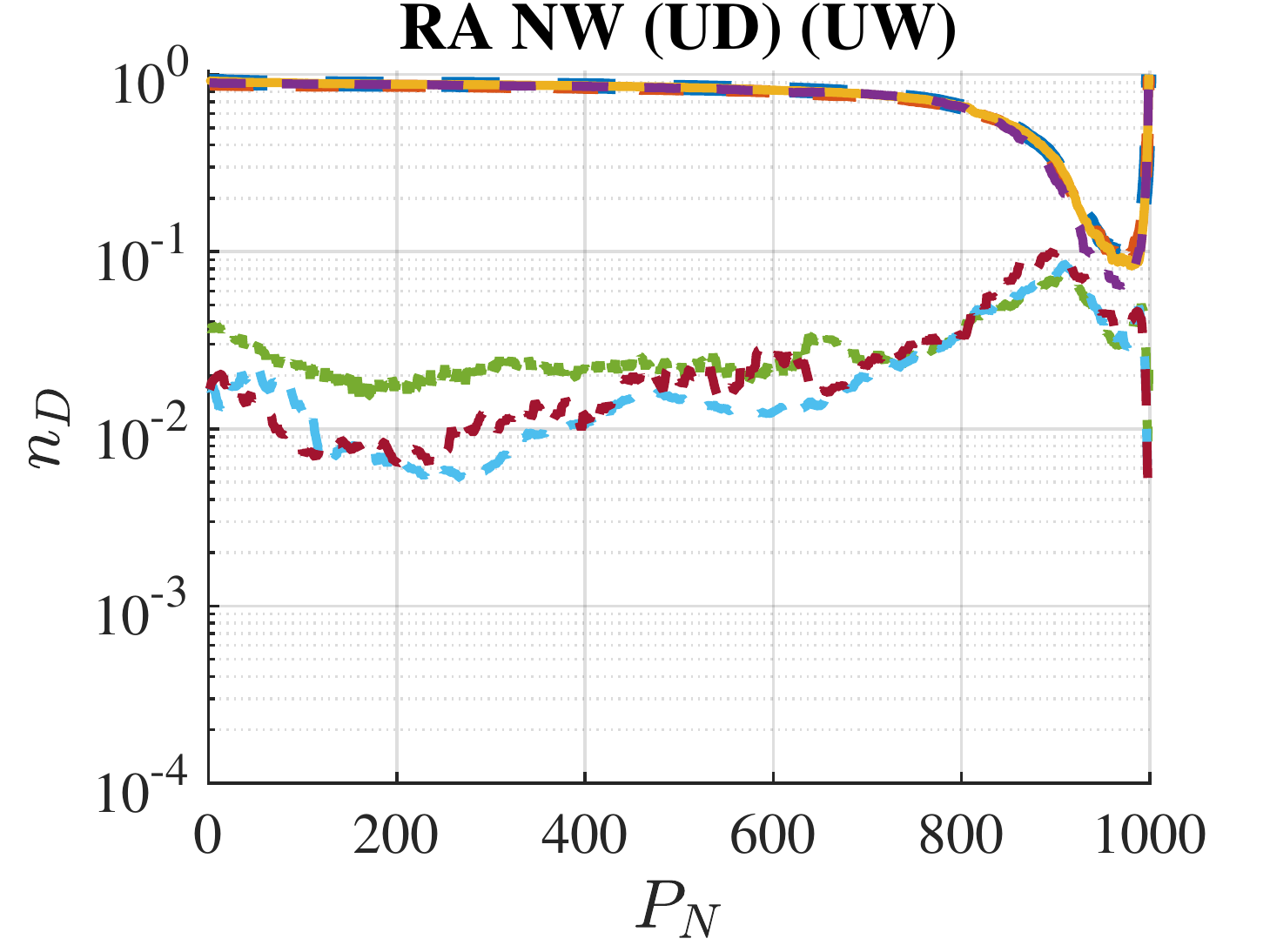}%
\label{c}}
\subfloat[]{\includegraphics[width=0.2\textwidth]{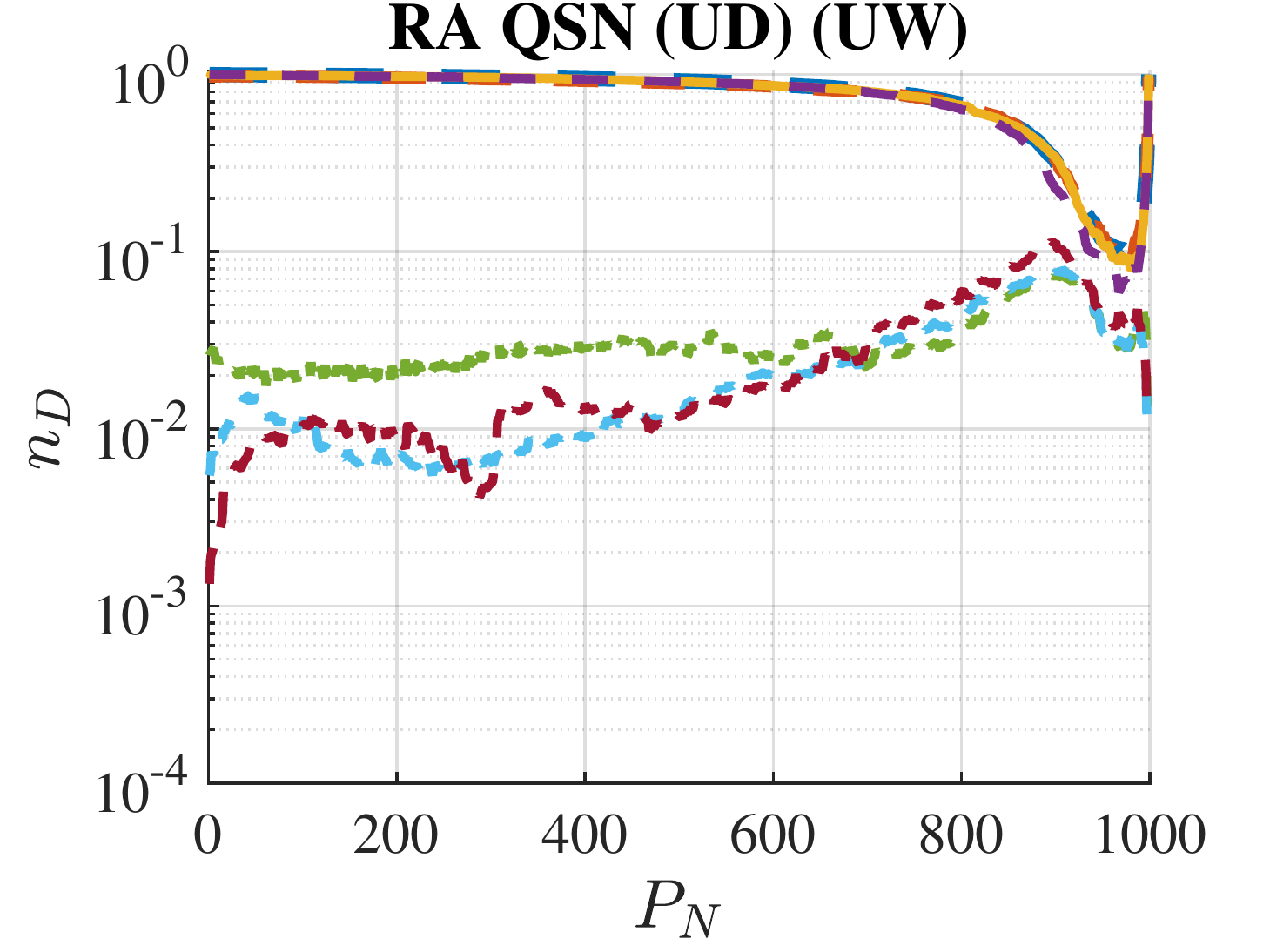}%
\label{d}}
\subfloat[]{\includegraphics[width=0.2\textwidth]{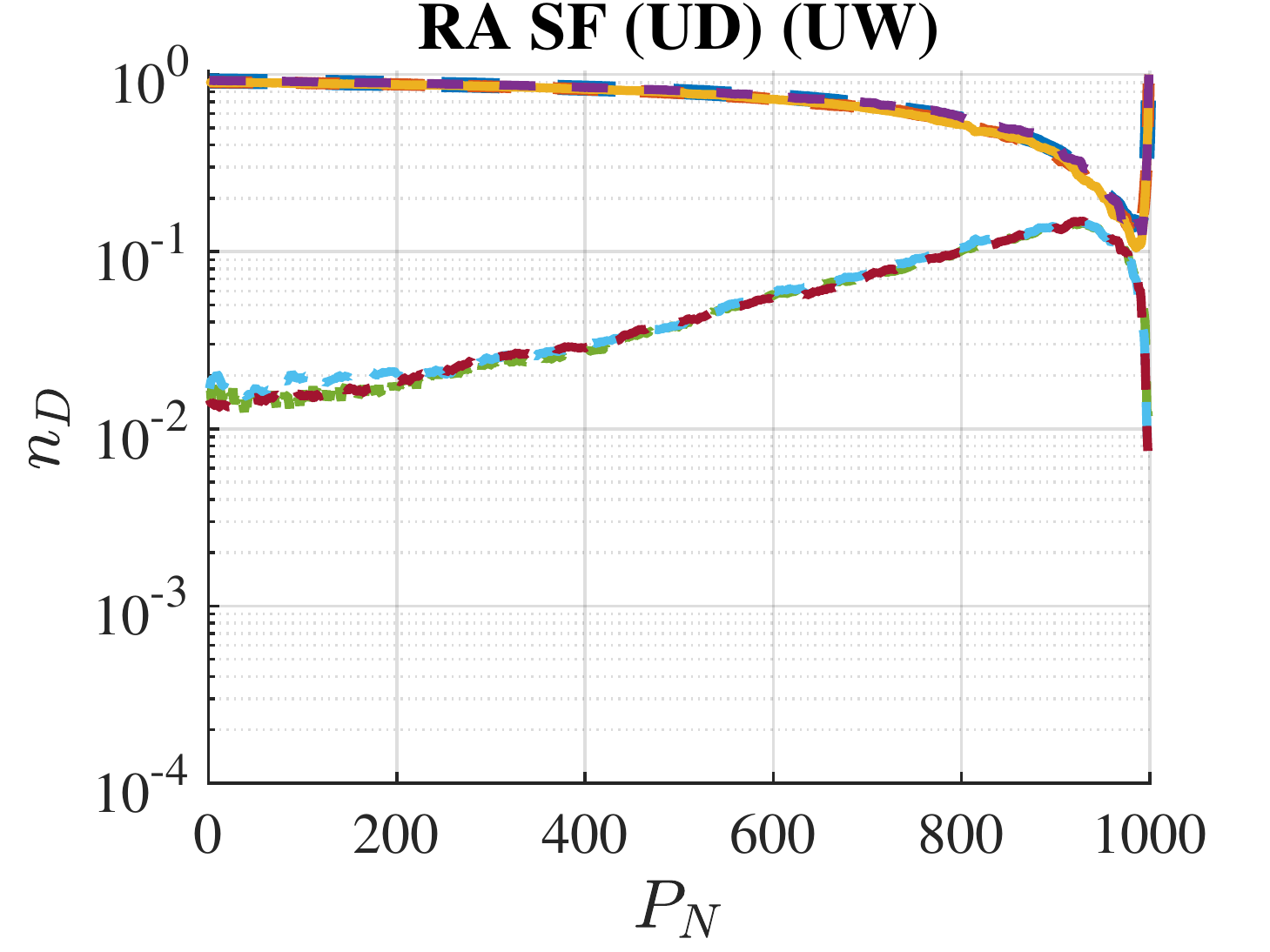}%
\label{d}}

\subfloat[]{\includegraphics[width=0.2\textwidth]{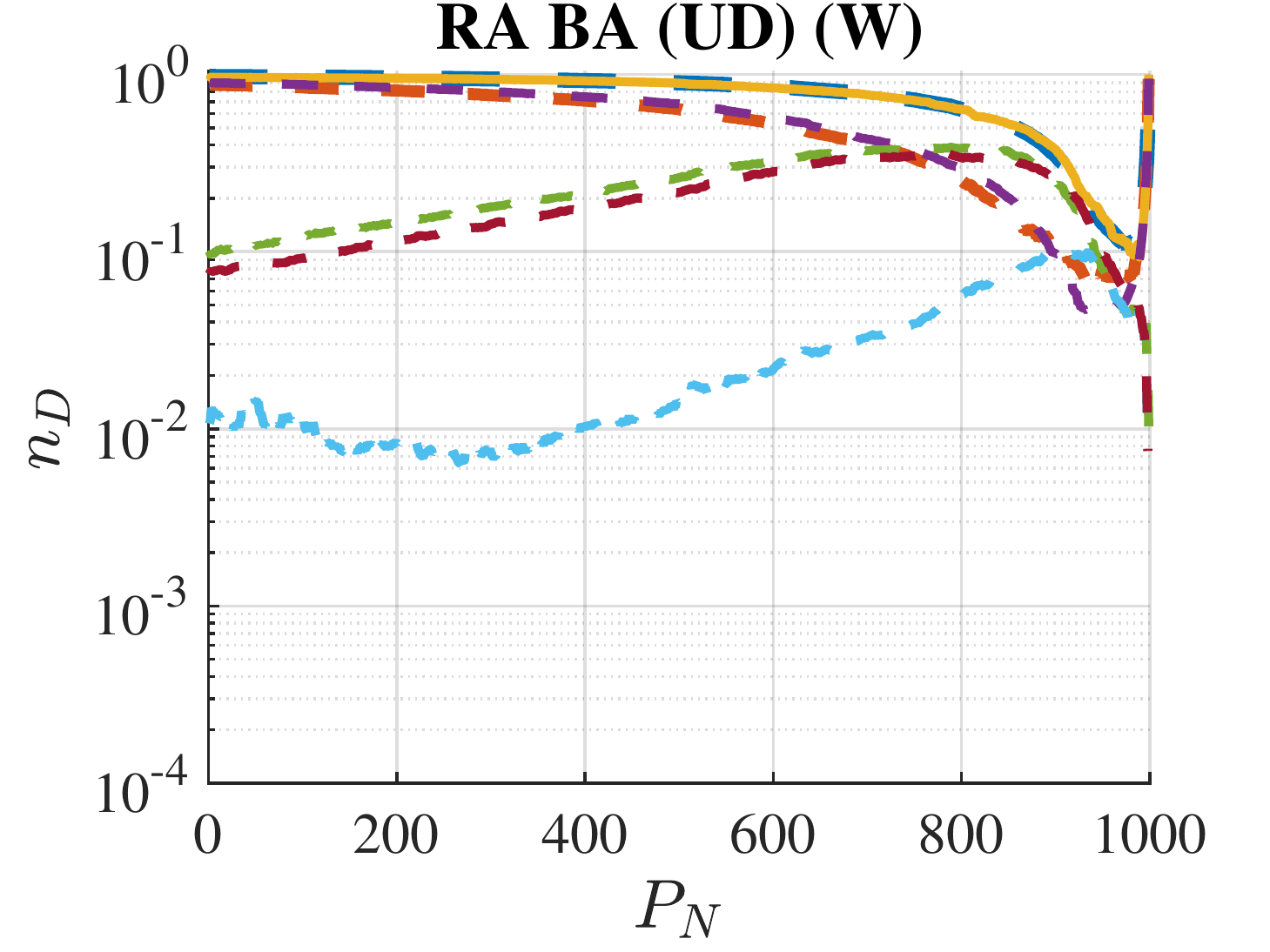}%
\label{a}}
\subfloat[]{\includegraphics[width=0.2\textwidth]{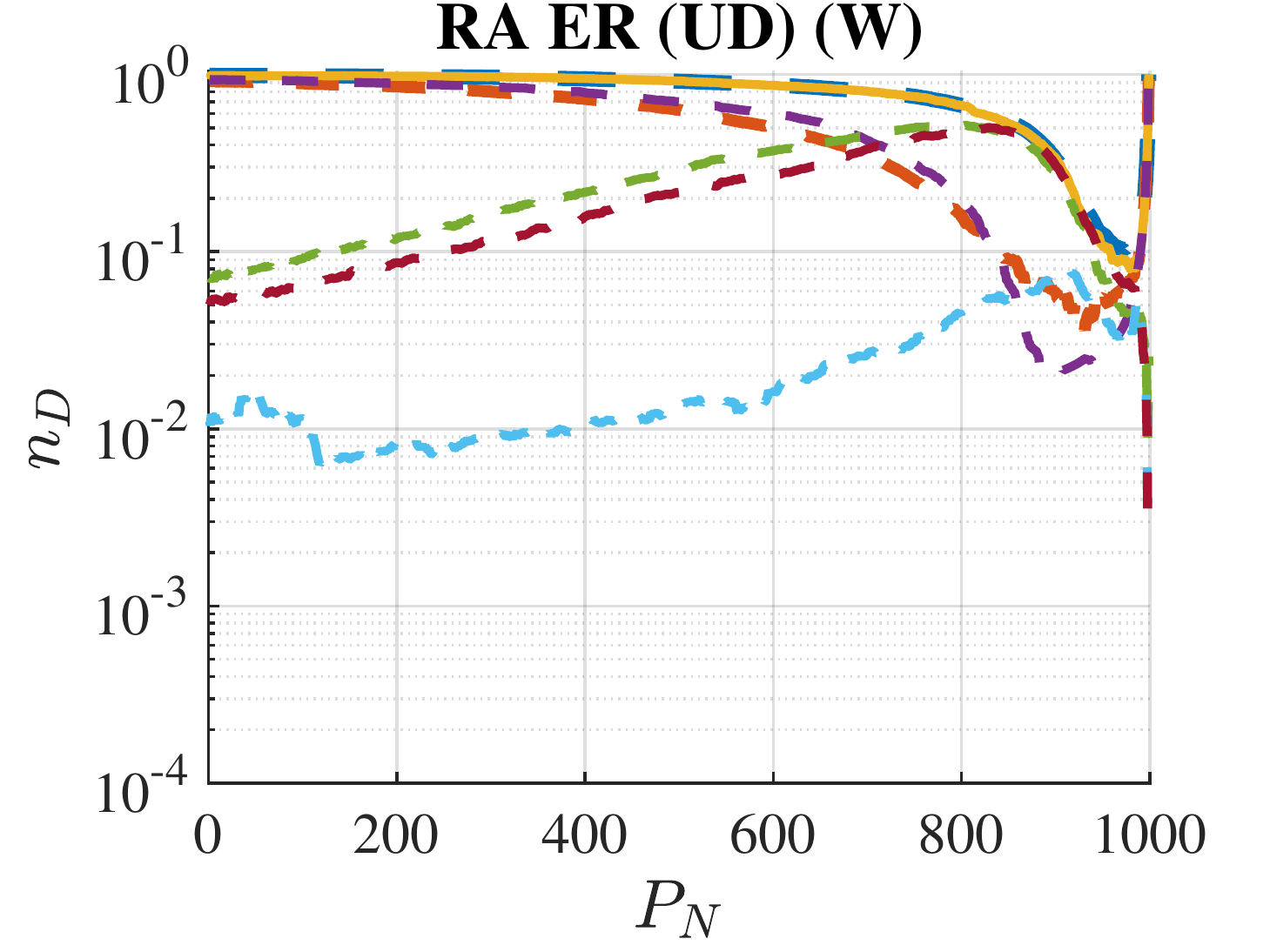}%
\label{b}}
\subfloat[]{\includegraphics[width=0.2\textwidth]{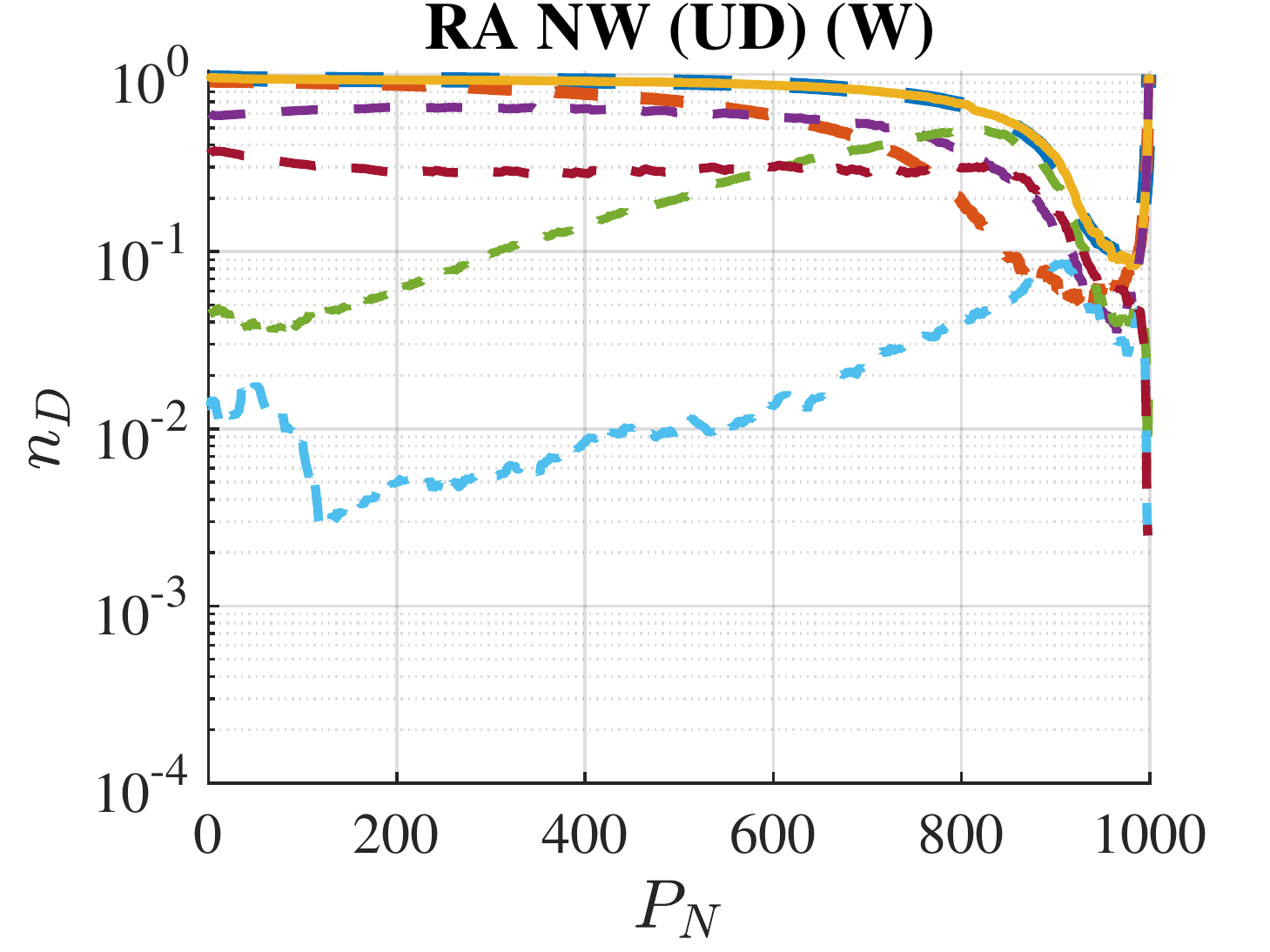}%
\label{c}}
\subfloat[]{\includegraphics[width=0.2\textwidth]{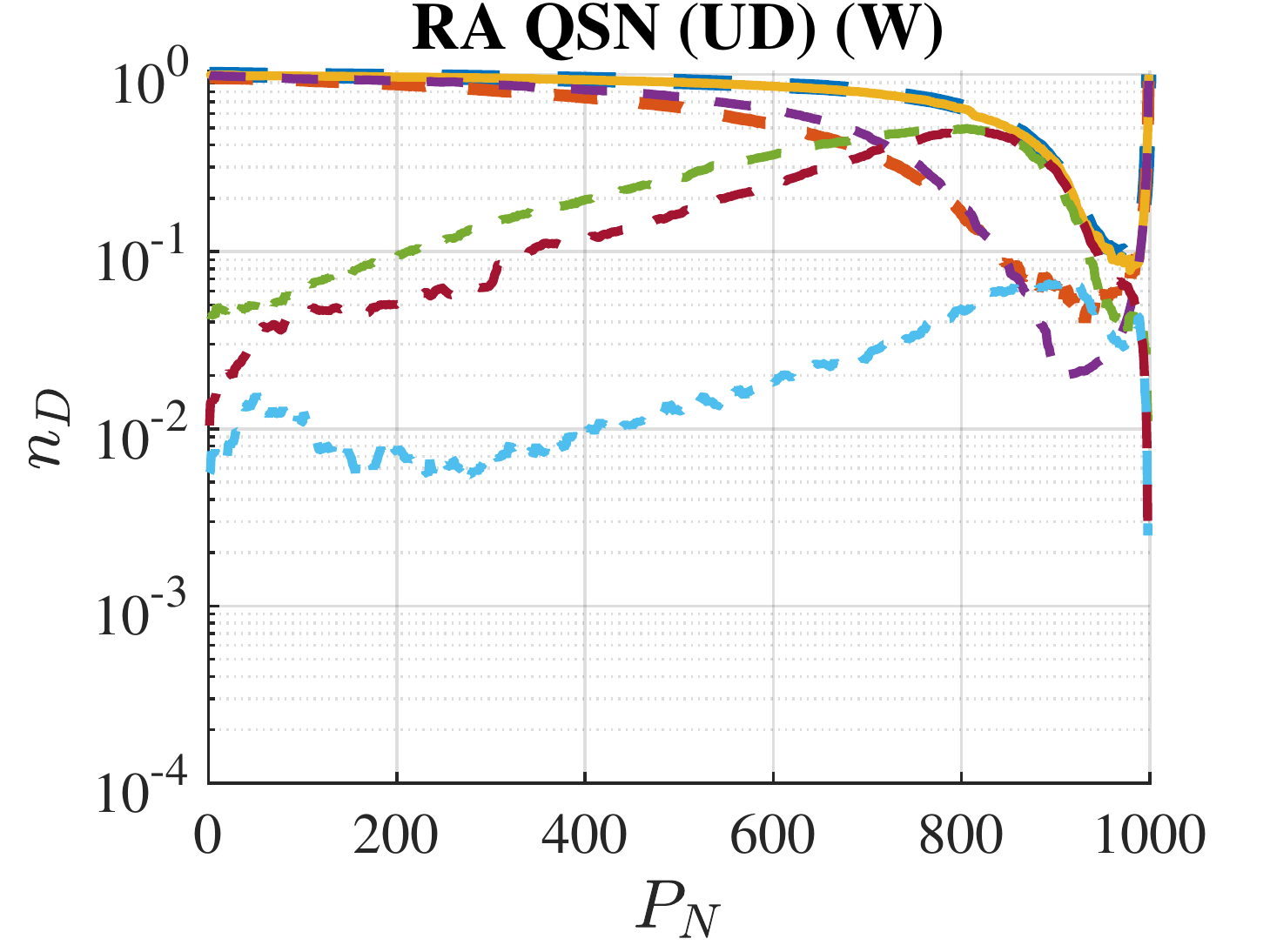}%
\label{d}}
\subfloat[]{\includegraphics[width=0.2\textwidth]{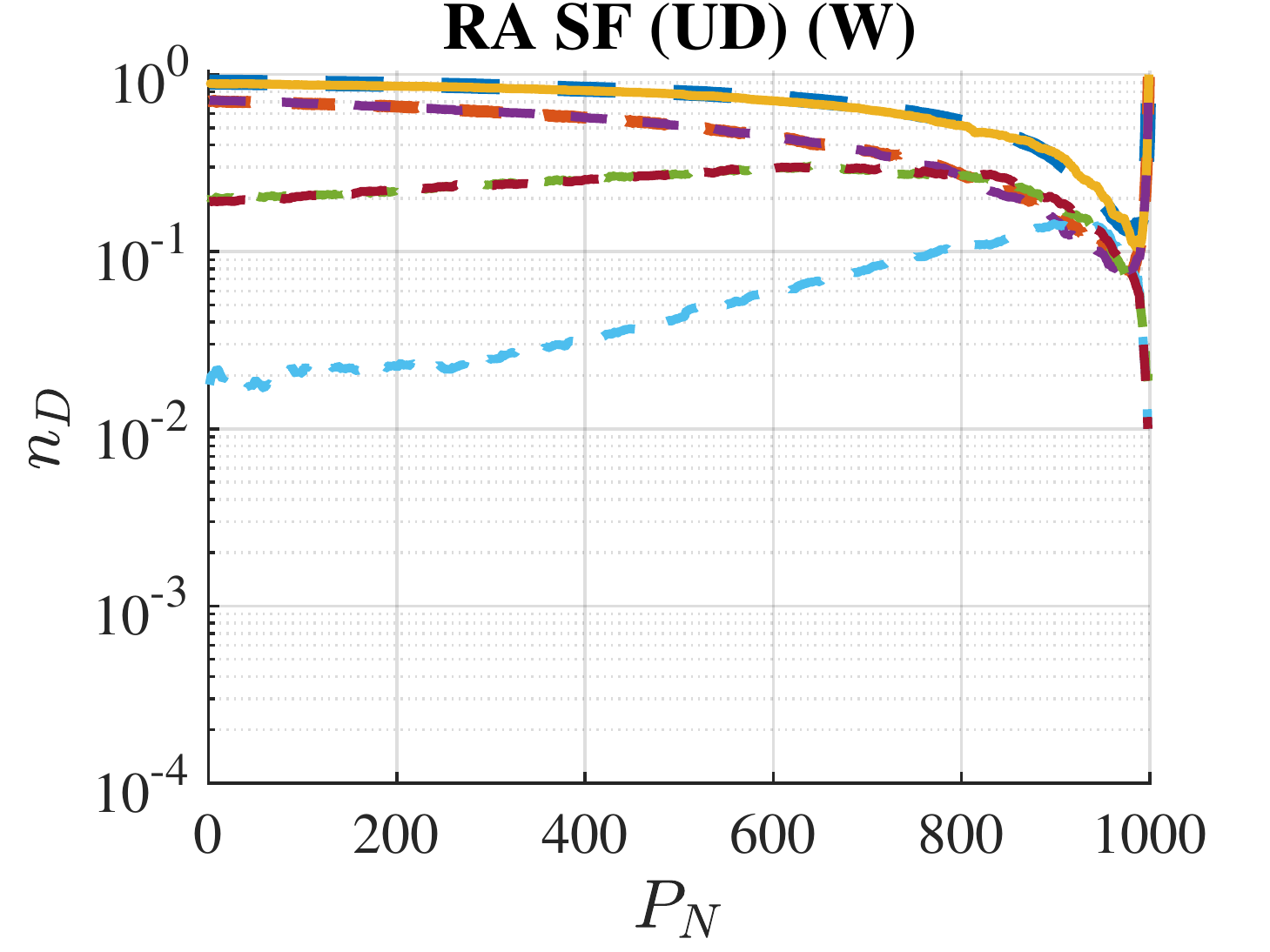}%
\label{d}}

\subfloat[]{\includegraphics[width=0.2\textwidth]{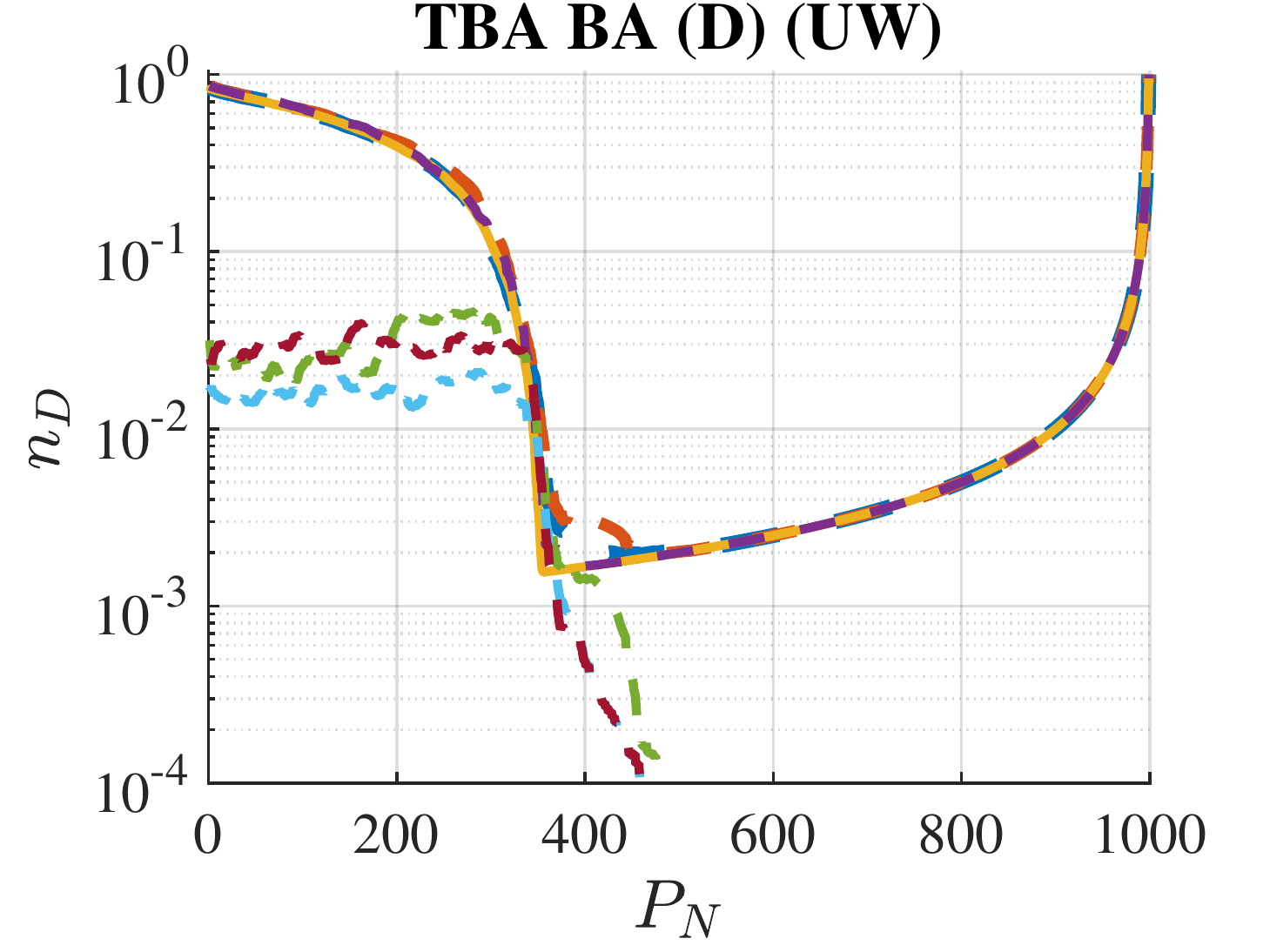}%
\label{a}}
\subfloat[]{\includegraphics[width=0.2\textwidth]{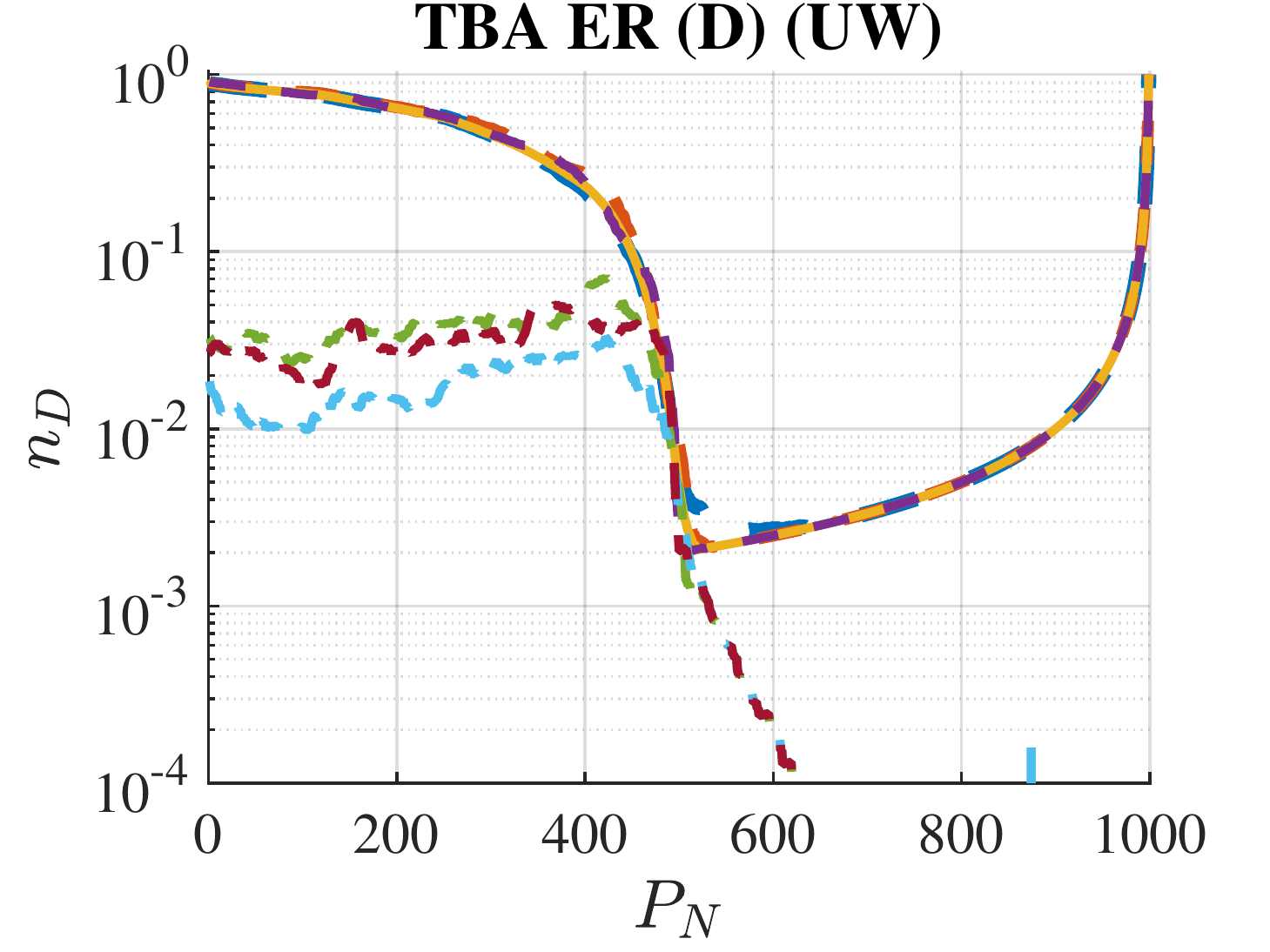}%
\label{b}}
\subfloat[]{\includegraphics[width=0.2\textwidth]{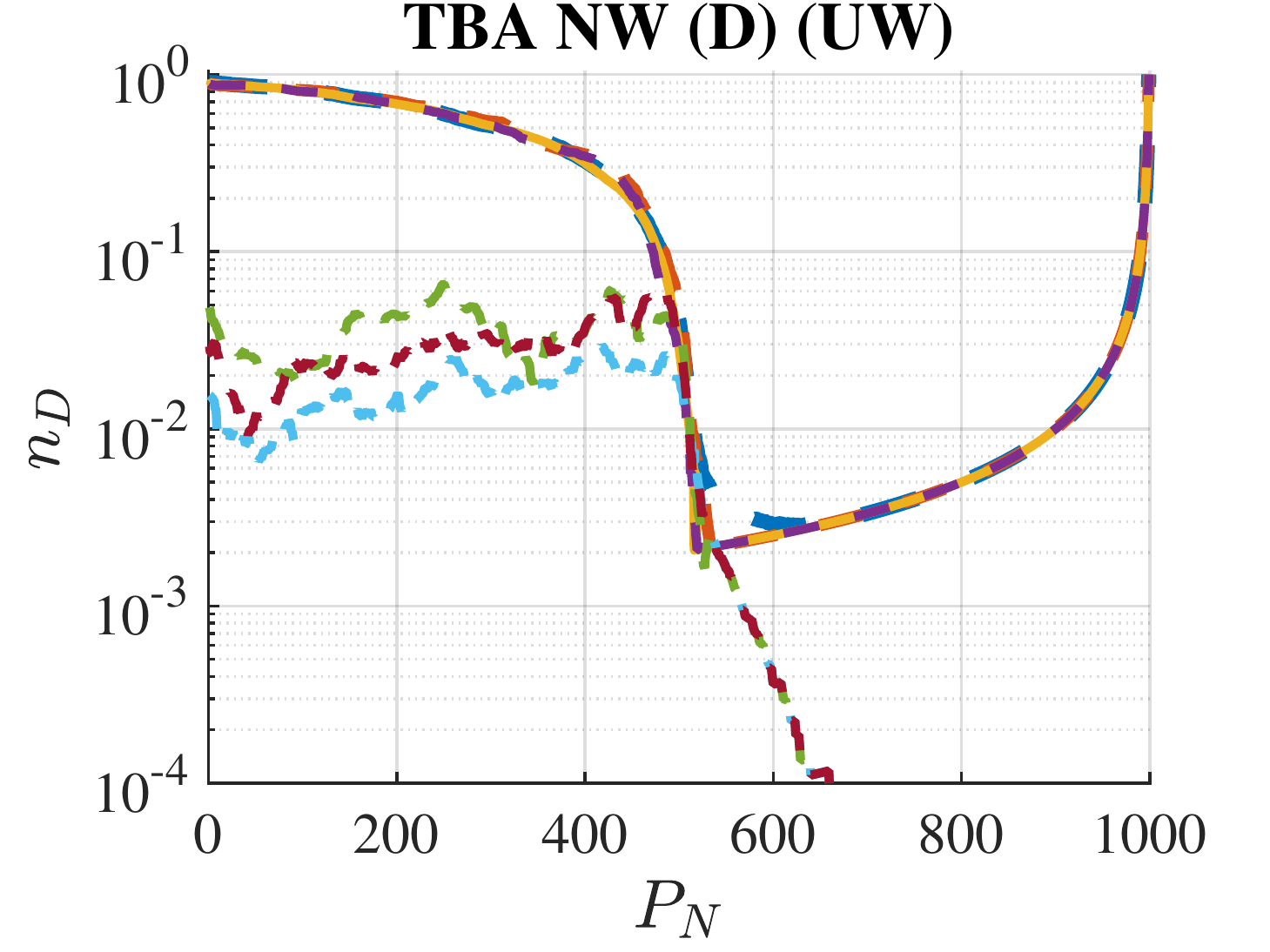}%
\label{c}}
\subfloat[]{\includegraphics[width=0.2\textwidth]{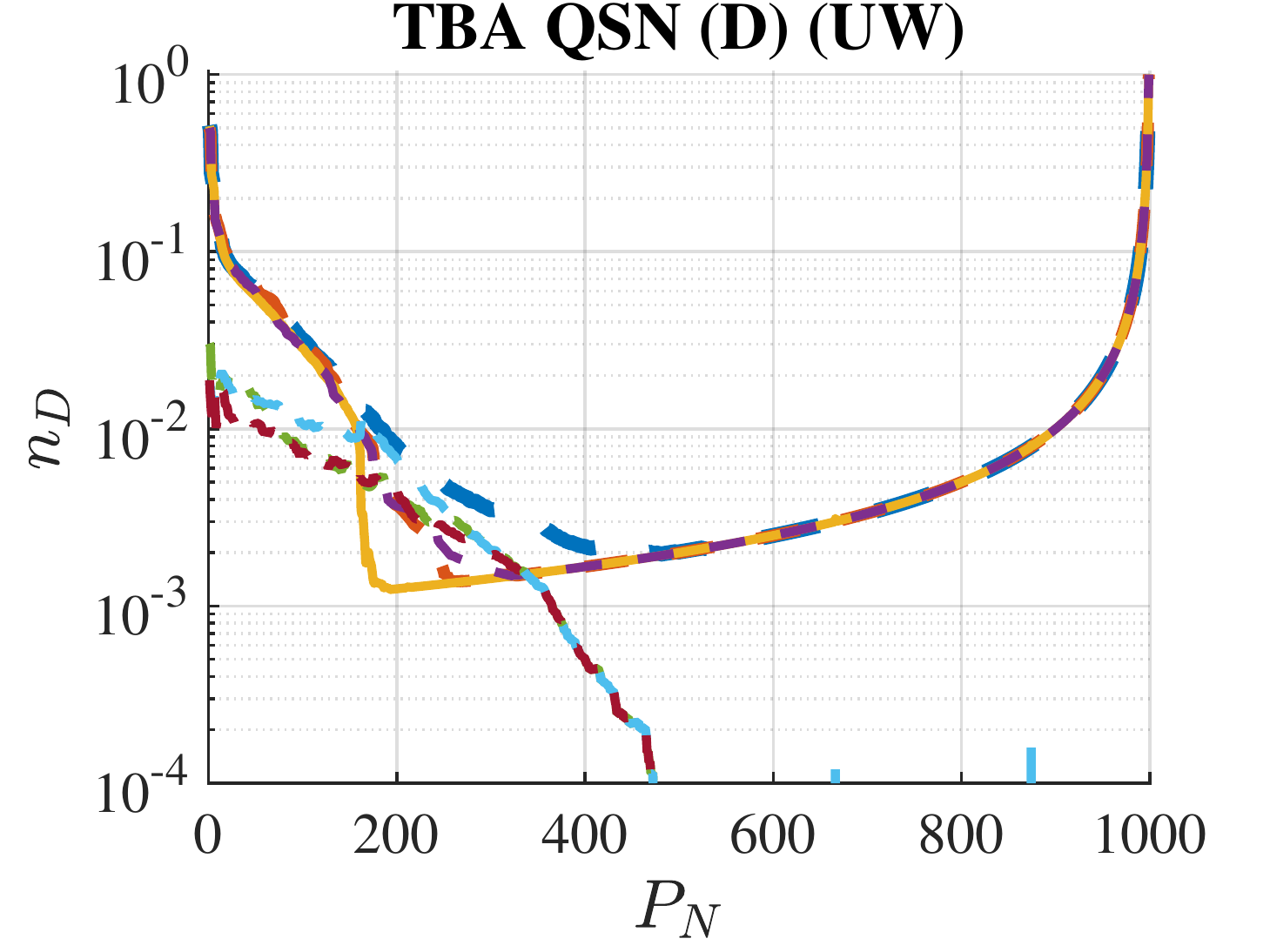}%
\label{d}}
\subfloat[]{\includegraphics[width=0.2\textwidth]{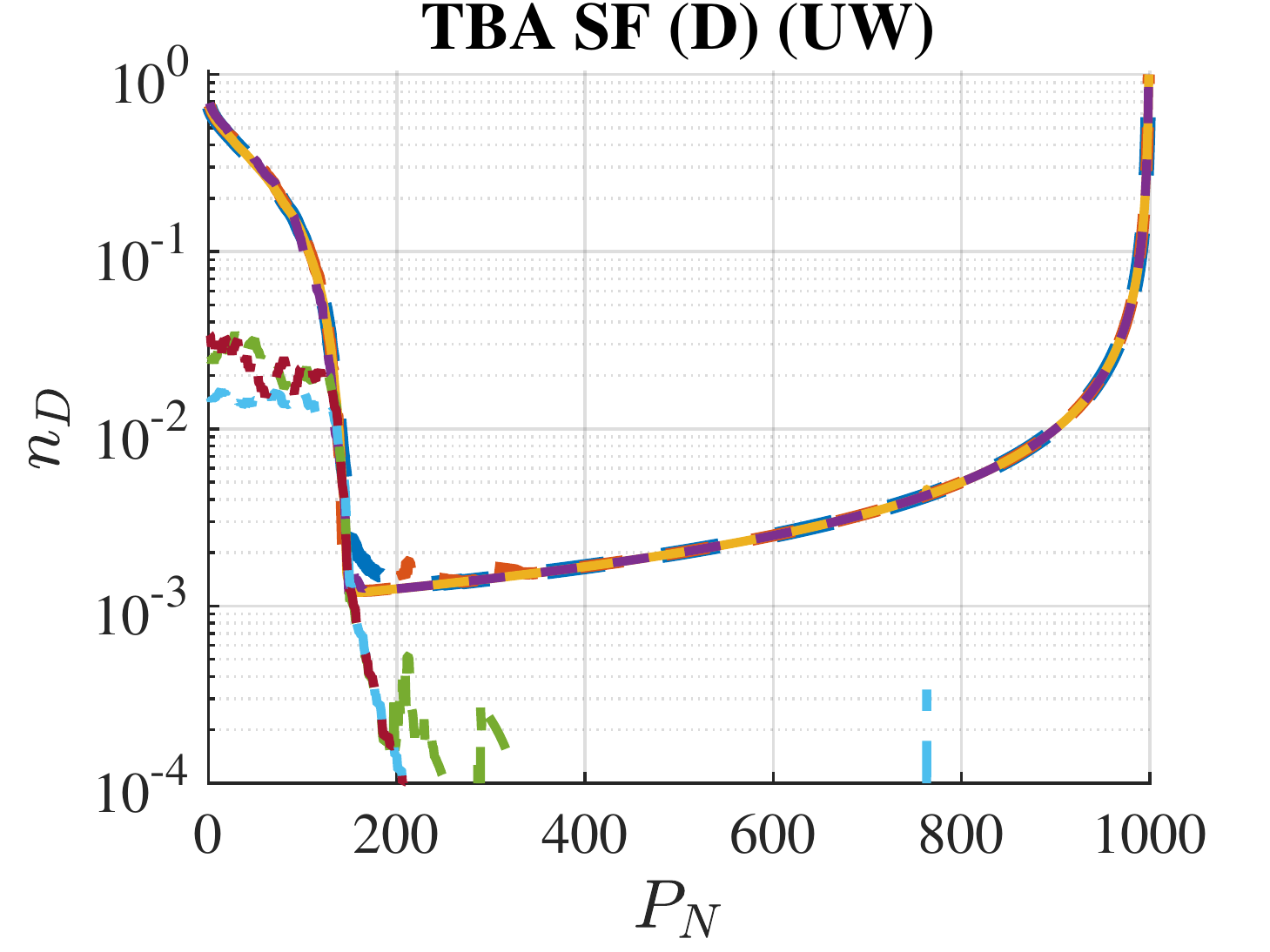}%
\label{d}}

\subfloat[]{\includegraphics[width=0.2\textwidth]{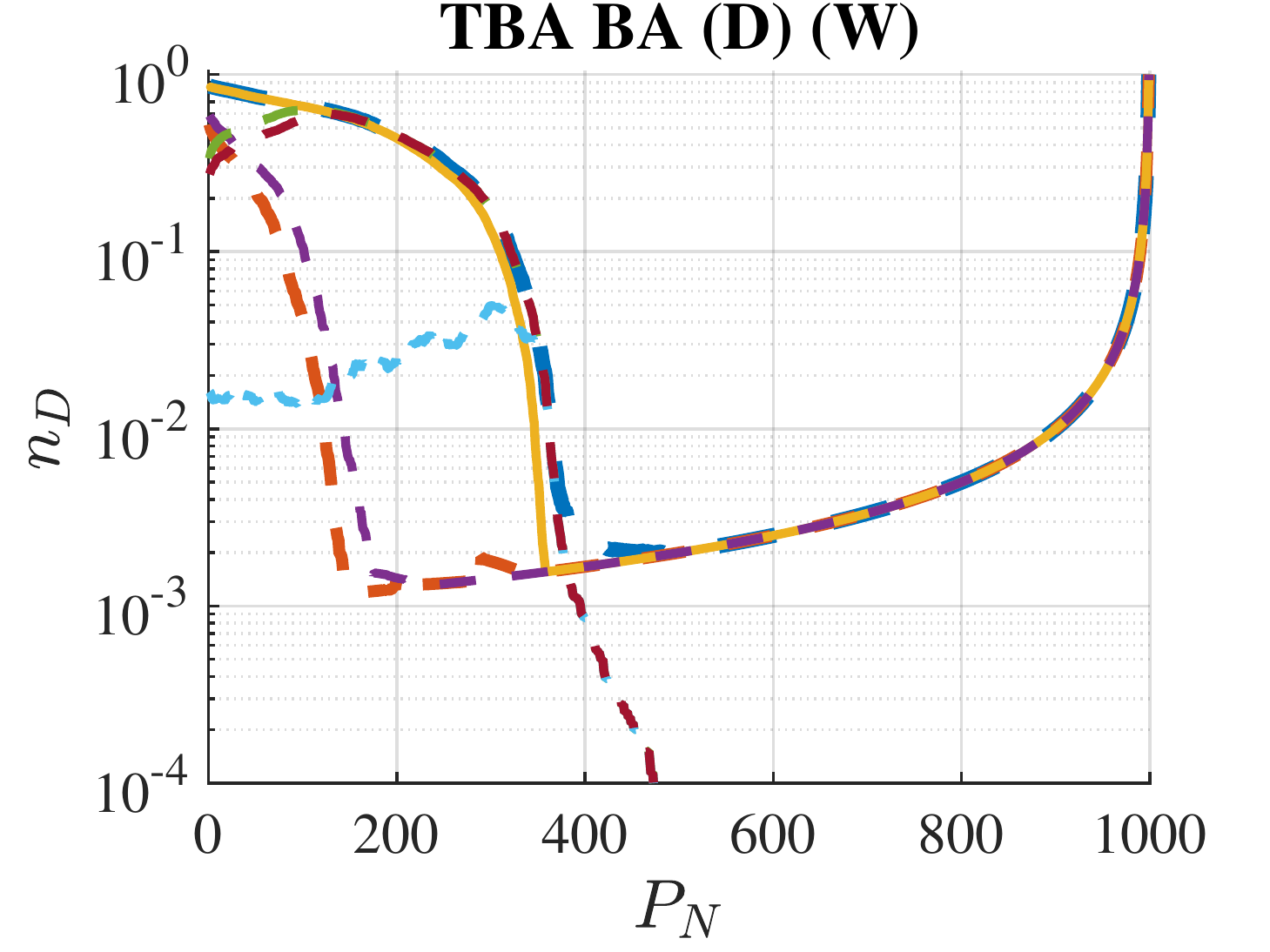}%
\label{a}}
\subfloat[]{\includegraphics[width=0.2\textwidth]{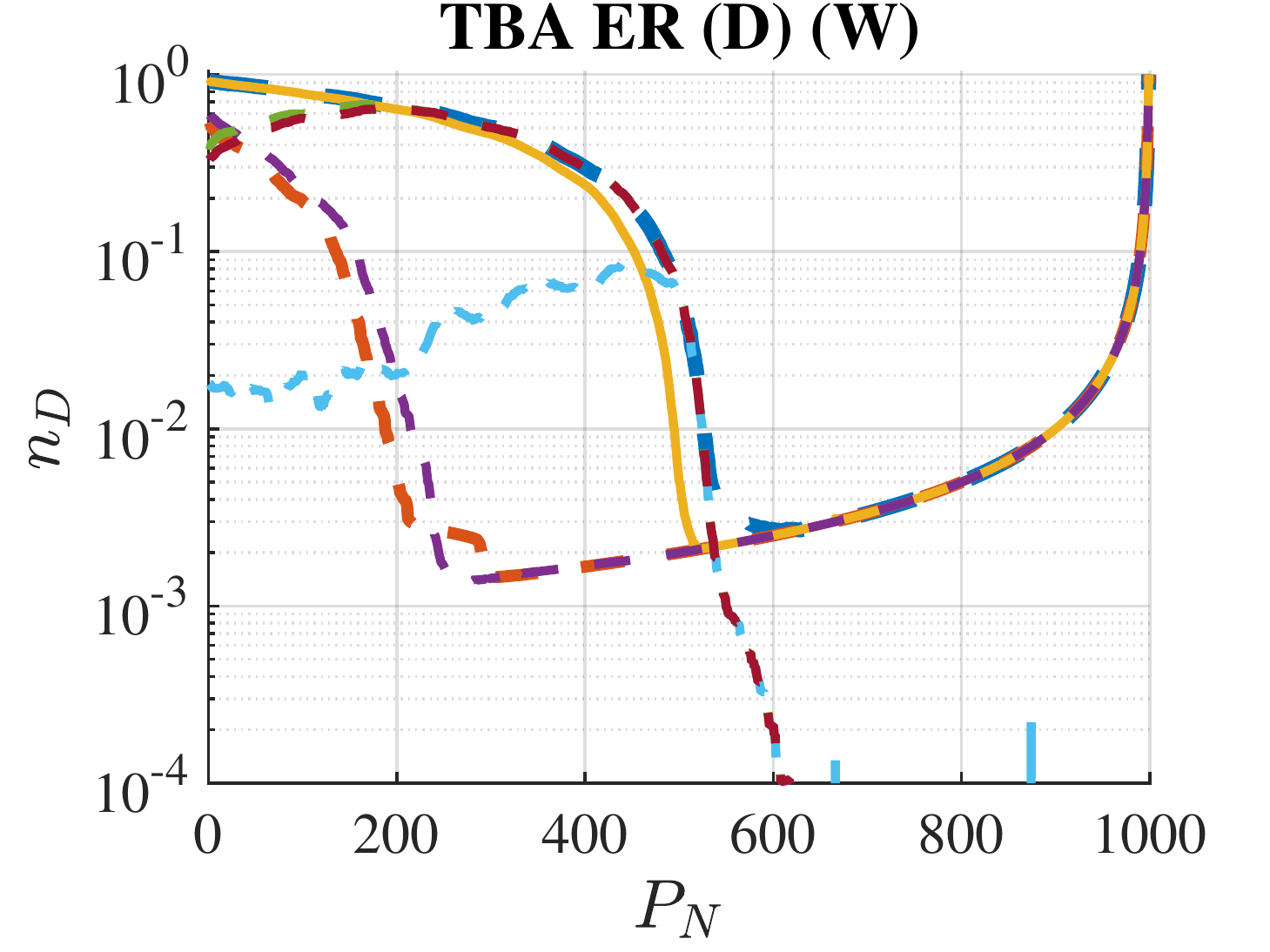}%
\label{b}}
\subfloat[]{\includegraphics[width=0.2\textwidth]{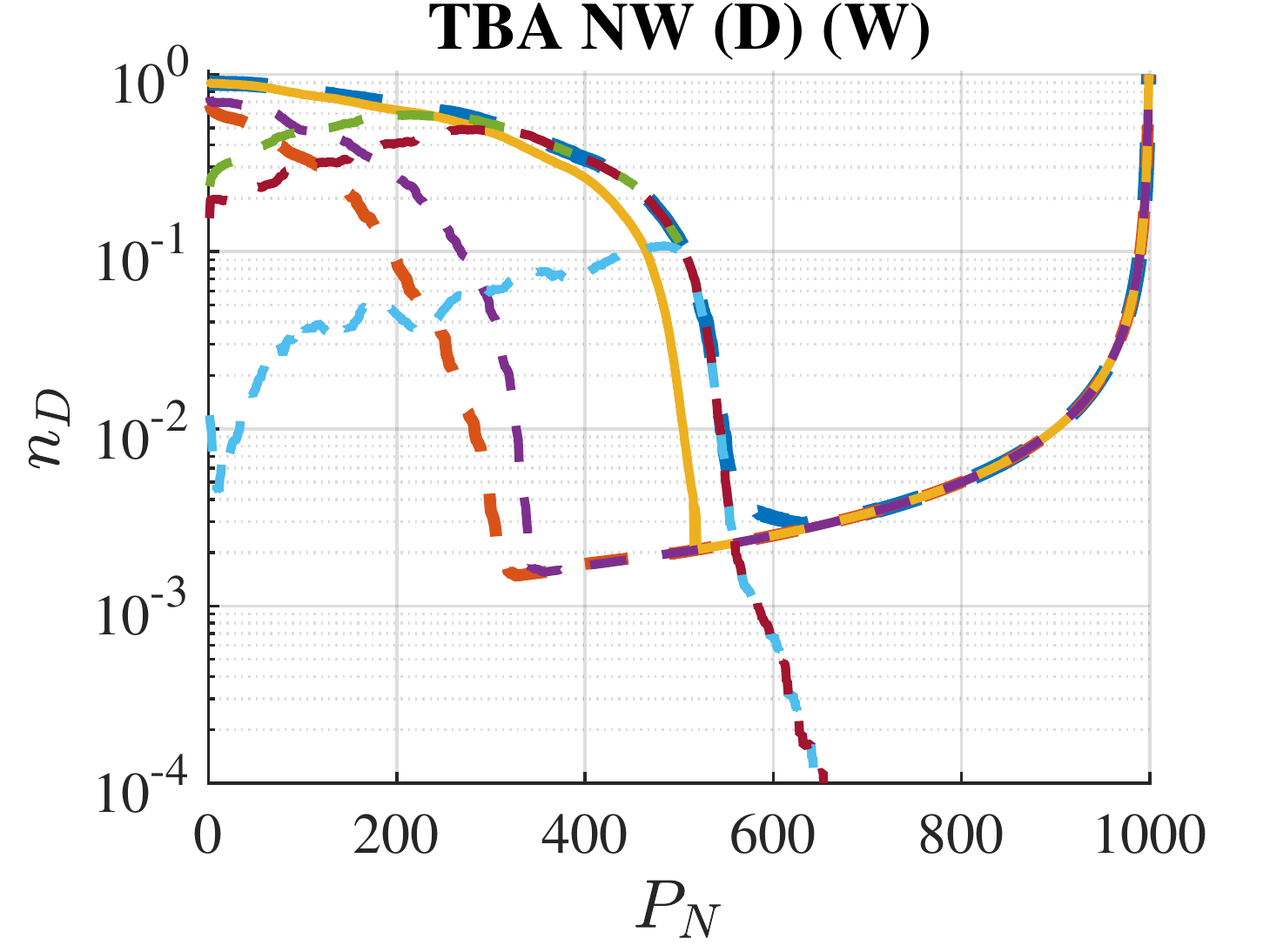}%
\label{c}}
\subfloat[]{\includegraphics[width=0.2\textwidth]{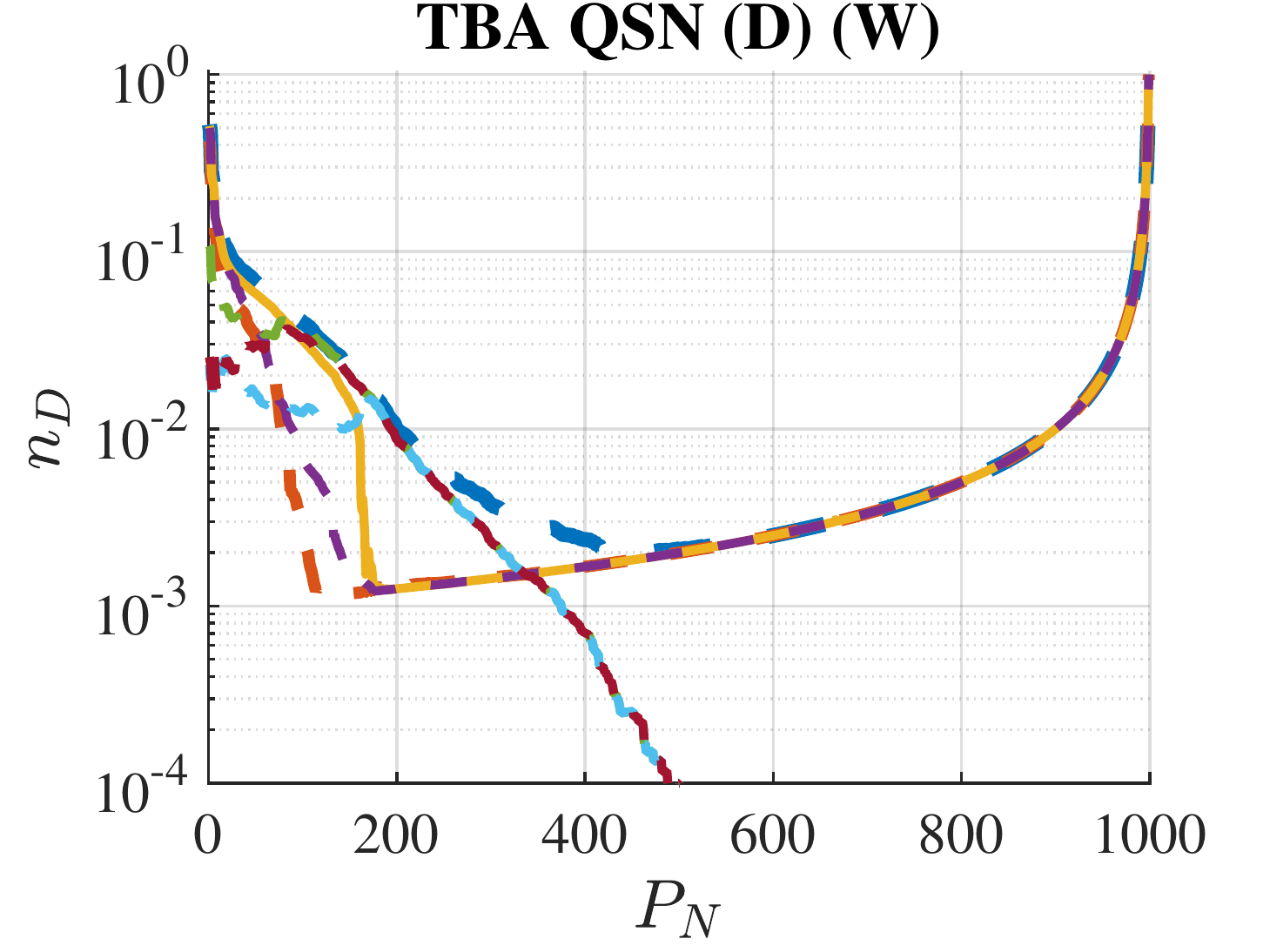}%
\label{d}}
\subfloat[]{\includegraphics[width=0.2\textwidth]{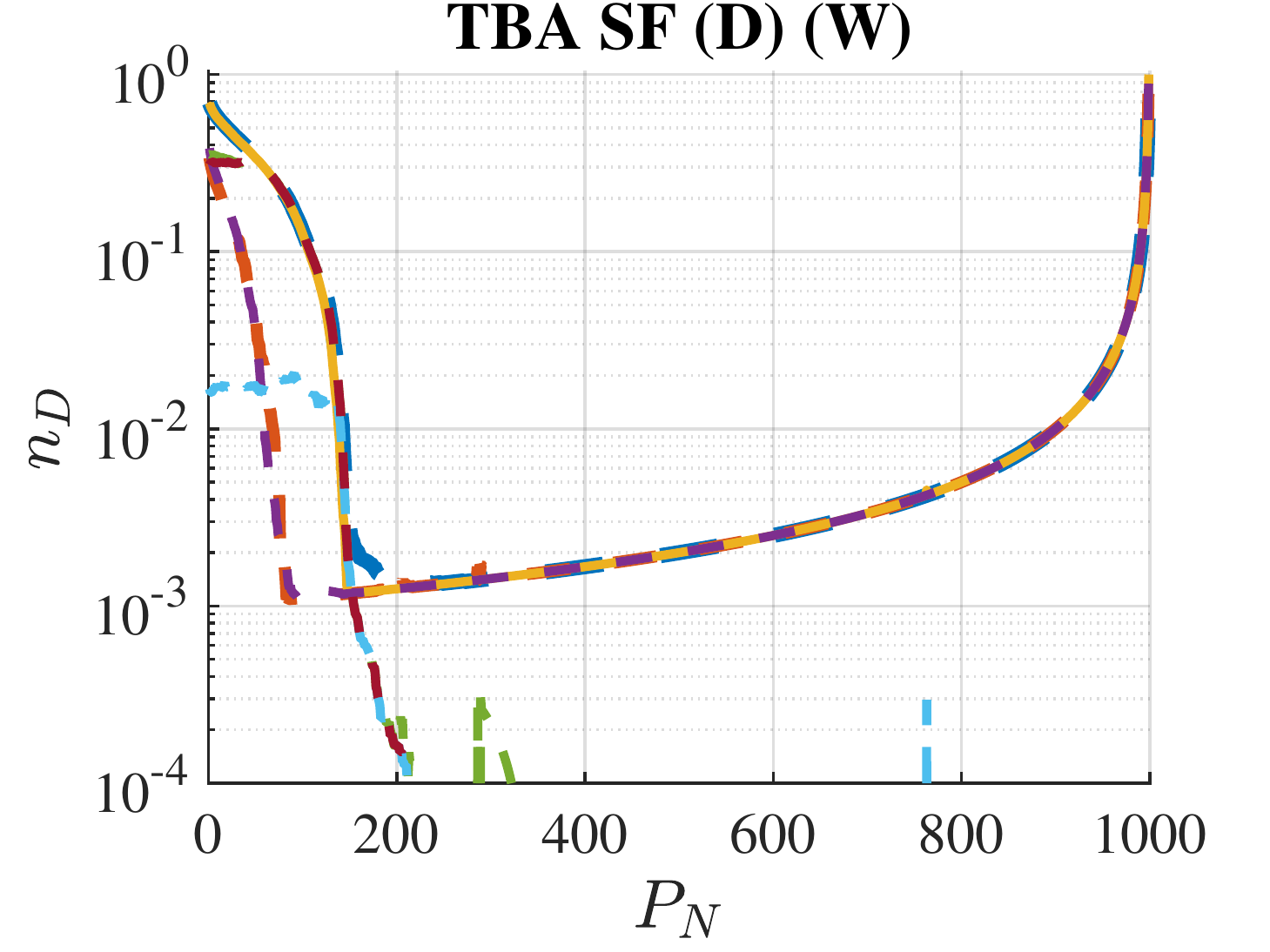}%
\label{d}}
\caption{Precision comparison and generalization ability evaluation of NRL-GT, CNN-RP~\cite{68}, and mCNN-RP~\cite{77} for connectivity robustness learning under RA and TBA. The experimental datasets include directed and undirected, weighted and unweighted BA, ER, NW, QSN, and SF networks.}
\label{fig_sim}
\end{figure*}

\begin{table*}[!h]
\centering
\caption{Comparison of average errors of NRL-GT, CNN-RP~\cite{68} and mCNN-RP~\cite{77} for connectivity robustness learning under RA and TBA. The experimental datasets include directed and undirected, weighted and unweighted BA, ER, NW, QSN, and SF networks.\label{tab:table2}}
\begin{tabular}{|ccc|c|c|c|c|c|}
\hline
\multicolumn{3}{|c|}{Average Learning Error $\overline{\xi} $}                                                                                                                                                                                                                            & BA             & ER             & NW             & QSN            & SF             \\ \hline
\multicolumn{1}{|c|}{\multirow{6}{*}{\begin{tabular}[c]{@{}c@{}}Connectivity Robustness\\ of complex networks\\ under RA\end{tabular}}}  & \multicolumn{1}{c|}{\multirow{3}{*}{\begin{tabular}[c]{@{}c@{}}Testing on undirected\\ unweighted networks\end{tabular}}}            & CNN-RP  & 0.030           & 0.031          & 0.029          & 0.031          & \textbf{0.054} \\ \cline{3-8} 
\multicolumn{1}{|c|}{}                                                                                                                   & \multicolumn{1}{c|}{}                                                                                                                & mCNN-RP & \textbf{0.028} & 0.028          &0.025          & 0.027          & \textbf{0.054} \\ \cline{3-8} 
\multicolumn{1}{|c|}{}                                                                                                                   & \multicolumn{1}{c|}{}                                                                                                                & NRL-GT  & \textbf{0.028} & \textbf{0.021} & \textbf{0.021} & \textbf{0.023} & 0.056          \\ \cline{2-8} 
\multicolumn{1}{|c|}{}                                                                                                                   & \multicolumn{1}{c|}{\multirow{3}{*}{\begin{tabular}[c]{@{}c@{}}Generalization tests on\\ undirected weighted networks\end{tabular}}} & CNN-RP  & 0.232          & 0.254          & 0.198          & 0.233          & 0.237          \\ \cline{3-8} 
\multicolumn{1}{|c|}{}                                                                                                                   & \multicolumn{1}{c|}{}                                                                                                                & mCNN-RP & 0.202          & 0.212          & 0.271          & 0.183          & 0.238          \\ \cline{3-8} 
\multicolumn{1}{|c|}{}                                                                                                                   & \multicolumn{1}{c|}{}                                                                                                                & NRL-GT  & \textbf{0.028} & \textbf{0.022} & \textbf{0.020} & \textbf{0.022} & \textbf{0.058} \\ \hline
\multicolumn{1}{|c|}{\multirow{6}{*}{\begin{tabular}[c]{@{}c@{}}Connectivity Robustness\\ of complex networks\\ under TBA\end{tabular}}} & \multicolumn{1}{c|}{\multirow{3}{*}{\begin{tabular}[c]{@{}c@{}}Testing on directed\\ unweighted networks\end{tabular}}}              & CNN-RP  & 0.011          & 0.019          & 0.019          & 0.003          & 0.003          \\ \cline{3-8} 
\multicolumn{1}{|c|}{}                                                                                                                   & \multicolumn{1}{c|}{}                                                                                                                & mCNN-RP & 0.010          & 0.016          & 0.015          & \textbf{0.002} & 0.003          \\ \cline{3-8} 
\multicolumn{1}{|c|}{}                                                                                                                   & \multicolumn{1}{c|}{}                                                                                                                & NRL-GT  & \textbf{0.006} & \textbf{0.009} & \textbf{0.009} & 0.003          & \textbf{0.002} \\ \cline{2-8} 
\multicolumn{1}{|c|}{}                                                                                                                   & \multicolumn{1}{c|}{\multirow{3}{*}{\begin{tabular}[c]{@{}c@{}}Generalization tests on\\ directed weighted networks\end{tabular}}}   & CNN-RP  & 0.146          & 0.235          & 0.217          & 0.007          & 0.031          \\ \cline{3-8} 
\multicolumn{1}{|c|}{}                                                                                                                   & \multicolumn{1}{c|}{}                                                                                                                & mCNN-RP & 0.136          & 0.225          & 0.173          & 0.006          & 0.030          \\ \cline{3-8} 
\multicolumn{1}{|c|}{}                                                                                                                   & \multicolumn{1}{c|}{}                                                                                                                & NRL-GT  & \textbf{0.009} & \textbf{0.021} & \textbf{0.030} & \textbf{0.003} & \textbf{0.002} \\ \hline
\end{tabular}
\vspace{-0.2cm}
\end{table*}
\vspace{-0.4cm}
\subsection{Robustness Learning for Real-World Networks}
Some instances are selected in Network Repository$\footnote{http://networkrepository.com/}$ to evaluate the performance of different models for robustness learning on real-world networks, including circuit networks, power networks, brain networks, and so on. NRL-GT, PCR, iPCR, CNN-RP, and mCNN-RP used here are trained on synthetic networks. The details of the real-world networks are shown in Table S2 of SI, where Circuits (A)-(C) and Power (A)-(B) are randomly sampled from the circuit-3 networks $\footnote{https://networkrepository.com/circuit-3.php}$ and the U.S. power networks $\footnote{https://networkrepository.com/power-bcspwr10.php}$, respectively. We reduce these network sizes to 1000 by randomly removing a few nodes. Fig. 5 and Fig. S2 in SI represent controllability robustness learning results and connectivity robustness learning results of different methods on real-world networks, respectively. The mean learning errors of different methods are summarized in Table IV. For controllability robustness learning of real-world networks, NRL-GT has obvious advantages. Except for 145, and G43, NRL-GT achieves the smallest mean error. Although PCR has a smaller average learning error in 145 and G43, the corresponding robustness curve of PCR fluctuates greatly around the true value, which cannot reflect the true controllability robustness (see Figs. 5(l)-5(m)). For connectivity robustness learning of real-world networks, NRL-GT also has significant advantages. Except for Circuits(A), DDG, DW5, and DW7, NRL-GT achieves the most satisfactory performance. It is worth mentioning that on almost all circuit networks and power networks, NRL-GT outperforms CNN-based models both in terms of controllability robustness learning and connectivity robustness learning.
\begin{figure*}[!h]
\centering
\subfloat[]{\includegraphics[width=0.2\textwidth]{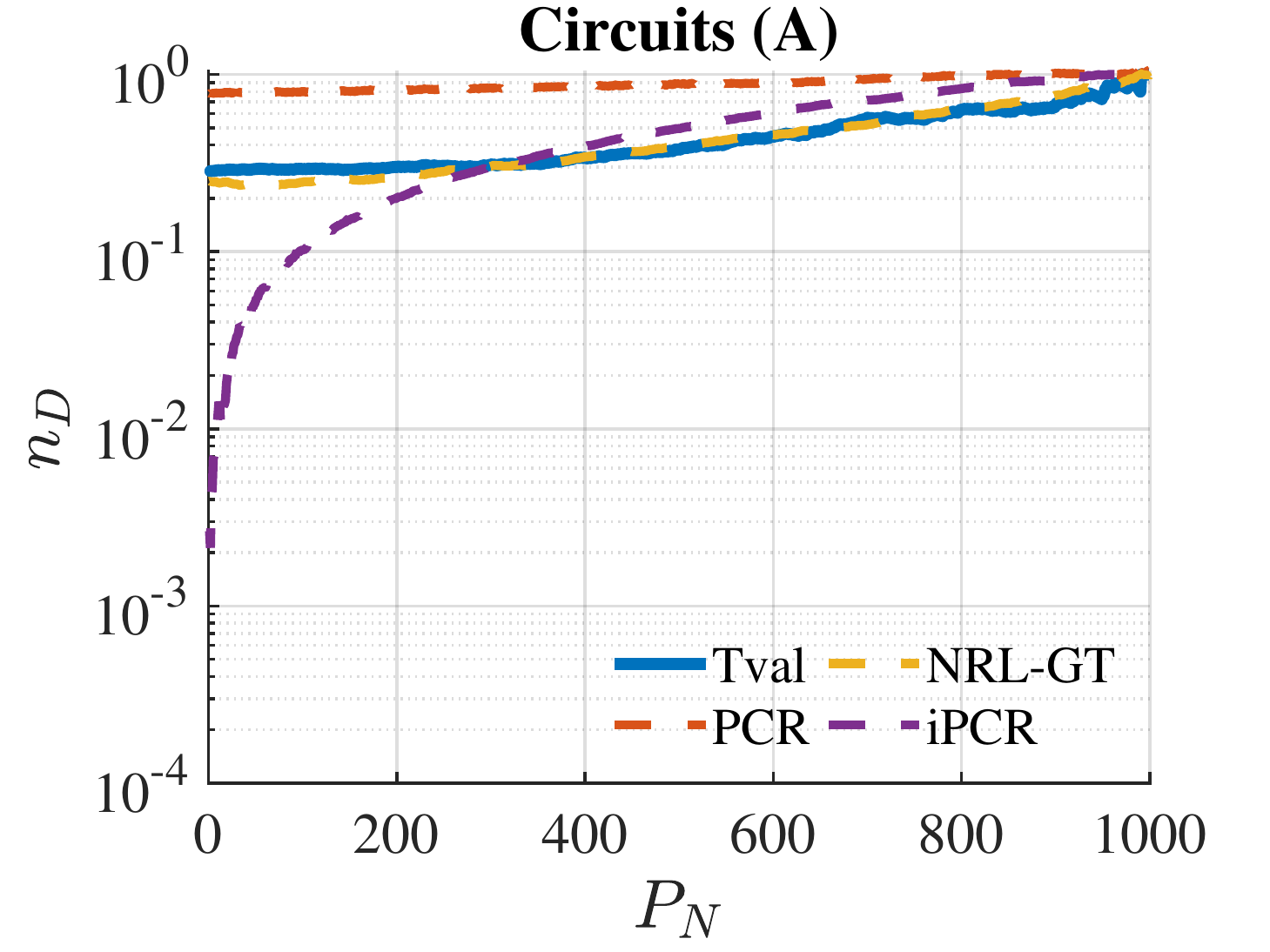}%
\label{a}}
\subfloat[]{\includegraphics[width=0.2\textwidth]{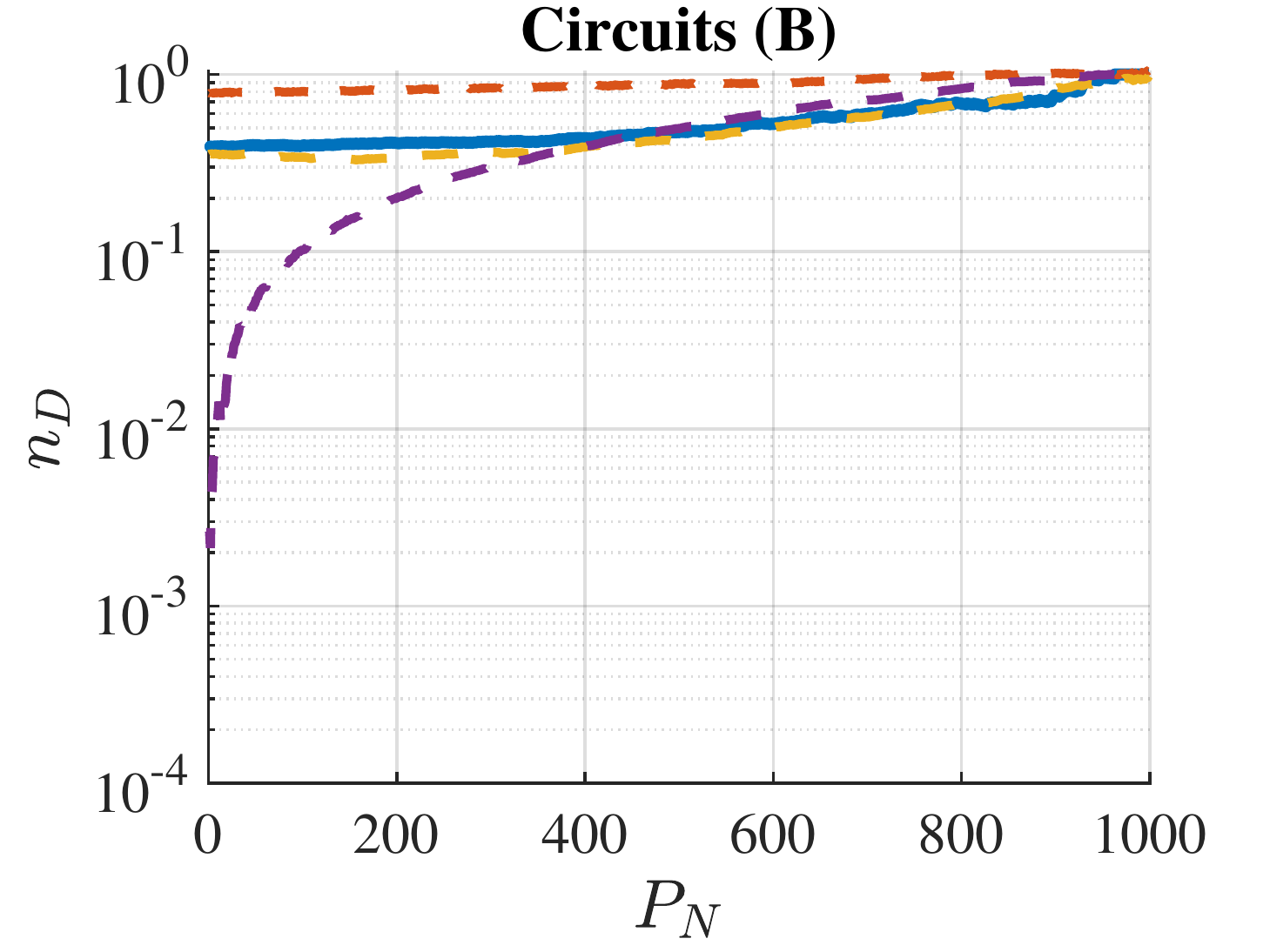}%
\label{b}}
\subfloat[]{\includegraphics[width=0.2\textwidth]{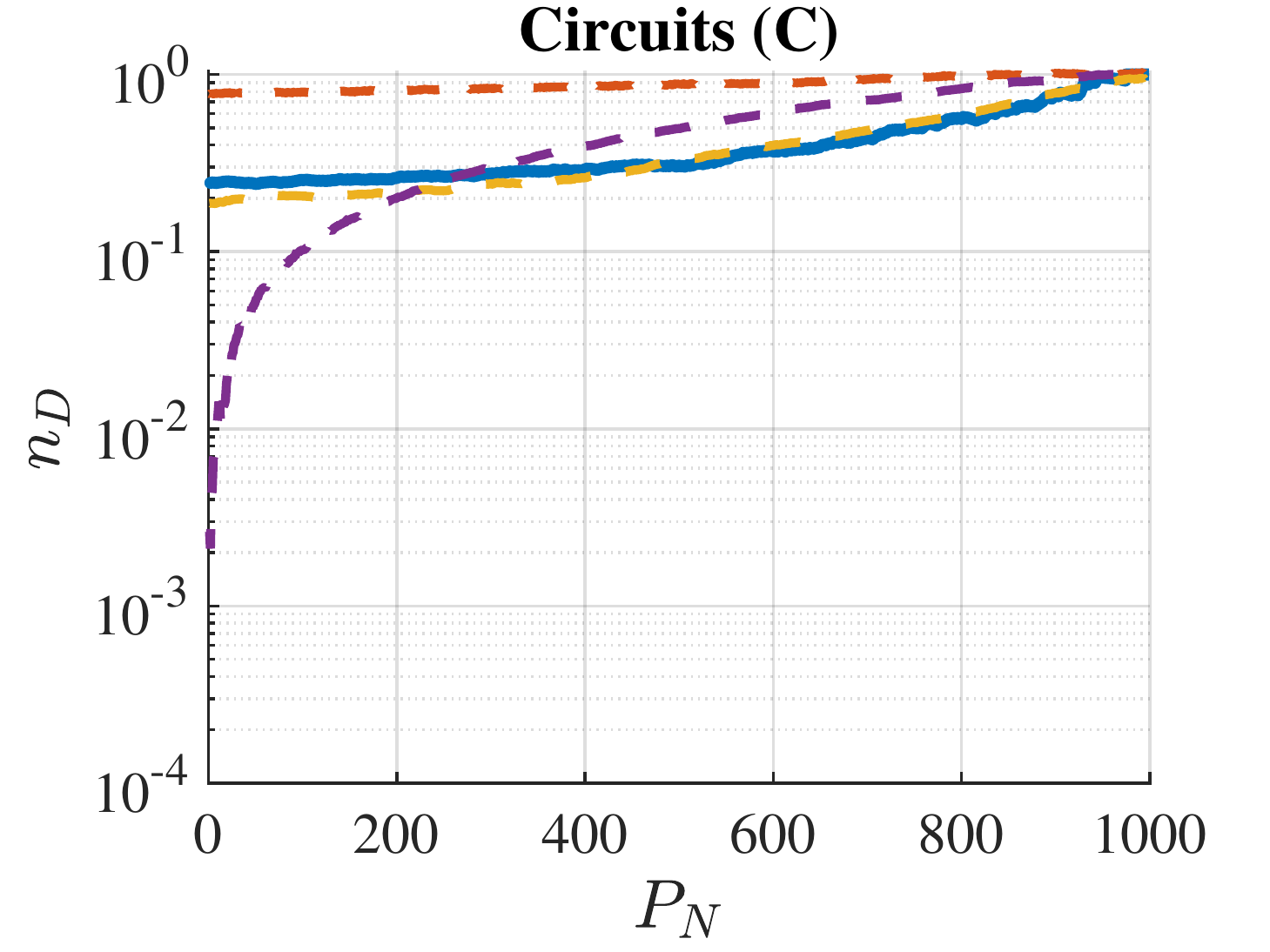}%
\label{c}}
\subfloat[]{\includegraphics[width=0.2\textwidth]{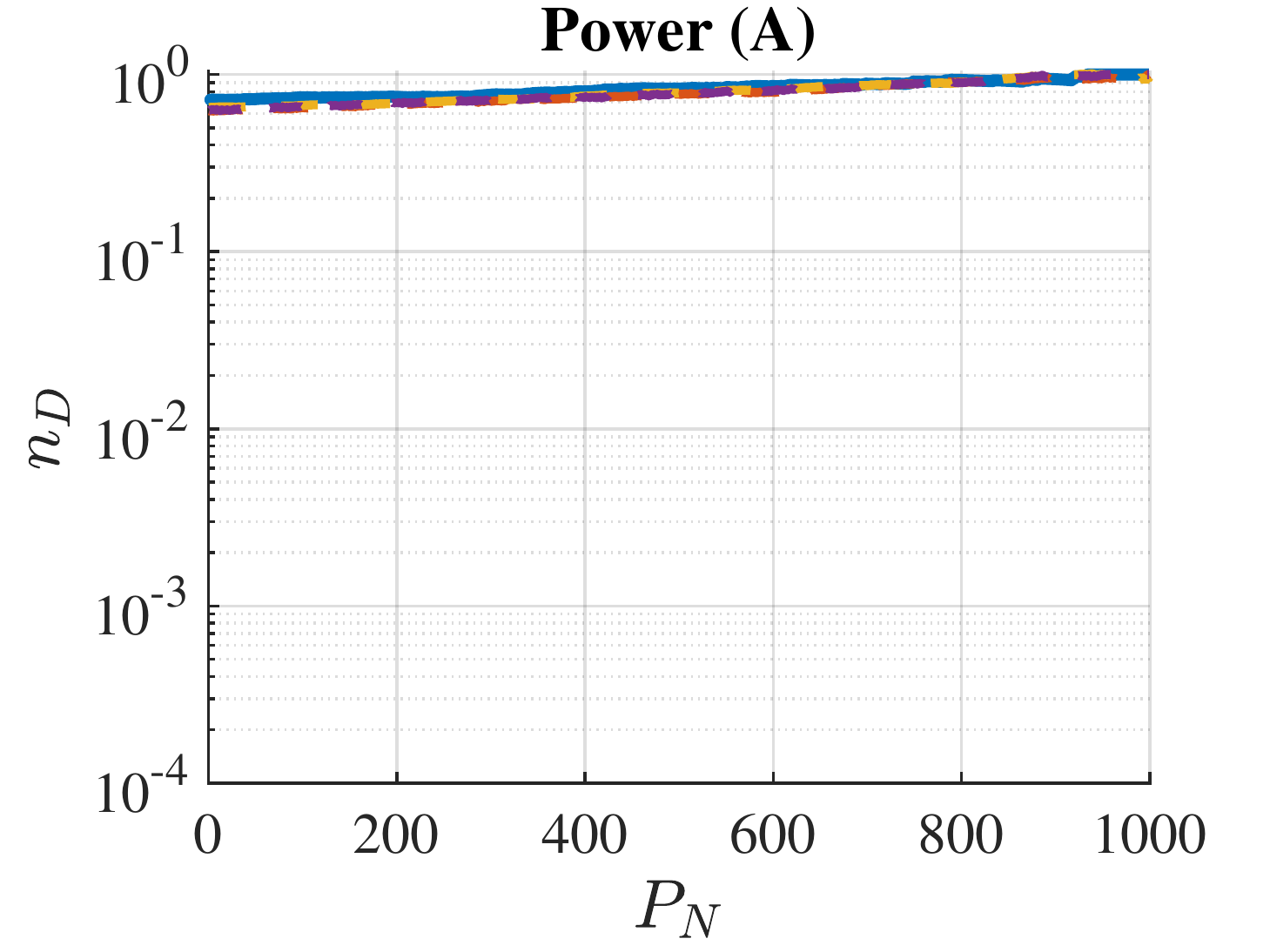}%
\label{d}}
\subfloat[]{\includegraphics[width=0.2\textwidth]{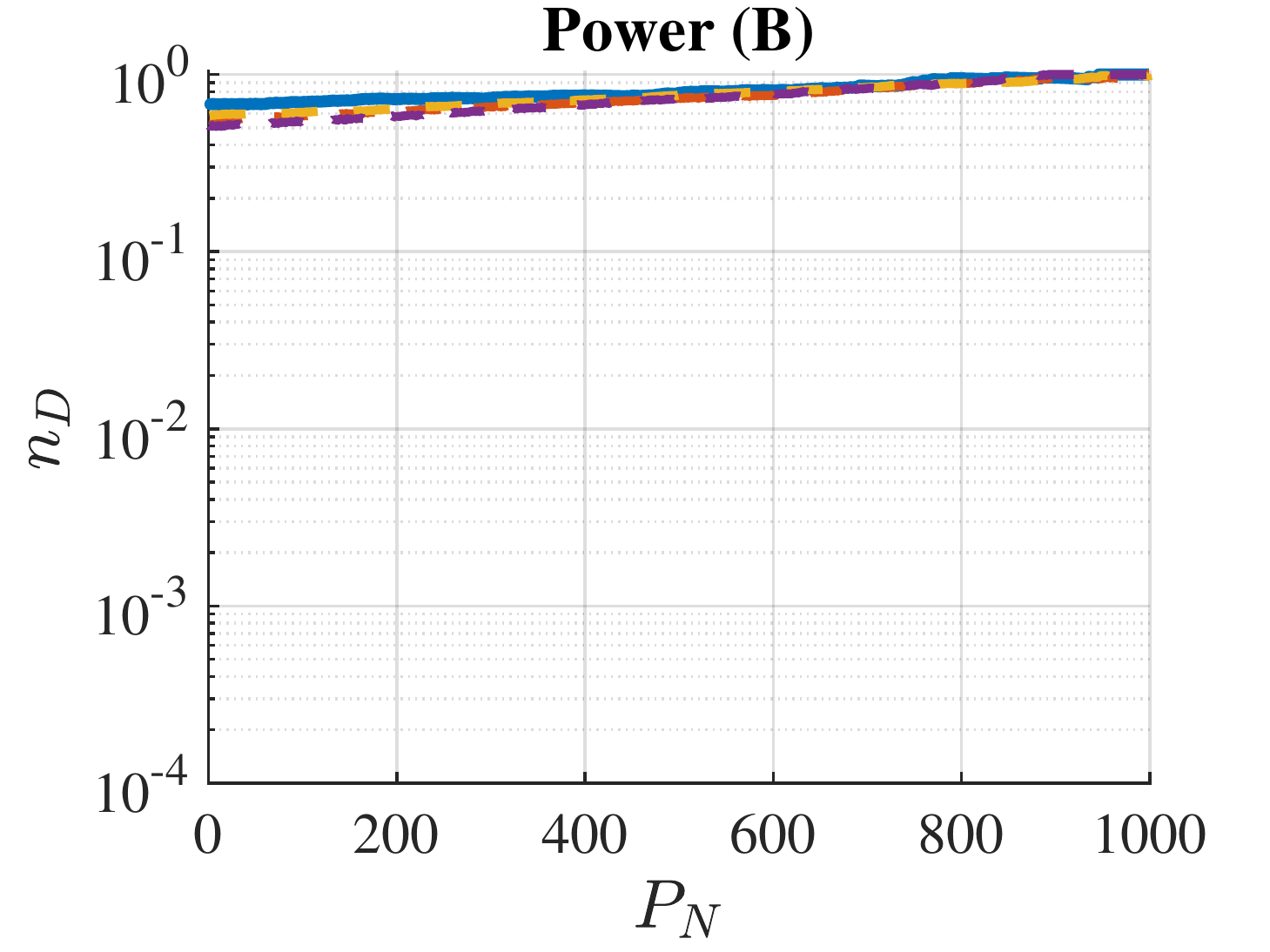}%
\label{d}}

\subfloat[]{\includegraphics[width=0.2\textwidth]{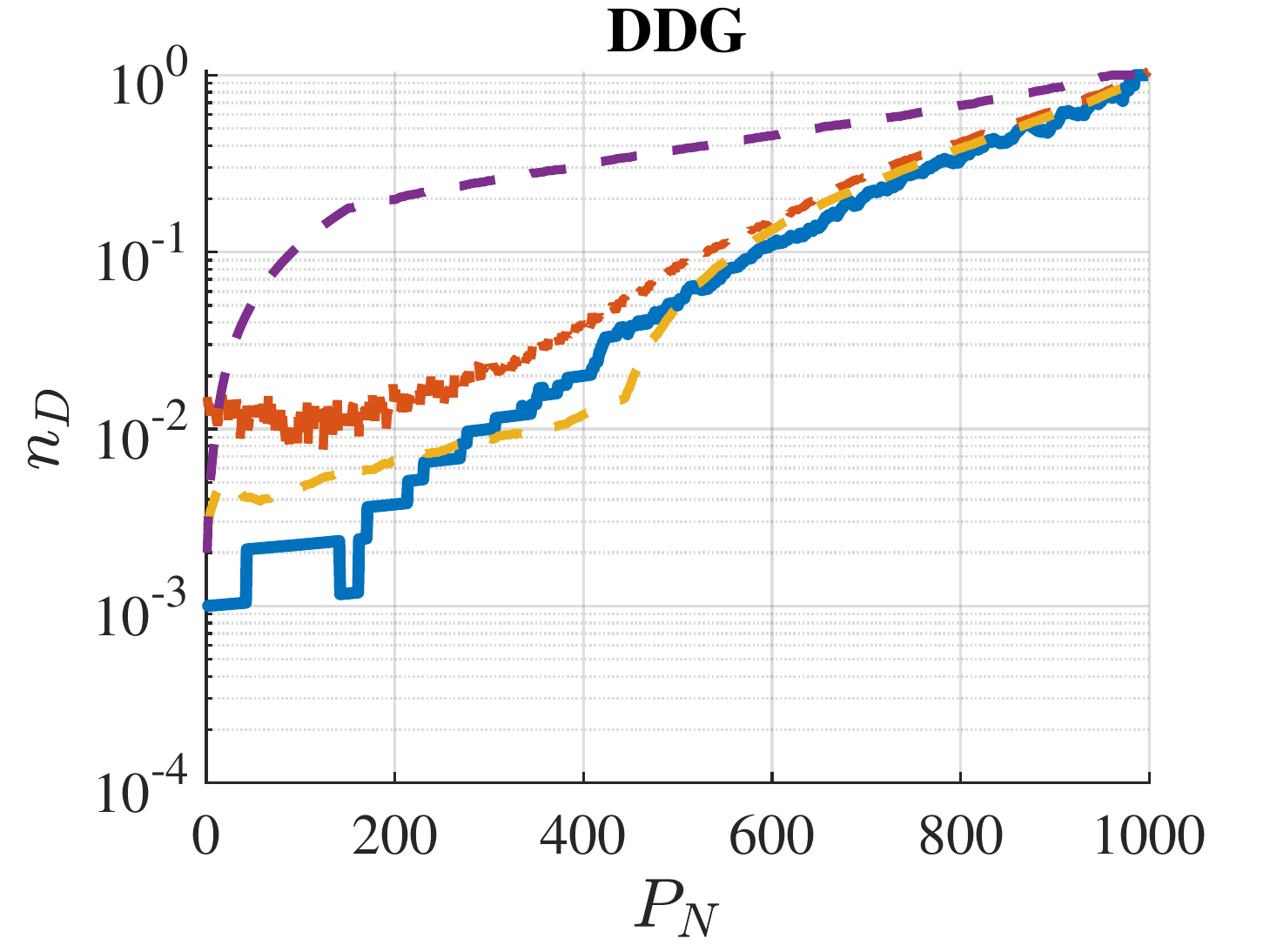}%
\label{a}}
\subfloat[]{\includegraphics[width=0.2\textwidth]{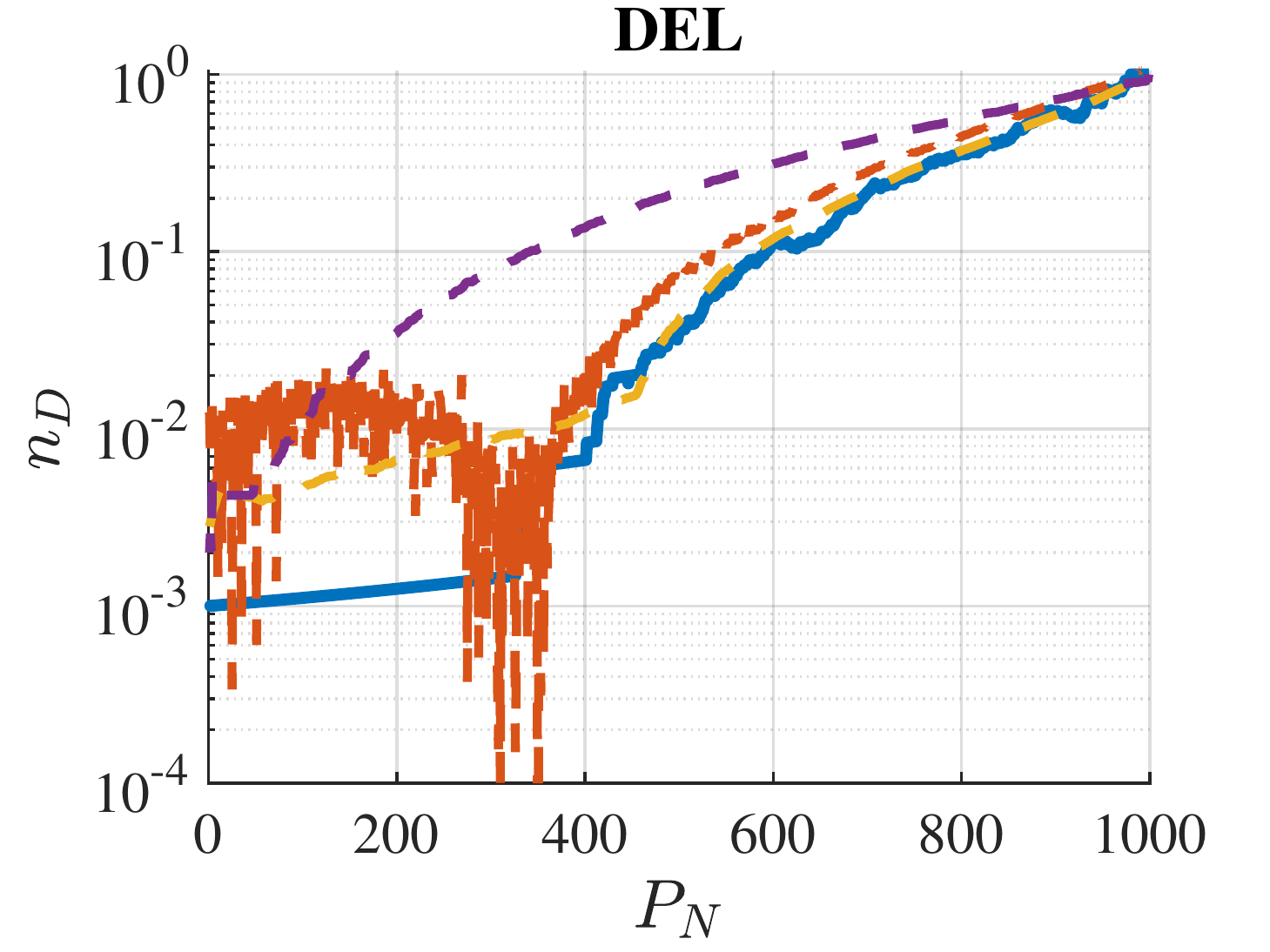}%
\label{b}}
\subfloat[]{\includegraphics[width=0.2\textwidth]{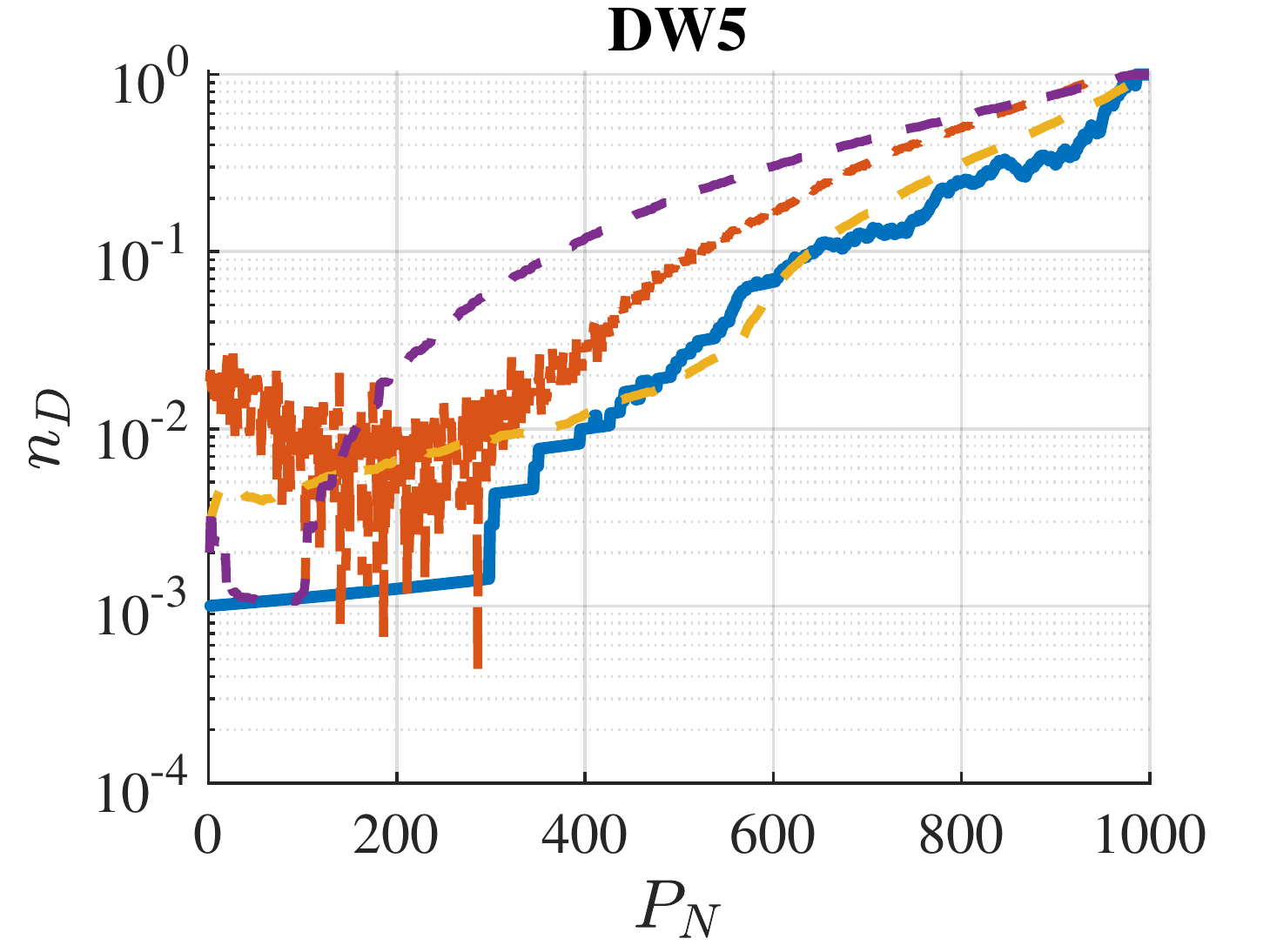}%
\label{c}}
\subfloat[]{\includegraphics[width=0.2\textwidth]{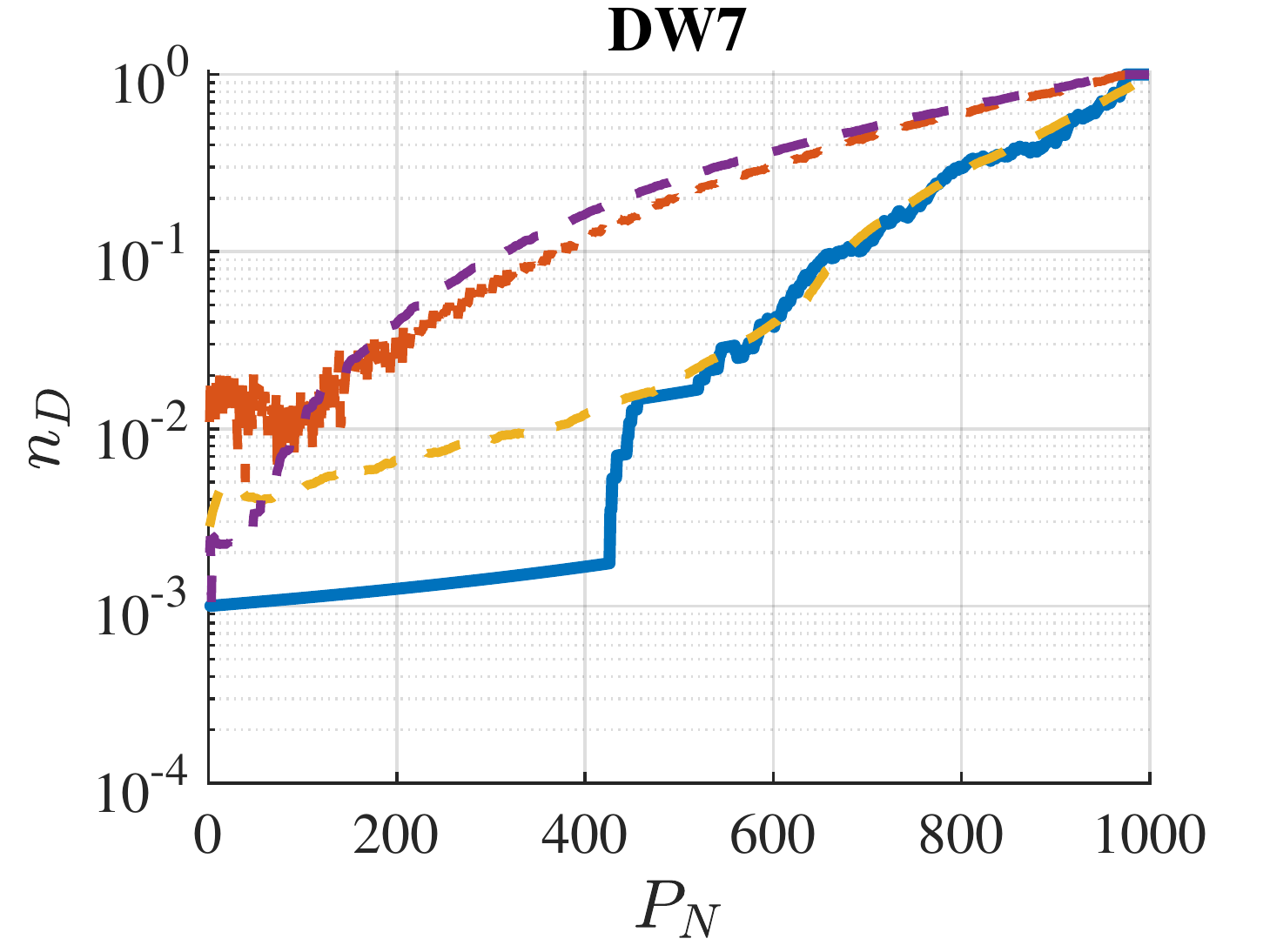}%
\label{d}}
\subfloat[]{\includegraphics[width=0.2\textwidth]{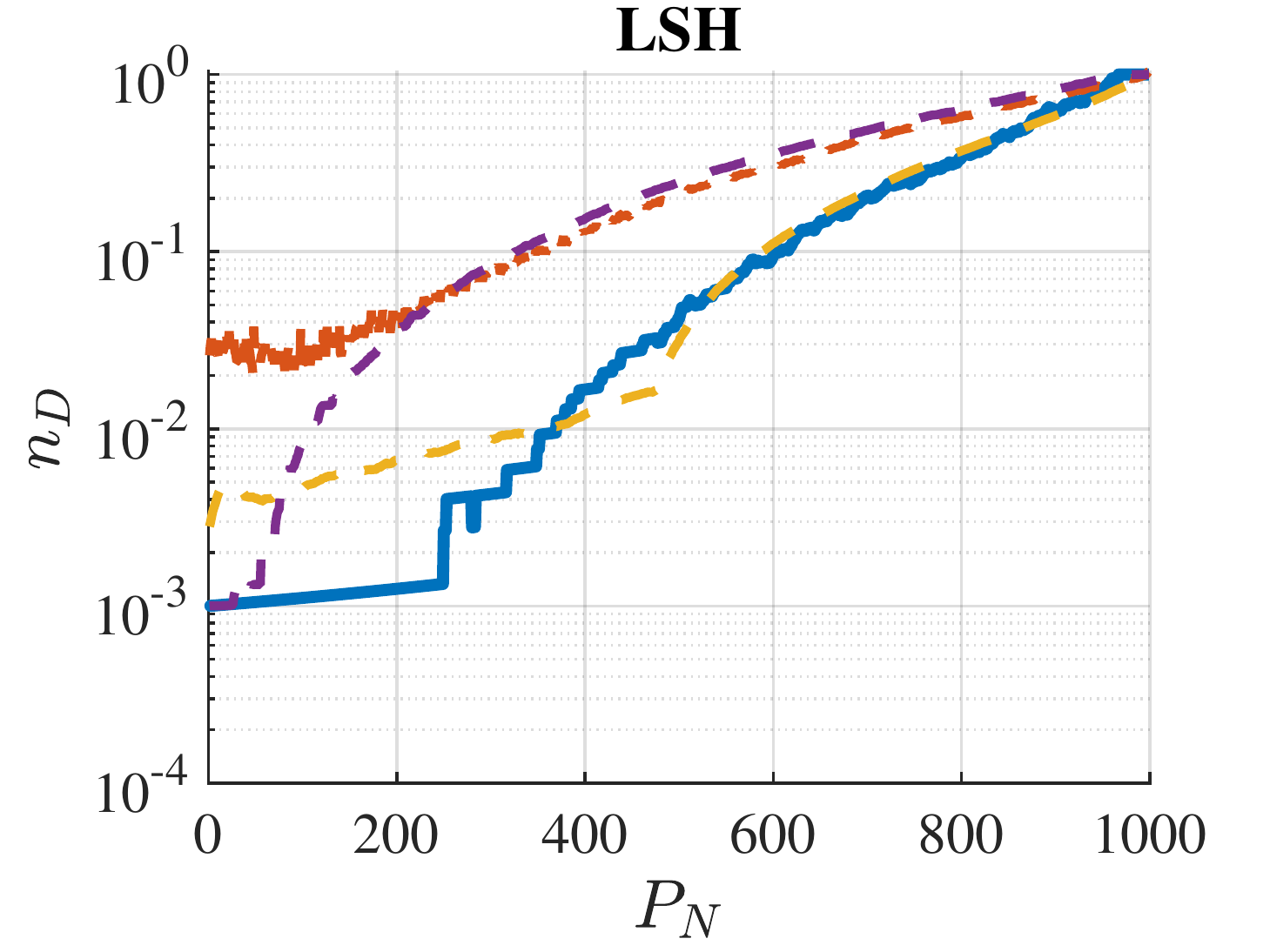}%
\label{d}}

\subfloat[]{\includegraphics[width=0.2\textwidth]{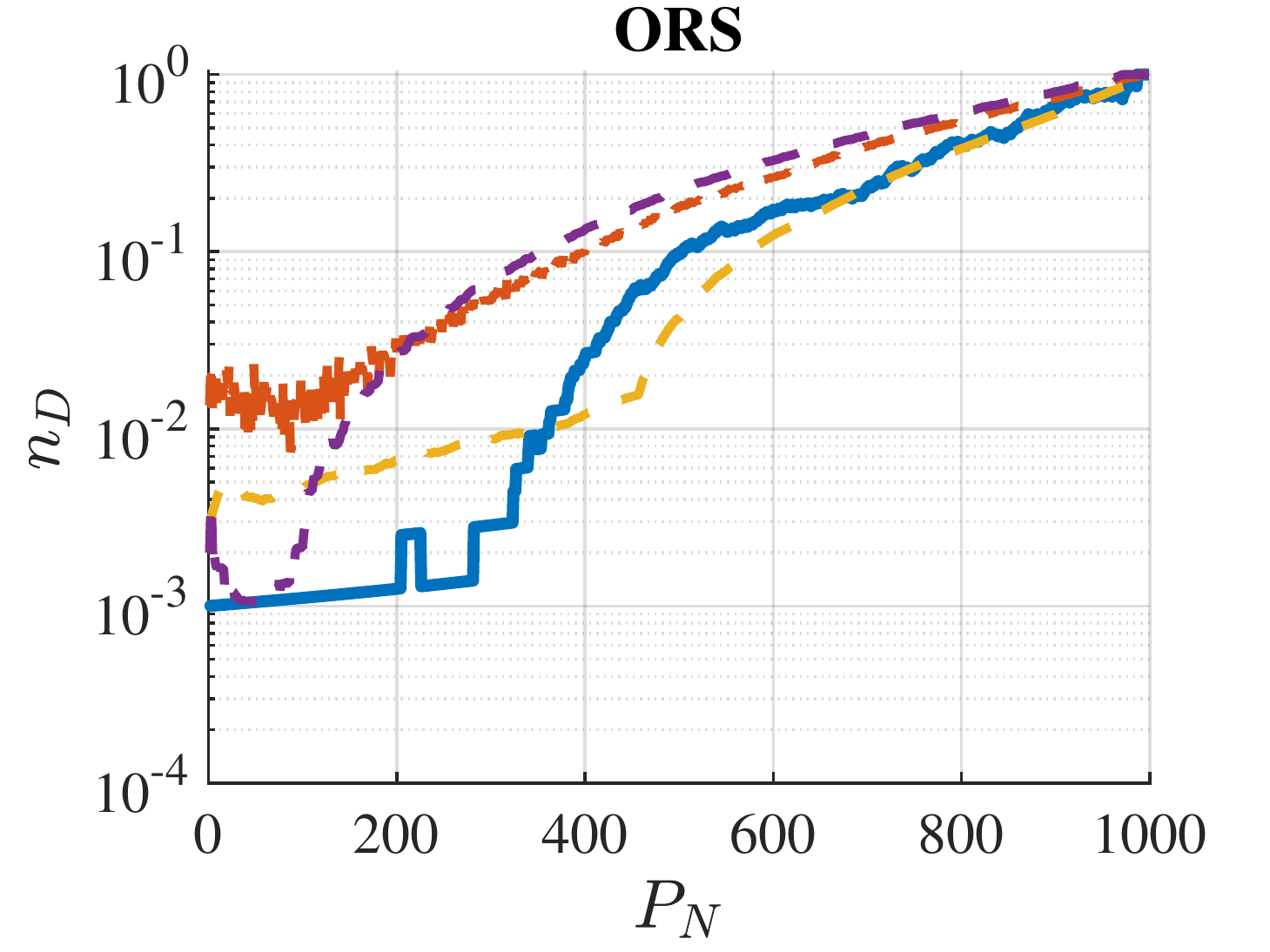}%
\label{a}}
\subfloat[]{\includegraphics[width=0.2\textwidth]{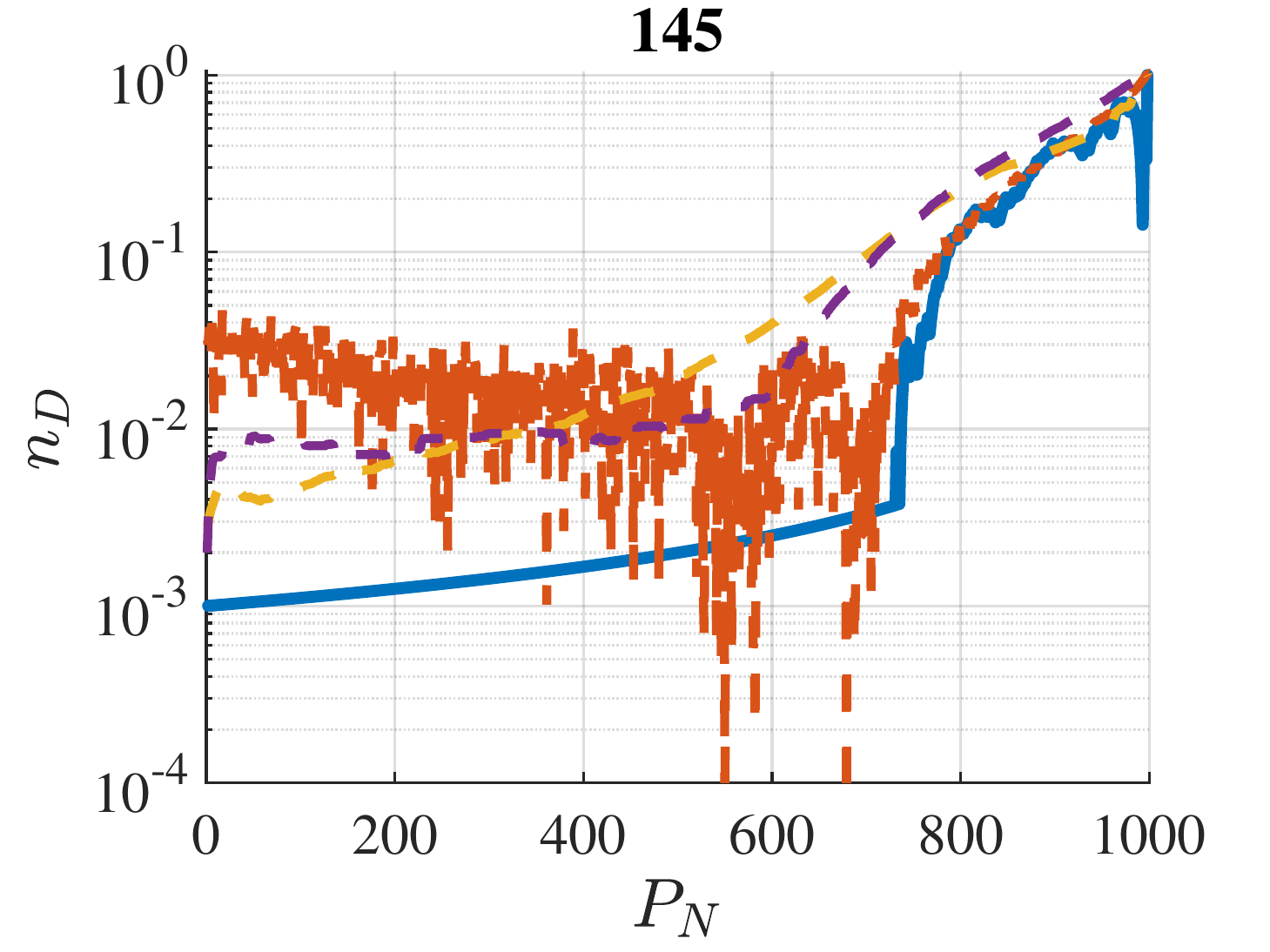}%
\label{b}}
\subfloat[]{\includegraphics[width=0.2\textwidth]{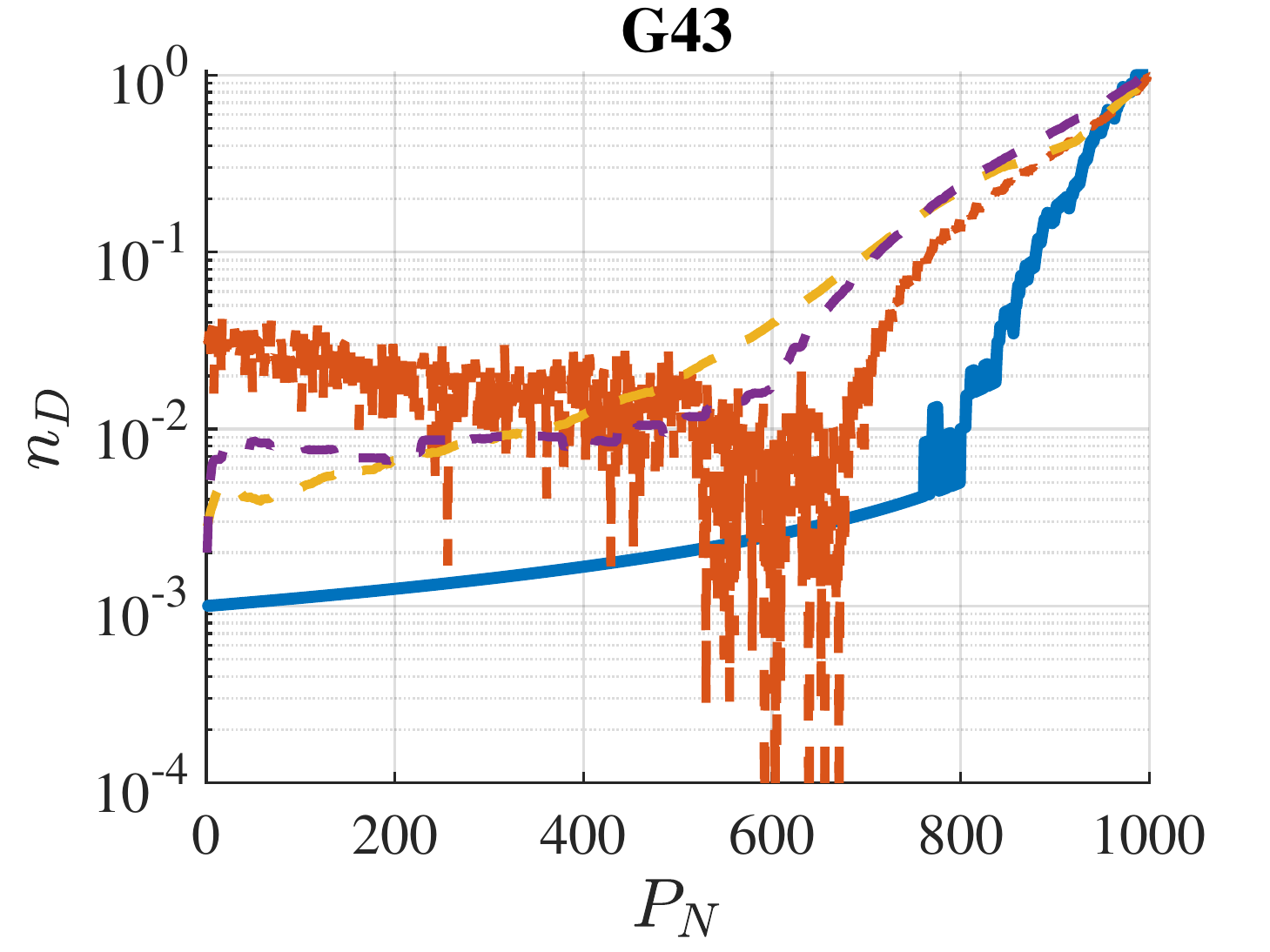}%
\label{c}}
\subfloat[]{\includegraphics[width=0.2\textwidth]{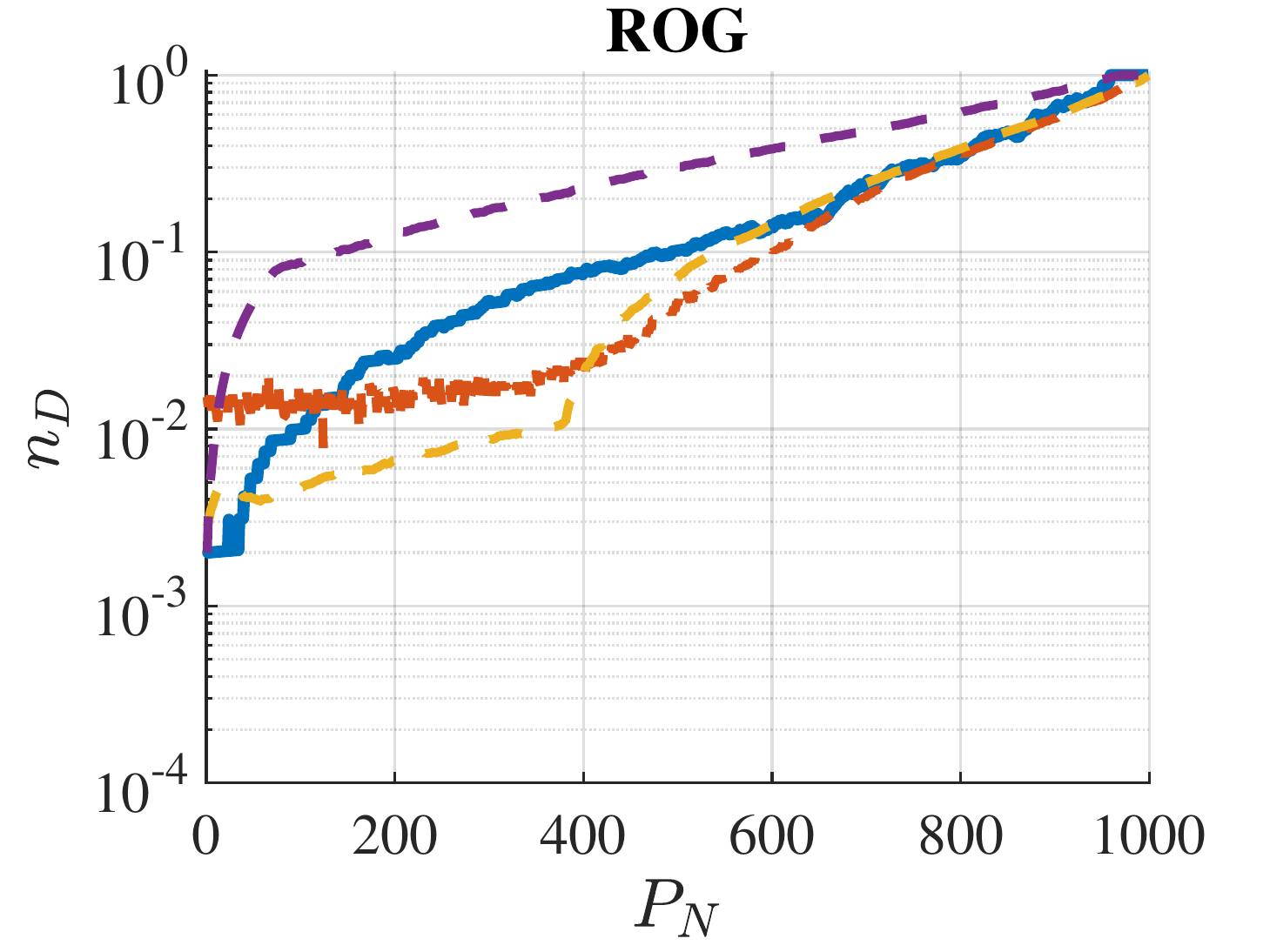}%
\label{d}}
\subfloat[]{\includegraphics[width=0.2\textwidth]{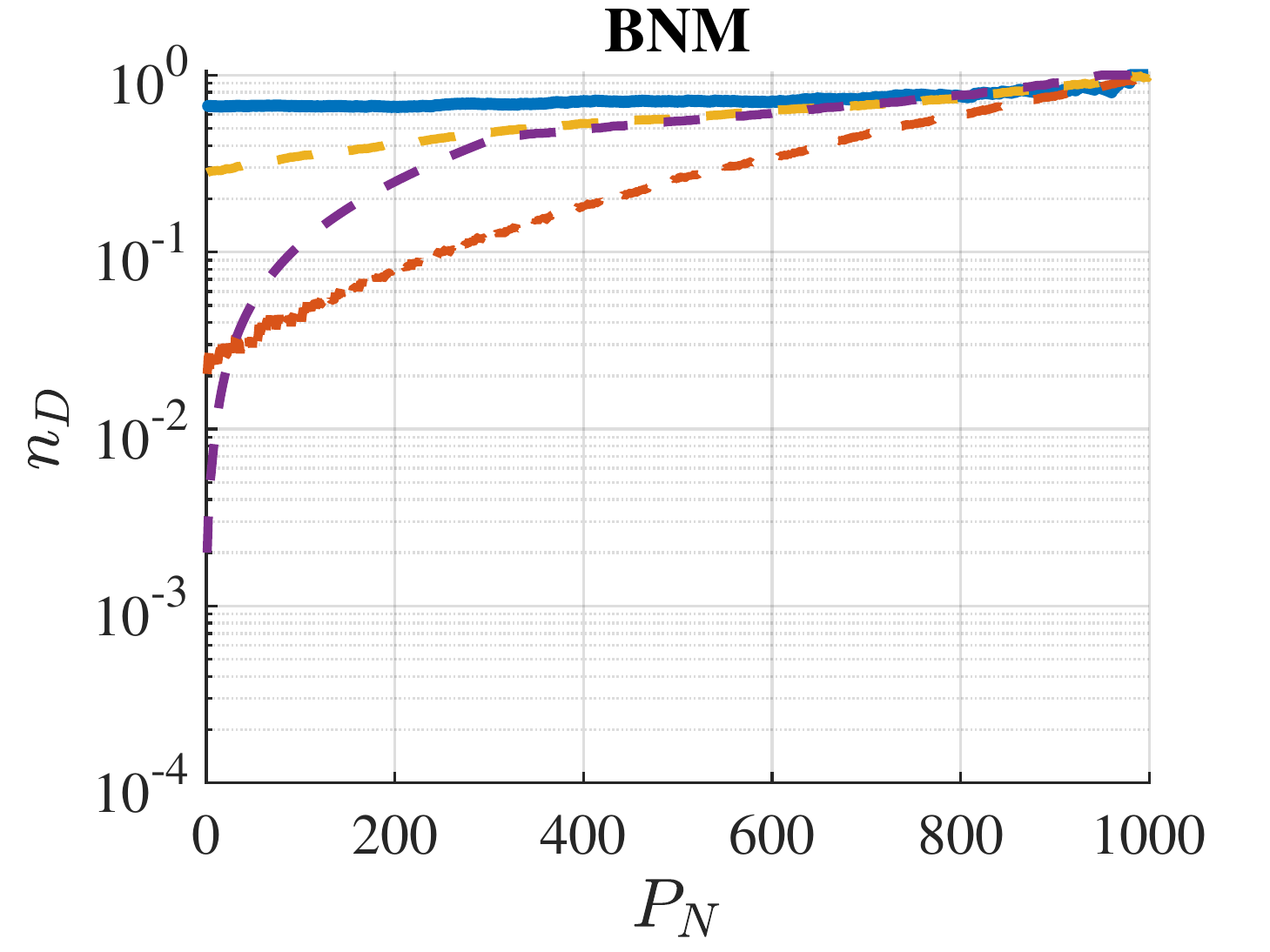}%
\label{d}}

\caption{Comparison of controllability robustness curve learning results of NRL-GT, PCR~\cite{31}, and iPCR~\cite{32} on real-world networks, including circuit networks, power networks, brain networks, and so on. The details of real-world networks are displayed in Table S2.}
\label{fig_sim}
\end{figure*}
\vspace{-0.4cm}
\subsection{Transfer Learning Applications Enabled by NRL-GT}
The backbone of NRL-GT has powerful transferability to enable different application scenarios such as transfer learning for complex networks of different sizes and transfer learning between different modules in NRL-GT. 

\subsubsection{Transfer learning for complex networks with different sizes}
\

Firstly, the backbone of NRL-GT pre-trained on fixed-size networks can be transferred to networks of different sizes for feature extraction, including large-scale networks that CNNs are difficult to deal with. It is worth noting that for networks with more than 3000 nodes, the generation of training and test sets through attack simulations will be very time-consuming. Therefore, we extract features for  3000-node and 600-node networks directly using the backbone trained on the 1000-node networks to
demonstrate the performance of NRL-GT. Given training and test data for larger networks, NRL-GT still can accomplish different tasks at high speed.

After adjusting the output dimension of the robustness curve learning module for networks of different sizes, we directly transfer the backbone trained on 1000-node networks for feature extraction and fine-tune the module with a small amount of data, which is only half of that in Section IV-B.
Fig. S3 and Fig. S4 of SI show the learning results of NRL-GT for network sizes $N=600$ and $N=3000$. The average test errors are summarized in Table V. In Fig. S3 and Fig. S4, the top row indicates the learning results of controllability robustness while the bottom row indicates the results of connectivity robustness. The learning results of NRL-GT fit well with the $Tvals$ for every configuration. Based on the experimental results in Table II and Table III and in ~\cite{31} and ~\cite{68}, we set 0.03 and 0.06 as the error thresholds for network controllability robustness learning and connectivity robustness learning, respectively, and any error above the threshold cannot be considered a low error. Combining Fig. S3 and Fig. S4 with Table V, it can be found that both mean test errors and the error curves are basically below the threshold value, demonstrating that NRL-GT provides a high-precision transferable feature extraction backbone, which can extract the robustness-related feature of complex networks of any size in a large range and perform high-precision learning for networks of different sizes.
\begin{table*}[h]
\caption{robustness learning comparison of NRL-GT, PCR~\cite{31}, iPCR~\cite{32}, CNN-RP~\cite{68}, and mCNN-RP~\cite{77} on real-world networks. The details of real-world networks are displayed in Table S2.\label{tab:table2}}
\resizebox{\linewidth}{!}{
\begin{tabular}{|cc|c|c|c|c|c|c|c|c|c|c|c|c|c|c|c|}
\hline
\multicolumn{2}{|c|}{Real-World Networks}                                                                              & Circuits(A)    & Circuits(B)    & Circuits(C)    & Power(A)       & Power(B)       & DDG            & DEL            & DW5            & DW7            & LSH            & ORS            & 145            & G43            & ROG            & BNM            \\ \hline
\multicolumn{1}{|c|}{\multirow{3}{*}{\begin{tabular}[c]{@{}c@{}}Controllability \\ Robustness\end{tabular}}} & PCR     & 0.444          & 0.351          & 0.476          & 0.054          & 0.064          & 0.037          & 0.046          & 0.122          & 0.171          & 0.126          & 0.081          & \textbf{0.028} & \textbf{0.048} & 0.036          & 0.398          \\ \cline{2-17} 
\multicolumn{1}{|c|}{}                                                                                       & iPCR    & 0.149          & 0.139          & 0.161          & 0.047          & 0.077          & 0.256          & 0.124          & 0.173             & 0.203          & 0.151          & 0.114          & 0.056          & 0.070          & 0.159          & 0.224          \\ \cline{2-17} 
\multicolumn{1}{|c|}{}                                                                                       & NRL-GT  & \textbf{0.030} & \textbf{0.046} & \textbf{0.038} & \textbf{0.041} & \textbf{0.045} & \textbf{0.020} & \textbf{0.013} & \textbf{0.037} & \textbf{0.014} & \textbf{0.019} & \textbf{0.022} & 0.041          & 0.063          & \textbf{0.030} & \textbf{0.153} \\ \hline
\multicolumn{1}{|c|}{\multirow{3}{*}{\begin{tabular}[c]{@{}c@{}}Connectivity \\ Robustness\end{tabular}}}    & CNN-RP  & 0.225          & 0.095          & 0.086          & 0.069          & 0.072          & 0.197          & 0.103          & \textbf{0.026} & \textbf{0.155} & 0.501          & 0.260          & 0.236          & 0.029          & 0.035          & 0.123          \\ \cline{2-17} 
\multicolumn{1}{|c|}{}                                                                                       & mCNN-RP & \textbf{0.175}          & 0.072          & 0.081          & 0.108          &0.123        & \textbf{0.114} & 0.298          & 0.040          & 0.200          & 0.600          & 0.358          & 0.286          &0.037          & 0.090          & 0.111          \\ \cline{2-17} 
\multicolumn{1}{|c|}{}                                                                                       & NRL-GT  & 0.190 & \textbf{0.046} & \textbf{0.071} & \textbf{0.033} & \textbf{0.028} & 0.198          & \textbf{0.090} & 0.133          & 0.194          & \textbf{0.249} & \textbf{0.206} & \textbf{0.078} & \textbf{0.024} & \textbf{0.031} & \textbf{0.042} \\ \hline
\end{tabular}}
\end{table*}
\vspace{-0.2cm}
\begin{table}[!h]
\centering
\caption{Average errors of robustness learning values of NRL-GT of BA, ER, NW, QSN, and SF networks with$N=3000$ and $N=600$\label{tab:table2}}
\begin{tabular}{|cc|c|c|c|c|c|}
\hline
\multicolumn{2}{|c|}{Average Learning Error $\overline{\xi} $}                               & BA    & ER    & NW    & QSN   & SF    \\ \hline
\multicolumn{1}{|c|}{\multirow{2}{*}{\begin{tabular}[c]{@{}c@{}}Controllability\\ robustness\end{tabular}}} & N=3000 & 0.014 & 0.014 & 0.012 & 0.011 & 0.016 \\ \cline{2-7} 
\multicolumn{1}{|c|}{}                                            & N=600  & 0.023 & 0.019 & 0.018 & 0.018 & 0.028 \\ \hline
\multicolumn{1}{|c|}{\multirow{2}{*}{\begin{tabular}[c]{@{}c@{}}Connectivity \\ robustness\end{tabular}}}    & N=3000 & 0.020 & 0.021 & 0.017 & 0.020 & 0.050 \\ \cline{2-7} 
\multicolumn{1}{|c|}{}                                            & N=600  & 0.037 & 0.037 & 0.033 & 0.037 & 0.061 \\ \hline
\end{tabular}
\vspace{-0.4cm}
\end{table}
\begin{table}[!h]
\centering
\caption{Comparison of the Overall Robustness learning errors of NRL-GT, PCR, iPCR, CNN-RP, and mCNN-RP under RA\label{tab:table2}}
\begin{tabular}{|cc|c|c|c|c|c|}
\hline
\multicolumn{2}{|c|}{$R_c$ error}                        & BA             & ER             & NW             & QSN            & SF             \\ \hline
\multicolumn{1}{|c|}{\multirow{3}{*}{Controllability}} & PCR    & 0.011          & 0.009          & 0.016          & 0.023          & 0.013          \\ \cline{2-7} 
\multicolumn{1}{|c|}{}                                 & iPCR   & 0.013          & 0.011          & 0.011          & 0.011          & 0.013          \\ \cline{2-7} 
\multicolumn{1}{|c|}{}                                 & NRL-GT & \textbf{0.010}  & \textbf{0.007} & \textbf{0.007} & \textbf{0.007} & \textbf{0.011} \\ \hline
\multicolumn{1}{|c|}{\multirow{3}{*}{Connectivity}}    & CNN-RP & 0.019          & 0.021          & 0.020           & 0.020           & 0.039          \\ \cline{2-7} 
\multicolumn{1}{|c|}{}                                 & mCNN-RP & 0.017          & 0.019          & 0.016           & 0.021           & 0.040          \\ \cline{2-7} 
\multicolumn{1}{|c|}{}                                 & NRL-GT & \textbf{0.016} & \textbf{0.012} & \textbf{0.014} & \textbf{0.015} & \textbf{0.038} \\ \hline
\end{tabular}
\end{table}
\vspace{0.2cm}
\subsubsection{Transfer learning between different downstream tasks in NRL-GT}
\

Due to the training strategy described in Section III-D, the features extracted by the backbone of NRL-GT in predicting robustness curves can be used to predict overall robustness and to classify synthetic networks. In other words, the backbone trained on the task of robustness curve learning can be directly transferred to the task of overall robustness learning and synthetic network classification without retraining. 

{\bf{Overall robustness learning}} In NRL-GT, we design a learning module dedicated to overall robustness prediction of networks with different sizes, which receives the topology information extracted from the backbone transferred from the robustness curve learning task and directly outputs the predicted value of overall robustness. In addition, the overall robustness learning module also enables transfer learning of different sizes, that is, the module trained on 1000-node complex networks can be generalized to overall robustness learning of networks of any size in a large range. 

In Table VI, we compare the $R_c$ learning precision of different methods under fixed network size. Each value in the table represents the learning error of $R_c$, calculated by $error_{R_c} = \left| {{{\hat R}_c} - {R_c}} \right|$, where  ${{{\hat R}_c}}$ represents the predicted value and ${{R_c}}$ is the true value by simulation. The training and test data are from Section IV-B and Section IV-C. It can be seen that NRL-GT has an absolute advantage over other methods in terms of overall network controllability robustness and connectivity robustness.

To demonstrate the transferability of the overall robustness learning module for networks of different sizes, random sampling experiments are carried out. Note that the model used here is trained on 1000-node networks. We set the range of network size in the test data as [600, 3000]. And we randomly sample 200 instances for each topology.

For controllability robustness, the variation of $error_{R_c}$ with  ${\left\langle k \right\rangle }$  and network size $N$ is illustrated in Fig. 6. For SF, there are a few points above the error limit of 0.03, mainly in the interval $ N \in [2500,3000]$ and ${\left\langle k \right\rangle } \in [8,10]$, while for the other topologies, the $error_{R_c}$ is less than the error limit in the whole range.

\begin{figure*}[h]
\centering
\subfloat[]{\includegraphics[width=0.2\textwidth]{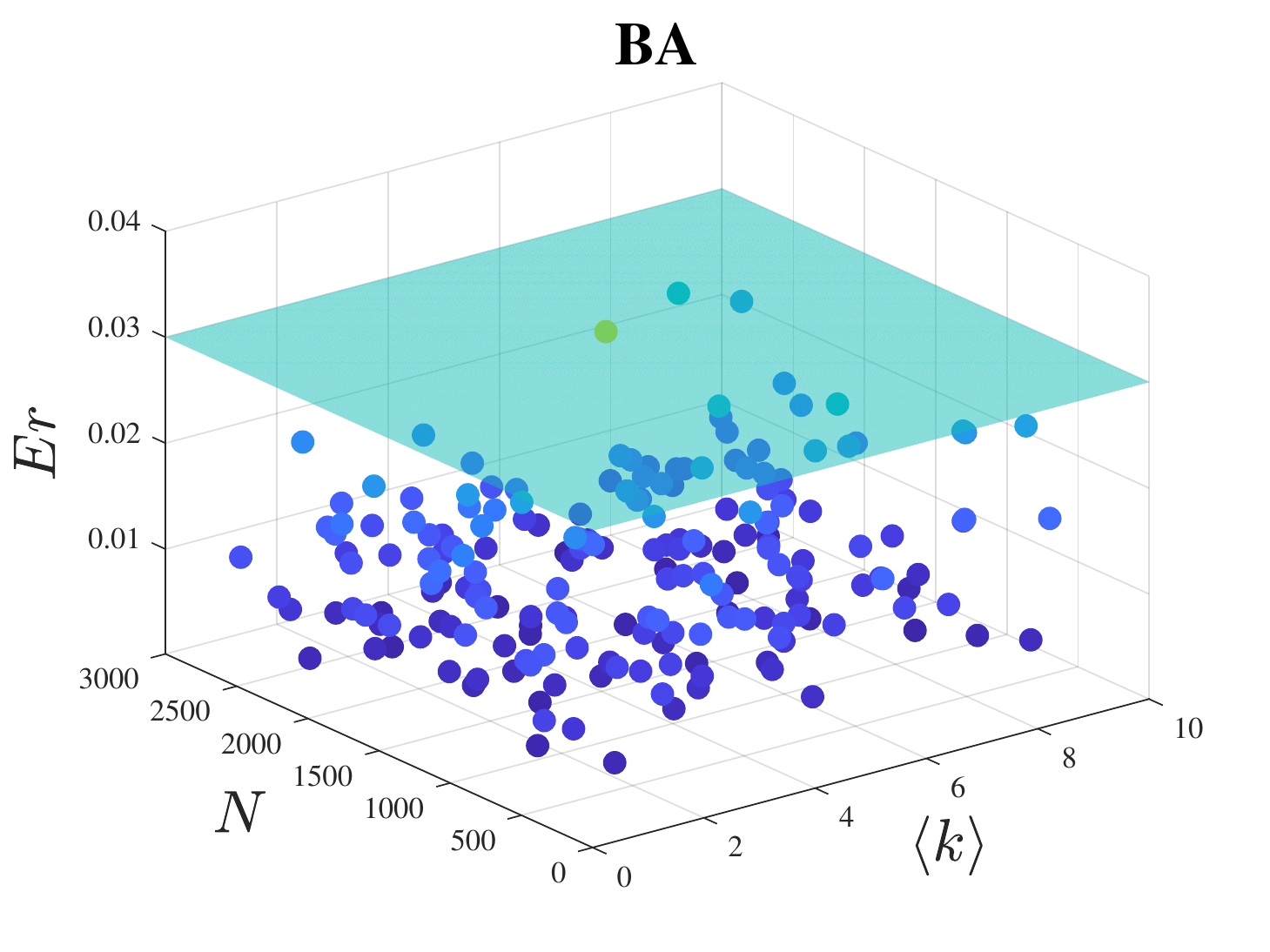}%
\label{a}}
\subfloat[]{\includegraphics[width=0.2\textwidth]{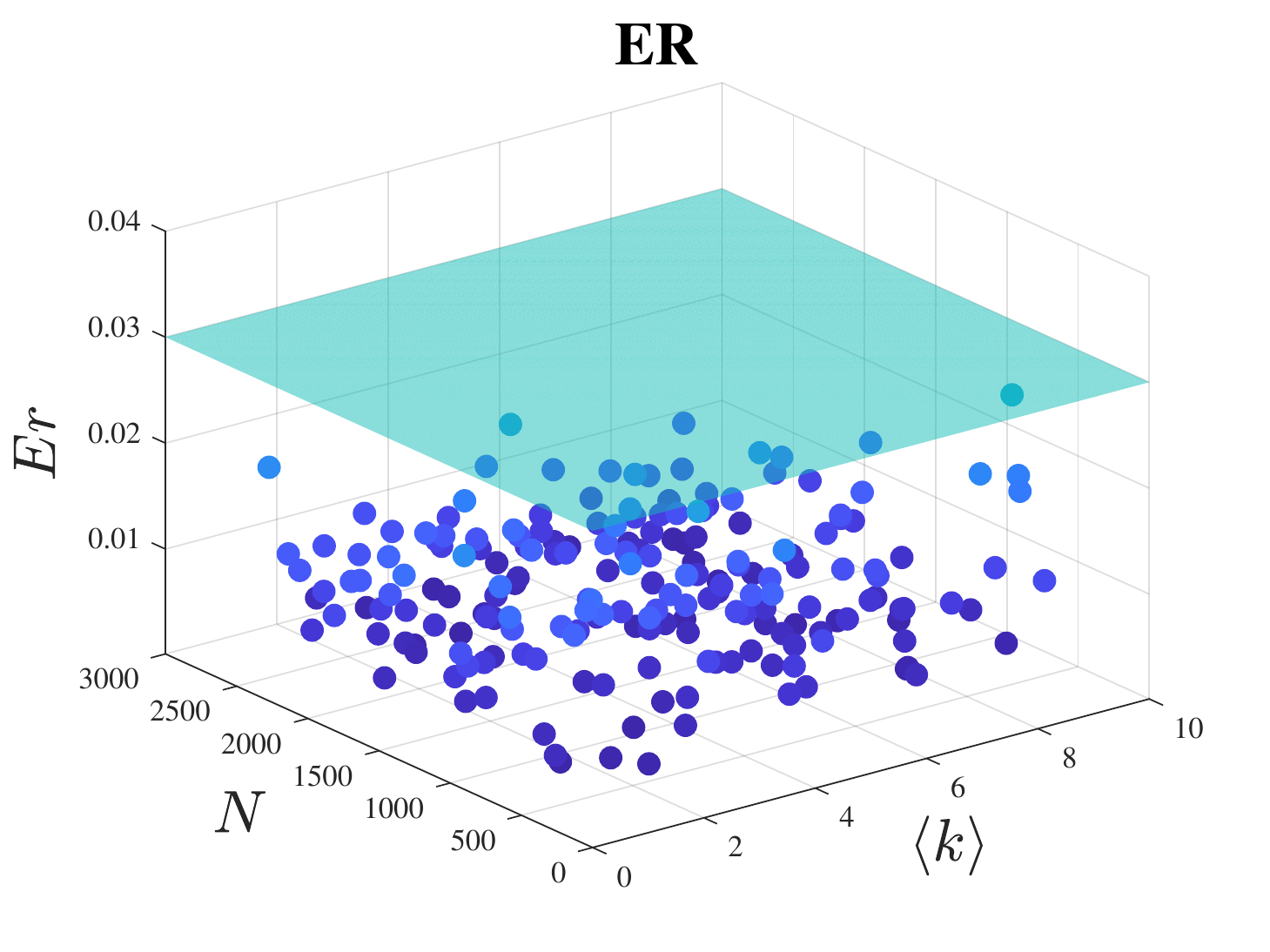}%
\label{b}}
\subfloat[]{\includegraphics[width=0.2\textwidth]{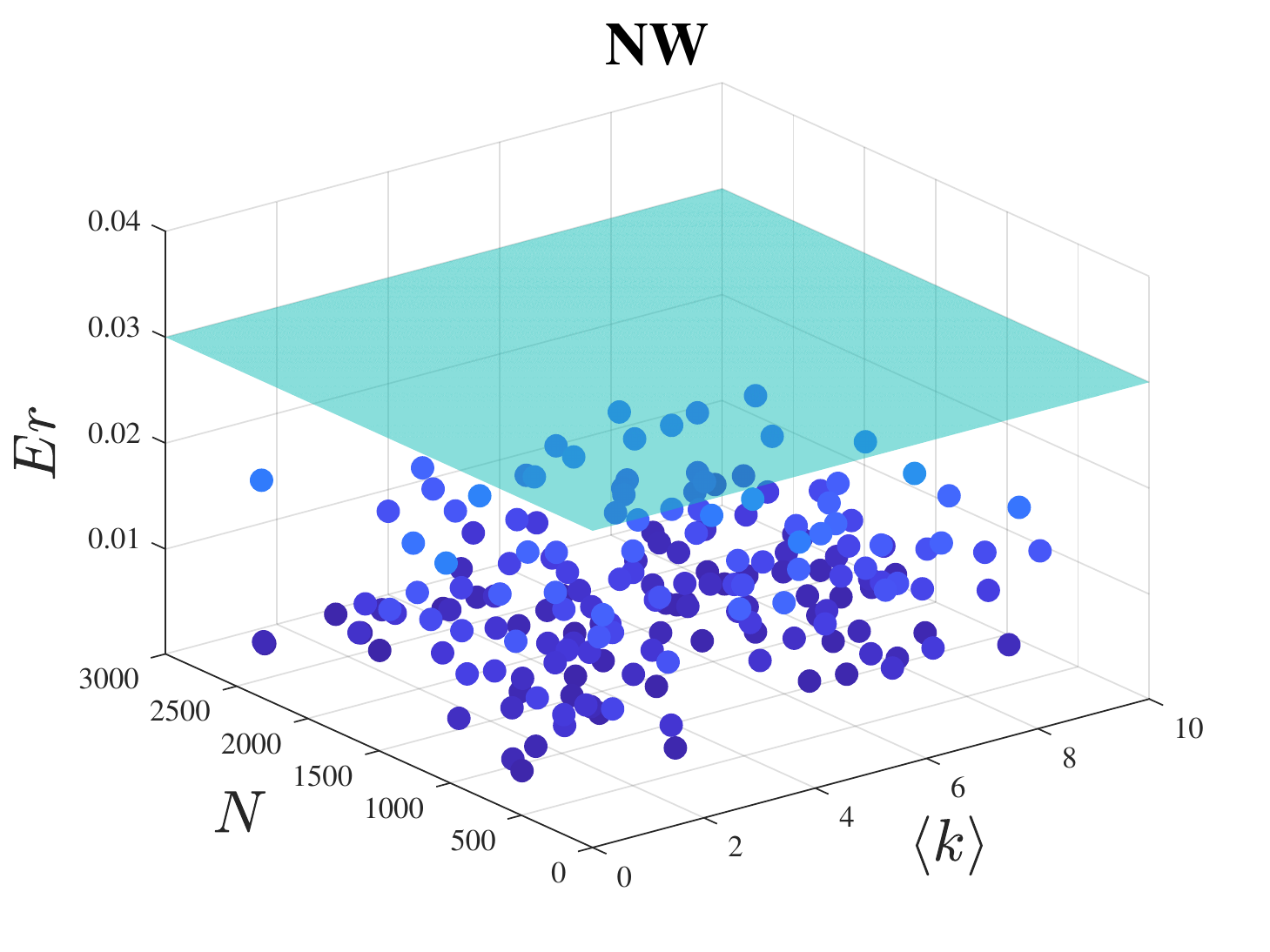}%
\label{c}}
\subfloat[]{\includegraphics[width=0.2\textwidth]{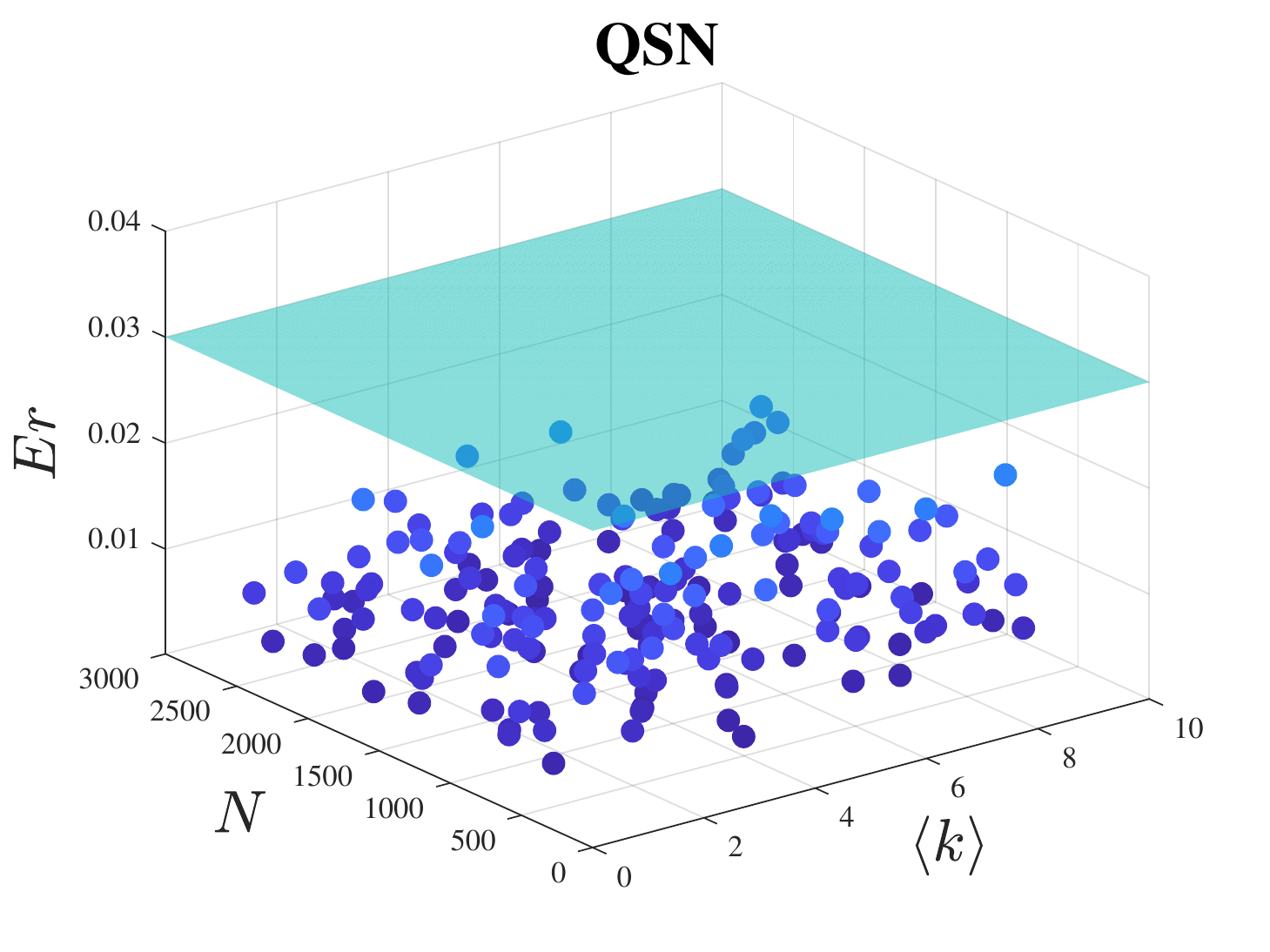}%
\label{d}}
\subfloat[]{\includegraphics[width=0.2\textwidth]{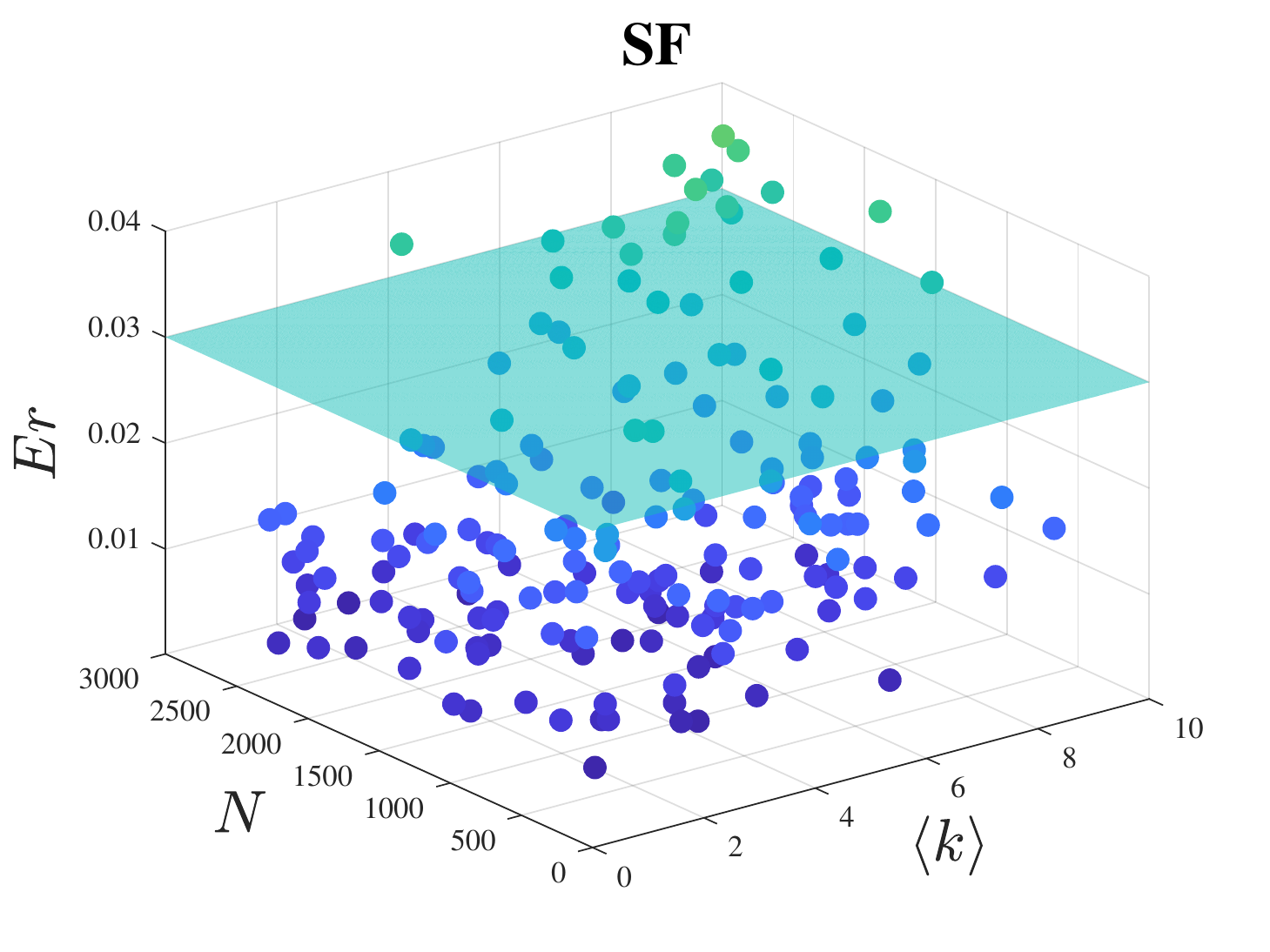}%
\label{d}}
\caption{Overall controllability robustness learning errors of NRL-GT on networks with average degree ${\left\langle k \right\rangle } \in [1,10]$ and network size $ N \in [600,3000]$. Cyan plane denotes the error threshold, and the blue dots denote the overall robustness learning errors of NRL-GT for different topologies.}
\label{fig_sim}
\end{figure*}

For connectivity robustness, we compare NRL-GT with spectral measures, which are general estimators of connectivity robustness for undirected networks~\cite{88}, including algebraic connectivity (AC), natural connectivity (NC), spectral gap (SG), and spectral radius (SR). Performance comparison with spectral measures is based on the predicted rank error of network robustness~\cite{68}. Specifically, we obtain five predicted rank lists by NRL-GT and four spectral measures and compare them with the real rank list generated by simulation. The rank errors can be obtained by averaging the absolute values of the differences between the predicted and true values for different methods. The rank errors of test data for the different methods are summarized in Table VII. Overall, NRL-GT can obtain the smallest average rank error, demonstrating that it provides the best evaluation for connectivity robustness compared to spectral measures.
\begin{table}[h]
\centering
\caption{Learning rank errors of the four spectral measures and NRL-GT\label{tab:table2}}
\begin{tabular}{|c|c|c|c|c|c|c|c|}
\hline
\begin{tabular}[c]{@{}c@{}}Average \\ Rank Error\end{tabular} & BA             & ER            & NW            & QSN          & SF             & overall        & rank \\ \hline
SR                 & 14.01          & 7.69          & 6.39          & 6.54         & 29.04          & 12.73         & 4    \\ \hline
SG                 & 11.21          & 7.65          & 7.41          & 6.65         & 24.68          & 11.52          & 2    \\ \hline
NC                 & 12.70           & \textbf{7.54} & \textbf{6.08} & \textbf{6.40} & 28.26          & 12.20         & 3    \\ \hline
AC                 & 30.52          & 17.19         & 7.15          & 9.63         & 56.75          & 24.25         & 5    \\ \hline
NRL-GT      & \textbf{11.12} & 7.64          & 7.39          & 6.81         & \textbf{16.87} & \textbf{9.97} & 1    \\ \hline
\end{tabular}
\vspace{-0.1cm}
\end{table}

{\bf{Synthetic network classification}} Similar to the module of overall robustness learning, the topological information captured by the backbone can also be used for the classification of synthetic networks. Here we directly transfer the backbone trained by the robustness curve learning task for initial feature extraction. We only need to train the classification module using the dataset with $N=1000$ from Section IV-B.  The classification test set contains both weighted and unweighted networks with network size $N \in [600,3000]$. The confusion matrix for the classification is shown in Table VIII. In Table VIII, (pre) and (act) represent the predicted and the actual type of network, respectively. It can be seen that 100\% classification accuracy is achieved for both unweighted and weighted synthetic networks of different sizes.
\begin{table}[h]
\centering
\caption{ Confusion matrix of NRL-GT for classifying unweighted and weighted networks with network size $N \in [600,3000]$. \label{tab:table2}}
\begin{tabular}{|c|c|c|c|c|c|}
\hline
\begin{tabular}[c]{@{}c@{}}Unweighted \\Networks\end{tabular} & BA(pre) & ER(pre) & NW(pre) & QSN(pre) & SF(pre) \\ \hline
BA(act)                                                                                               & 1       & 0       & 0       & 0        & 0       \\ \hline
ER(act)                                                                                               & 0       & 1       & 0       & 0        & 0       \\ \hline
NW(act)                                                                                               & 0       & 0       & 1       & 0        & 0       \\ \hline
QSN(act)                                                                                              & 0       & 0       & 0       & 1        & 0       \\ \hline
SF(act)                                                                                               & 0       & 0       & 0       & 0        & 1       \\ \hline
\begin{tabular}[c]{@{}c@{}}Weighted \\Networks\end{tabular}   & BA(pre) & ER(pre) & NW(pre) & QSN(pre) & SF(pre) \\ \hline
BA(act)                                                                                               & 1       & 0       & 0       & 0        & 0       \\ \hline
ER(act)                                                                                               & 0       & 1       & 0       & 0        & 0       \\ \hline
NW(act)                                                                                               & 0       & 0       & 1       & 0        & 0       \\ \hline
QSN(act)                                                                                              & 0       & 0       & 0       & 1        & 0       \\ \hline
SF(act)                                                                                               & 0       & 0       & 0       & 0        & 1       \\ \hline
\end{tabular}
\vspace{-0.2cm}
\end{table}
\vspace{-0.4cm}
\subsection{Running Time Comparison}
In addition to the precision, the speed improvement compared to attack simulations is another indicator for various methods. The average run time over 1000 independent tests on 1000-node networks of attack simulation (SIM), NRL-GT, PCR, iPCR, CNN-RP, and mCNN-RP are summarized in Table IX. For controllability robustness under RA, NRL-GT takes only 0.017 seconds to complete an inference, which is about 1/14 of PCR, 1/40 of iPCR, and 1/1000 of attack simulation. For connectivity robustness under TBA, one inference requires only 0.018 seconds for NRL-GT, which is about 1/13 of CNN-RP, 1/40 of mCNN-RP, and 1/4000 of attack simulation. Moreover, under RA, for a 3000-node network, NRL-GT accomplishes controllability robustness learning only in 0.04 seconds and connectivity robustness learning in 0.14 seconds, which are both more than 1000 times faster than attack simulation. Overall, NRL-GT has a definite speed advantage over leading-edge methods.
\begin{table}[!h]
\centering
\caption{Running time comparison of attack simulations, PCR, iPCR, CNN-RP, mCNN-RP, and NRL-GT\label{tab:table2}}
\begin{tabular}{|c|c|}
\hline
Controllability Robustness & Unit:Second    \\ \hline
SIM                        & 16.174         \\ \hline
PCR                        & 0.241          \\ \hline
iPCR                       & 0.692          \\ \hline
NRL-GT                     & \textbf{0.017} \\ \hline
Connectivity Robustness    & Unit:Second    \\ \hline
SIM                        & 75.543         \\ \hline
CNN-RP                     & 0.235
\\ \hline
mCNN-RP                    & 0.729
\\ \hline
NRL-GT                     & \textbf{0.018} \\ \hline
\end{tabular}
\vspace{-0.4cm}
\end{table}
\vspace{-0.4cm}
\subsection{Ablation Experiments Comparing to Classical GNN and Transformer-Based Models}
As stated in the Introduction and Section III-B, it theoretically shows that the proposed Graph transformer layer outperforms the classical GNN and transformer-based models in network robustness learning. In this subsection, we provide experimental validation for this statement. We compare the classical GNN methods: GCN~\cite{46}, GAT~\cite{47}, and GraphSAGE~\cite{45}, and the classical Transformer-based model: UGformer~\cite{102}. Specifically, we replace the core node embedding unit in NRL-GT with GCN, GAT, GraphSAGE, and UGformer for the ablation study. The training and testing datasets are from Section IV-B and Section IV-C. The average test errors are shown in Table X. For controllability robustness learning, the proposed GT algorithm has an absolute advantage over other models. For connectivity robustness learning, GCN on ER networks and GAT on the SF network can also get a comparable result with GT. In contrast, in the remaining cases, GT gets the best performance. It can be clearly seen that UGformer is disabled in our task due to the self-attention mechanism applied to all nodes of the graph ignoring the structure of the graph itself. The classical Transformer~\cite{80} also suffers from the same limitation.
\begin{table}[!h]
\centering
\caption{Results of ablation studies compared with classical GNN and Transformer-Based algorithms\label{tab:table2}}
\begin{tabular}{|c|c|c|c|c|c|}
\hline
\begin{tabular}[c]{@{}c@{}}Controllability \\ Robustness\end{tabular} & BA             & ER             & NW             & QSN            & SF             \\ \hline
GCN                                                                   & 0.029          & 0.022          & 0.024          & 0.020           & 0.034          \\ \hline
GAT                                                                   & 0.030           & 0.024          & 0.024          & 0.027          & 0.035          \\ \hline
GraphSAGE                                                             & 0.028          & 0.022          & 0.023          & 0.024          & 0.029          \\ \hline
UGformer                                                             & 0.109          & 0.131          & 0.135          & 0.097          & 0.332          \\ \hline
GT                                                                    & \textbf{0.019} & \textbf{0.015} & \textbf{0.014} & \textbf{0.014} & \textbf{0.020} \\ \hline
\begin{tabular}[c]{@{}c@{}}Connectivity \\ Robustness\end{tabular}    & BA             & ER             & NW             & QSN            & SF             \\ \hline
GCN                                                                   & 0.031          & \textbf{0.021} & 0.024          & 0.030          & 0.062          \\ \hline
GAT                                                                   & 0.031          & 0.024          & 0.029          & 0.029          & \textbf{0.056} \\ \hline
GraphSAGE                                                             & 0.033          & 0.031          & 0.035          & 0.029          & 0.062          \\ \hline
UGformer                                                             & 0.257          & 0.311          & 0.391          & 0.324          & 0.317          \\ \hline
GT                                                                    & \textbf{0.028} & \textbf{0.021} & \textbf{0.021} & \textbf{0.023} & \textbf{0.056} \\ \hline
\end{tabular}
\vspace{-0.4cm}
\end{table}
\vspace{-0.4cm}
\subsection{Learning Connectivity Robustness of Networks Against Batch Attacks}
As discussed in the introduction, LFR-CNN is a generalized network robustness learning framework, with which we did not compare in our previous experiments. The main reason is that LFR-CNN is dedicated to learning the robustness of complex networks against batch attacks. Specifically, for networks of different sizes, batch attack refers to attacking the same fixed proportion of nodes each time and eventually generating the same length of robustness sequences, which is different from the nature of our learning. In previous experiments, we mainly 
consider attacking one node at a time, where the length of the sequence is 
matched to the size of the network. However, this does not demonstrate that 
NRL-GT has limited ability to lean network robustness against batch attacks.

In this subsection, we specialize in studying the precision of NRL-GT in learning robustness of networks against batch attacks. We compare LFR-CNN by using the publicly available dataset and results$\footnote{https://fylou.github.io/sourcecode.html}$ from the paper for LFR-CNN. The overall average degree and range of network size in the publicly available dataset are set as 4.33 and [350,650]. Given that the publicly available well-trained model of LFR-CNN is only for network connectivity robustness, we experimentally focus on network connectivity robustness learning in this subsection. The experimental results are displayed in Table XI. It is clear to show the superiority of NRL-GT. When performing feature extraction for networks of different sizes, LFR-CNN loses information because of a node selection process, making it inferior to the NRL-GT with no information loss.
\begin{table}[!h]
\centering
\caption{Results of learning connectivity robustness of networks against batch attacks\label{tab:table2}}
\resizebox{\linewidth}{!}{\begin{tabular}{|cc|c|c|c|c|c|}
\hline
\multicolumn{2}{|c|}{Average Learning Error $\overline{\xi} $}                                                                                                                   & BA     &ER     & NW     & QSN    & SF     \\ \hline
\multicolumn{1}{|c|}{\multirow{2}{*}{\begin{tabular}[c]{@{}c@{}} Connectivity Roubustness \\ under batch RAs \end{tabular}}} & LFR-CNN & 0.0665 & 0.0362 & 0.0338 & 0.0350  & 0.0868 \\ \cline{2-7} 
\multicolumn{1}{|c|}{}                                                                                                                               & NRL-G  & \textbf{0.0339} & \textbf{0.0240}  & \textbf{0.0256} & \textbf{0.0241} & \textbf{0.0581} \\ \hline
\end{tabular}}
\end{table}
\vspace{-0.4cm}
\section{Conclusion}
In this paper, a novel versatile robustness and unified learning approach based on the customized graph transformer (NRL-GT) has been proposed. It accomplishes the task of network controllability robustness learning and connectivity robustness learning from three aspects, including robustness curve learning, overall robustness learning, and synthetic network classification. A large number of experiments prove that, compared with cutting-edge methods, NRL-GT has not only the superiority of speed and precision but also powerful generalization ability, which can guarantee high precision performance when the distribution of the training and test sets are inconsistent, as well as enabling the superior performance of real-world networks, especially circuit networks and power networks. Meanwhile, the NRL-GT provides a better indicator of connectivity robustness than spectral measures. In addition, it is verified from both experimental and theoretical aspects that the proposed GT is superior to the classical GNN and transformer-based algorithms. The backbone of NRL-GT can serve as a transferable feature learning module that can be directly applied to different downstream tasks and complex networks with different sizes without retraining. 

NRL-GT provides a relatively complete robustness learning framework for complex 
networks of pairwise interactions. However, it may not perform well on higher-order 
networks, such as hypergraphs, since the high-order structural information 
are not fully explored. Extending NRL-GT to a hypergraph neural 
networks-based~\cite{104} model to learn robustness of hypergraphs is 
a  promising direction for future work. Besides, it should be noted that degree-related features, which have been shown effective, are set as the initial input to NRL-GT. Exploring other useful features for network robustness learning is also a  promising but challenging topic for future research.

\bibliographystyle{IEEEtran}

\bibliography{main.bib}
%
%
%
%
%
%
%
%


\end{document}